\crefname{equation}{}{}
\renewcommand{\paragraph}[1]{\textbf{#1}\hspace{1em}}
\newcommand{\ourmethod}{\textsc{ISE-BO}\xspace}
\newcommand{\ourmethodlong}{Information-Theoretic Safe Exploration and Optimization\xspace}
\newcommand{\ourmethodexp}{\textsc{ISE}\xspace}
\newcommand{\ourmethodlongexp}{Information-Theoretic Safe Exploration\xspace}
\newcommand{\mes}{\textsc{MES}\xspace}
\newcommand{\safeopt}{\textsc{SafeOpt}\xspace}
\newcommand{\stageopt}{\textsc{StageOpt}\xspace}
\newcommand{\linebo}{\textsc{LineBO}\xspace}
\DeclareMathOperator*{\argmax}{arg\,max}
\newcommand{\cpsi}[0]{\mathbb{I}_{s(\cdot) \geq 0}}
\DeclareMathOperator\erf{erf}
\let\originalleft\left
\let\originalright\right
\renewcommand{\left}{\mathopen{}\mathclose\bgroup\originalleft}
\renewcommand{\right}{\aftergroup\egroup\originalright}
\begin{document}

\title{Information-Theoretic Safe Bayesian Optimization}

\author{\name Alessandro G. Bottero \email alessandrogiacomo.bottero@bosch.com \\
    \addr Bosch Corporate Research, TU Darmstadt\\
    Robert-Bosch-Campus 1, 71272 Renningen (Germany)
    \AND
    \name Carlos E. Luis \email carlosenrique.luisgoncalves@bosch.com \\
    \addr Bosch Corporate Research, TU Darmstadt
    \AND
    \name Julia Vinogradska \email julia.vinogradska@bosch.com \\
    \addr Bosch Corporate Research
    \AND
    \name Felix Berkenkamp \email felix.berkenkamp@bosch.com \\
    \addr Bosch Corporate Research
    \AND
    \name Jan Peters \email jan.peters@tu-darmstadt.de \\
    \addr TU Darmstadt, German Research Center for AI (DFKI), Hessian.AI
}

\maketitle

\begin{abstract}
We consider a sequential decision making task, where the goal is to optimize an unknown function without evaluating parameters that violate an \textit{a~priori} unknown (safety) constraint. A common approach is to place a Gaussian process prior on the unknown functions and allow evaluations only in regions that are safe with high probability. Most current methods rely on a discretization of the domain and cannot be directly extended to the continuous case. Moreover, the way in which they exploit regularity assumptions about the constraint introduces an additional critical hyperparameter. In this paper, we propose an information-theoretic safe exploration criterion that directly exploits the GP posterior to identify the most informative safe parameters to evaluate. The combination of this exploration criterion with a well known Bayesian optimization acquisition function yields a novel safe Bayesian optimization selection criterion. Our approach is naturally applicable to continuous domains and does not require additional explicit hyperparameters. We theoretically analyze the method and show that we do not violate the safety constraint with high probability and that we learn about the value of the safe optimum up to arbitrary precision. Empirical evaluations demonstrate improved data-efficiency and scalability.
\end{abstract}

\begin{keywords}
  Bayesian Optimization, Safe Exploration, Safe Bayesian Optimization, Information-Theoretic Acquisition, Gaussian Processes
\end{keywords}

\section{Introduction}\label{sec:ise_introduction}

In sequential decision making problems, we iteratively select parameters in order to optimize a given performance criterion. However, real-world applications such as robotics \citep{berkenkamp_bayesian_2020}, mechanical systems \citep{combustion_engine_bo} or medicine \citep{sui_safe_2015} are often subject to additional safety constraints that we cannot violate during the exploration process \citep{dulac-arnold_challenges_nodate}. Since it is \textit{a priori} unknown which parameters lead to constraint violations, we need to actively and carefully learn about the constraints without violating them. That is, in order to find the safe parameters that maximize the given performance criterion, we also need to learn about the safety of parameters by only evaluating parameters that are known to be safe. In this context, the well known exploration-exploitation dilemma becomes even more challenging, since the exploration component of any algorithm must also promote the exploration of those areas that provide information about the safety of currently uncertain parameters, even though they might be unlikely to contain the optimum. 

Existing methods by \citet{Schreiter_safe_exp_2015,sui_safe_2015,berkenkamp_bayesian_2020} tackle this safe exploration problem by placing a Gaussian process (GP) prior over both the constraint and only evaluate parameters that do not violate the constraint with high probability. To learn about the safety of parameters, they evaluate the parameter with the largest posterior standard deviation. This process is made more efficient by \safeopt, which restricts its safe set expansion exploration component to parameters that are close to the boundary of the current set of safe parameters \citep{sui_safe_2015} at the cost of an additional tuning hyperparameter (Lipschitz constant). However, uncertainty about the constraint is only a proxy objective that only indirectly learns about the safety of parameters. Consequently, data-efficiency could be improved with an exploration criterion that directly maximizes the information gained about the safety of parameters.

\paragraph{Our contribution}
In this paper, we propose \ourmethodlongexp (\ourmethodexp), a safe exploration criterion that \emph{directly} exploits the information gain about the safety of parameters in order to expand the region of the parameter space that we can classify as safe with high confidence. We pair this safe exploration criterion with the well known Max-Value Entropy Search (\mes) acquisition function that aims to find the optimum by reducing the entropy of the distribution of its value. The resulting algorithm, which we call \ourmethodlong (\ourmethod), directly optimizes for information gain, both about the safety of parameters and about the value of the optimum. That way, it is more data-efficient than existing approaches without manually restricting evaluated parameters to be in specifically designed areas (like the boundary of the safe set). This property is particularly evident in scenarios where the posterior variance alone is not enough to identify good evaluation candidates, as in the case of heteroskedastic observation noise. The proposed selection criterion also means that we do not require additional modeling assumptions beyond the GP posterior and that \ourmethod is directly applicable to continuous domains. 
\citet{bottero_info-theoretic_2022} present a partial theoretical analysis of the safe exploration \ourmethodexp algorithm, proving that it possesses some desired exploration properties. In this paper, we extend their theoretical analysis and show that it also satisfies some natural notion of convergence, in the sense that it leads to classifying as safe the largest region of the domain that we can hope to learn about in a safe manner. Subsequently, we use this result to show that \ourmethod learns about the safe optimum to arbitrary precision.

\subsection{Related work} 

Information-based selection criteria with Gaussian processes models are successfully used in the context of unconstrained Bayesian optimization (BO, \citet{shahriari_taking_2016,DBLP:journals/corr/abs-1204-5721}), where the goal is to find the parameters that maximize an \textit{a priori} unknown function.  \citet{HennigS2012,hernandez-lobato_predictive_2014,wang_max-value_2017} select parameters that provide the most information about the optimal parameters, while \citet{frohlich_noisy-input_2020} consider the information under noisy parameters. \citet{hvarfner_joint_entropy_2022}, on the other hand, consider the entropy over the joint optimal probability density over both input and output space. We draw inspiration from these methods and define an information-based criterion w.r.t.\@ the safety of parameters to guide safe exploration.

In the presence of constraints that the final solution needs to satisfy, but which we can violate during exploration, \citet{constrained_bo} propose to combine typical BO acquisition functions with the probability of satisfying the constraint. Instead, \citet{lse_krause} propose an uncertainty-based criterion that learns about the feasible region of parameters, while \citet{constrained_mes_perrone_2019} modify the Max-value Entropy Search selection criterion \citep{wang_max-value_2017} to include information about the constraint as well in the acquisition function.
When we are not allowed to ever evaluate unsafe parameters, safe exploration is a necessary sub-routine of BO algorithms to learn about the safety of parameters. To safely explore, \citet{Schreiter_safe_exp_2015} globally learn about the constraint by evaluating the most uncertain parameters. \safeopt by \citet{sui_safe_2015} extends this to joint exploration and optimization and makes it more efficient by explicitly restricting safe exploration to the boundary of the safe set. \citet{sui2018stagewise} proposes \stageopt, which additionally separates the exploration and optimization phases. Both of these algorithms assume access to a Lipschitz constant to define parameters close to the boundary of the safe set, which is a difficult tuning parameter in practice. These methods have been extended to multiple constraints by \citet{berkenkamp_bayesian_2020}, while \citet{kirschner_adaptive_2019} scale them to higher dimensions with \linebo, which explores in low-dimensional sub-spaces. To improve computational costs, \citet{duivenvoorden_constrained_2017} suggest a continuous approximation to \safeopt without providing exploration guarantees. 
\citet{kimberly_Paulson23}, on the other hand, take a more directed approach, by searching the most promising parameter also outside of the current safe set and then evaluating the most uncertain safe parameter in the neighborhood of the projection of such promising parameter on the boundary of the safe set.
In the context of safe active learning, \citet{li2022safe} consider the safe active learning problem for a multi-output Gaussian process, and choose the parameters to evaluate as those with the highest predictive entropy and high probability of being safe.
All of these methods rely on function uncertainty to drive exploration, while we directly maximize the information gained about the safety of parameters.
In parallel independent work, \citet{huebotter2024informationbased} proposed \textsc{ITL}\xspace, an active learning algorithm that aims at learning the values of a function in a target set $\mathcal{A}$ by actively sampling observations in a sample space $\mathcal{S}$, whose relation to $\mathcal{A}$ can be arbitrary. To do so, \textsc{ITL}\xspace maximizes the mutual information between the prediction targets in $\mathcal{A}$ and the next observation in $\mathcal{S}$. While addressing a more generic problem, when the target space is the set of potential maximizers and the sample space the set of safe parameters, \textsc{ITL}\xspace effectively becomes a safe BO algorithm.

Safe exploration also arises in other contexts, like the one of Markov decision processes (MDP) \citep{safe_mdp_ergodicity,safe_mdp_zoo}, or the one of coverage control \citep{Wang2012_cov_control}. In particular, \citet{turchetta_mdp_2016,turchetta_krause_2019} traverse the MDP to learn about the safety of parameters using methods that, at their core, explore using the same ideas as \safeopt and \stageopt to select parameters to evaluate, while \citet{prajapat_2022} apply these concept to the problem of multi agent coverage control. Consequently, our proposed method for safe exploration is also directly applicable to their setting.

\section{Problem Statement and Preliminaries}\label{sec:ise_problem_statement}

In this section, we introduce the problem and the notation that we use throughout the paper.
We are given an unknown and expensive function $f: \mathcal{X} \rightarrow \mathbb{R}$ that we aim to optimize by sequentially evaluating it at parameters $\bm x_n \in \mathcal{X}$. However, we are not allowed to evaluate parameters that violate some notion of safety. We define the safety of parameters in terms of another a priori unknown and expensive to evaluate safety constraint $s: \mathcal{X} \rightarrow \mathbb{R}$, s.t.\@ parameters that satisfy $s(\bm x) \geq 0$ are classified as safe, while the others as unsafe \footnote{This choice is without loss of generality, since we can incorporate any known non-zero safety threshold $\tau(\bm x)$ in the constraint: $s^\prime(\bm x) \coloneqq s(\bm x) - \tau(\bm x)$.}. To start exploring safely, we also assume to have access to an initial safe parameter $\bm x_0$ that satisfies the safety condition, $s(\bm x_0) \geq 0$. 

Starting from the safe seed $\bm x_0$, we sequentially select safe parameters $\bm x_n \in \mathcal{X}$ to evaluate $f$ and $s$ at. These evaluations at $\bm x_n$ produce noisy observations of the objective $f$ and of the safety constraint $s$: $y_n^f \coloneqq f(\bm x_n) + \nu_n^f$ and $y_n^s \coloneqq s(\bm x_n) + \nu_n^s$, corrupted by additive homoskedastic sub-Gaussian noise $\nu_n^{f/s}$ with zero mean and bounded by $\sigma_\nu^{f/s}$. The goal of an ideal safe optimization algorithm would then be, through these safe evaluations, to solve
\begin{equation}\label{eq:ideal_safe_opt_goal}
\begin{split}
&\max_{\bm x \in \mathcal{X}} f(\bm x) \\
&\text{s.t. } s(\bm x) \geq 0.
\end{split}
\end{equation}
However, it is in general not possible to find the global optimum that  \cref{eq:ideal_safe_opt_goal} implies without ever evaluating unsafe parameters. Instead, we aim to find the safe optimum reachable from the safe seed $\bm x_0$, i.e.\@ the parameter that is reachable from $\bm x_0$ by evaluating only safe parameters and that yields the largest value of $f$ among all such reachable parameters. We illustrate this safe optimization task in \cref{fig:problem_statement_example}, where we highlight in blue the safe area reachable from $\bm x_0$ and the optimum of $f$ within it as the safe optimization goal. In \cref{subsec:largest_reachable_safe_set} we provide more intuition about the largest region that is safely reachable from $\bm x_0$, while in \cref{subsec:definition_of_safe_set} we present a formal definition of it.

\begin{figure}[t] 
    \centering
    \subfloat[Problem components.]{%
        \includegraphics[width=0.47\textwidth]{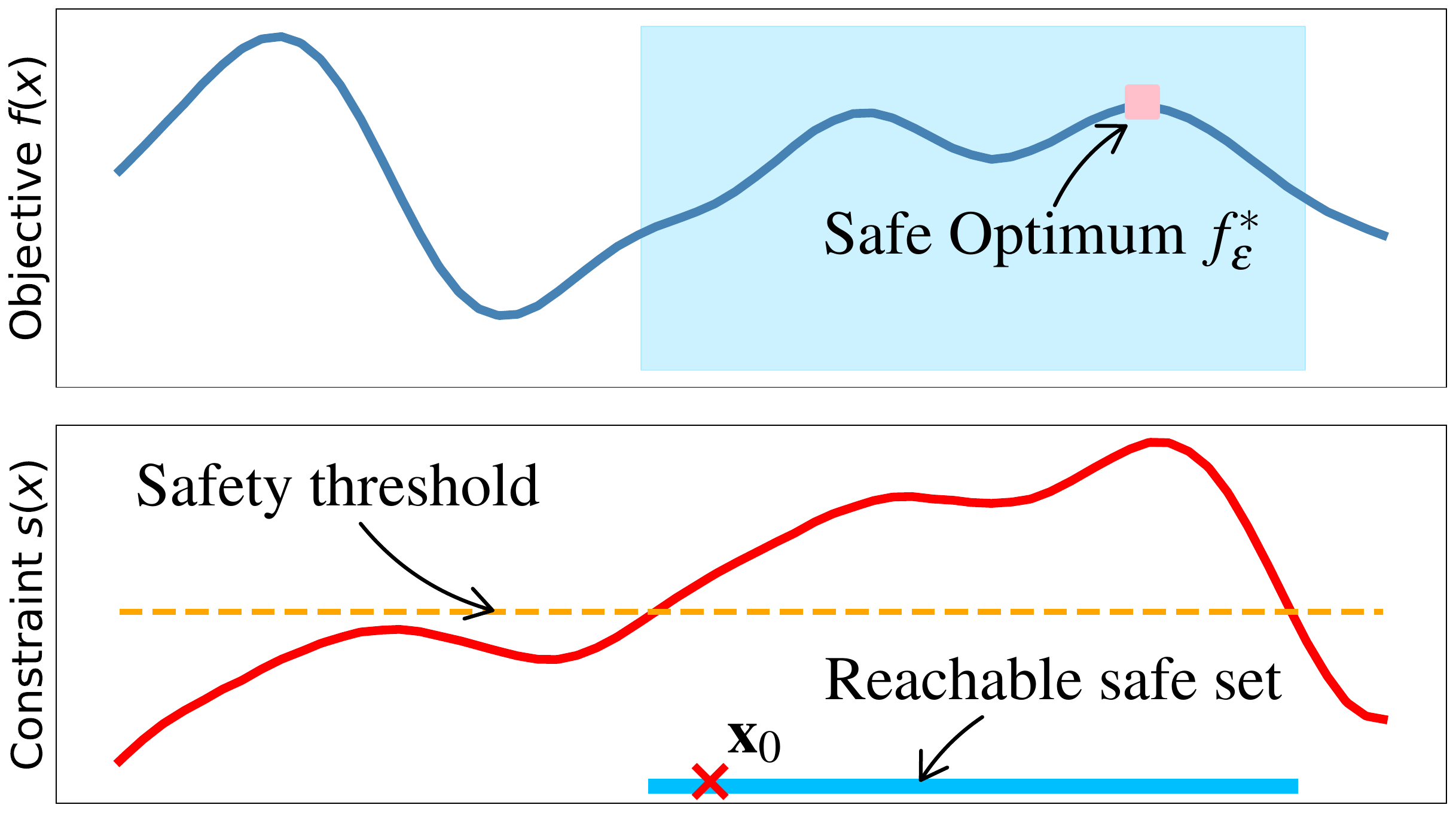}%
        \label{fig:problem_statement_example}%
        }%
    \hfill%
    \subfloat[Mutual information for exploration.]{%
        \includegraphics[width=0.47\textwidth]{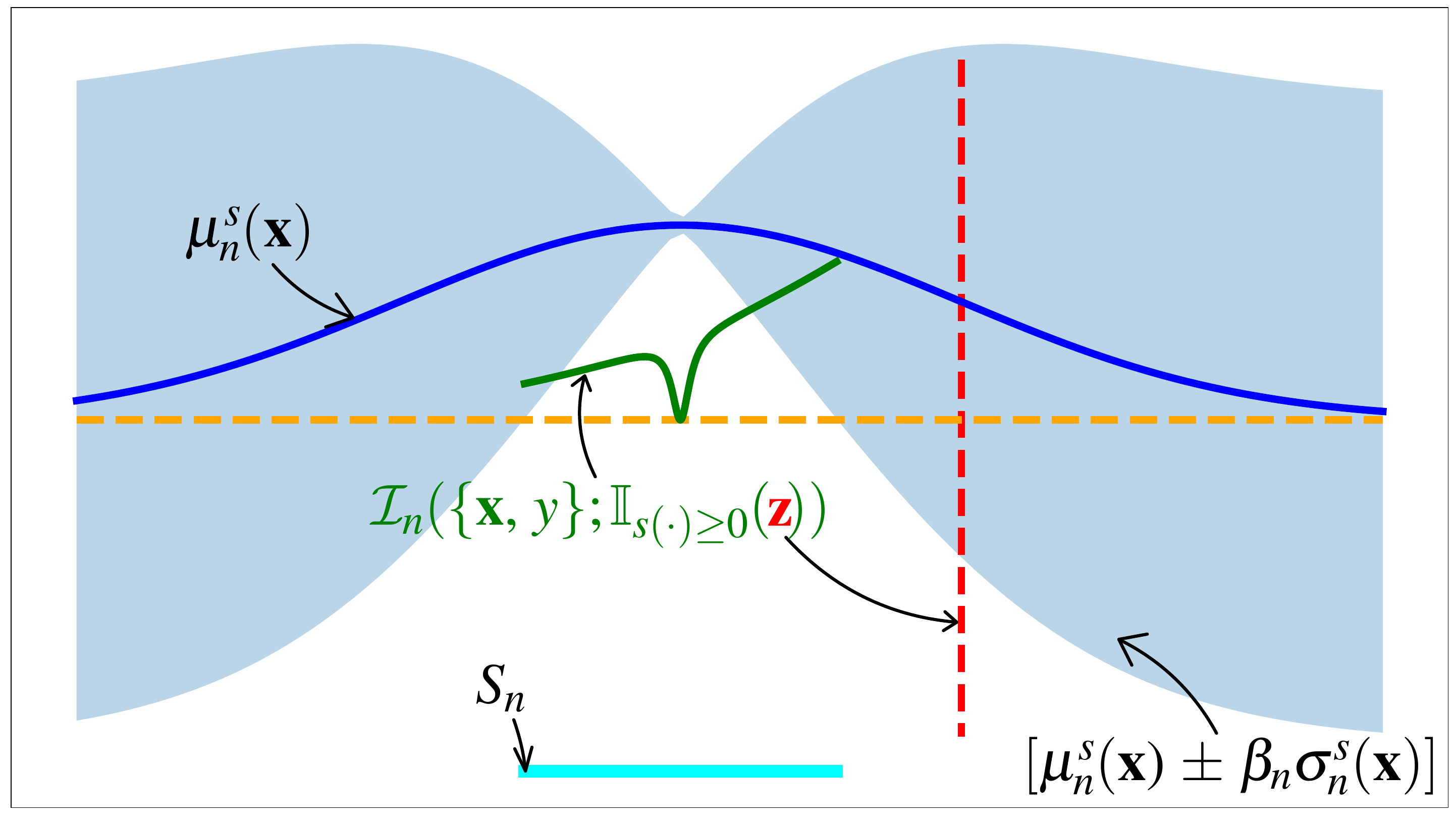}%
        \label{fig:mutual_information}%
        }%
    \caption{In (\subref{fig:problem_statement_example}) we illustrate the safe optimization task. Based on the unknown safety constraint $s$, we are only allowed to evaluate parameters $\bm x$ with values $s(\bm x)$ above the safety threshold (dashed line). Starting from a safe seed $\bm x_0$ a safe optimization strategy needs to find the optimum of the unknown objective $f$ within the largest reachable safe region of the parameter space containing $\bm x_0$. In (\subref{fig:mutual_information}) we show the mutual information $I_n(\{\bm x, y\}; \cpsi(\bm z))$ in green for different parameters $\bm x$ inside the safe set and for a fixed $\bm z$ outside (red dashed line). The exploration part of our algorithm maximizes this quantity jointly over $\bm x$ and $\bm z$.}
    \label{fig:problem_statement_and_mutual_info_example}
    \hfill
\end{figure}

\subsection{Probabilistic model}

As both $f$ and $s$ are unknown and the evaluations $y_n^{f/s}$ are noisy, it is not feasible to select parameters that are safe with certainty. Instead, we provide high-probability safety guarantees by exploiting a probabilistic model for $f$ and $s$.
To this end, we follow \citet{berkenkamp_bayesian_2020} and extend the domain $\mathcal{X}$ to $\tilde{\mathcal{X}} \coloneqq \mathcal{X} \times \{0, 1\}$ and define the function $h: \tilde{\mathcal{X}} \rightarrow \mathbb{R}$ as
\begin{equation}\label{eq:h_definition}
h(\bm x, i) = \begin{cases}
f(\bm x), & \text{if } i = 0 \\
s(\bm x), & \text{if } i = 1.
\end{cases}
\end{equation}
Furthermore, we assume that $h$ has bounded norm in the Reproducing Kernel Hilbert Spaces (RKHS) \citep{Scholkopf2002} $\mathcal{H}_k$ associated to some kernel $k: \tilde{\mathcal{X}} \times \tilde{\mathcal{X}} \rightarrow \mathbb{R}$ with $k\left((\bm x, i), (\bm x', j)\right) \leq 1$: $\norm{h}_{\mathcal{H}_k} = B < \infty$. This assumption allows us to use a Gaussian process (GP) \citep{srinivas_gaussian_2010} as probabilistic model for $h$.

A Gaussian process is a stochastic process specified by a mean function $\mu: \tilde{\mathcal{X}} \rightarrow \mathbb{R}$ and a kernel $k$ \citep{rassmussen_gaussian_2006}. It defines a probability distribution over real-valued functions on $\tilde{\mathcal{X}}$, such that any finite collection of function values at parameters $[(\bm x_1, i_1), \dots, (\bm x_n, i_n)]$ is distributed as a multivariate normal distribution.
The GP prior can then be conditioned on (noisy) function evaluations $\mathcal{D}_n = \{\left((\bm x_j, i_j), y_j\right)\}_{j=1}^n$. If the noise at each observation is Gaussian, $\nu_n \sim \mathcal{N}(0, \sigma_\nu^2)$ $\forall n$, then the resulting posterior is also a GP with posterior mean and variance given by
\begin{equation}
\begin{split}
\mu_n(\bm x, i) &= \mu(\bm x, i) + \bm k(\bm x, i)^\top (\bm K + \bm I \sigma_\nu^2)^{-1}\left(\bm y - \bm \mu \right), \\
\sigma_n^2(\bm x, i)
&= k\left((\bm x, i), (\bm x, i)\right) - \bm k(\bm x, i)^\top (\bm K + \bm I \sigma_\nu^2)^{-1} \bm k(\bm x, i),
\end{split}
\end{equation}
where $\bm \mu \coloneqq [\mu(\bm x_1, i_1), \dots \mu(\bm x_n, i_n)]$ is the mean vector at parameters $(\bm x_j, i_j) \in \mathcal{D}_n$ and $\left[\bm y\right]_j \coloneqq y_j$ the corresponding vector of observations. We have $\left[\bm k(\bm x, i)\right]_j \coloneqq k\left((\bm x, i), (\bm x_j, i_j)\right)$, the kernel matrix has entries $\left[\bm K\right]_{lm} \coloneqq k\left((\bm x_l, i_l), (\bm x_m, i_m)\right)$, and $\bm I$ is the identity matrix. In the following, we assume without loss of generality that the prior mean is identically zero: $\mu(\bm x, i) \equiv 0$.

From the GP that models $h$, it is straightforward to recover the posterior mean and variance for the objective $f$ and the constraint $s$:
\begin{equation}\label{eq:separate_posterior_f_s}
\begin{split}
& \mu_n^f(\bm x) = \mu_n(\bm x, 0), ~~ \mu_n^s(\bm x) = \mu_n(\bm x, 1), \\
& \sigma_n^f(\bm x) = \sigma_n(\bm x, 0), ~~ \sigma_n^s(\bm x) = \sigma_n(\bm x, 1).
\end{split}
\end{equation}

\subsection{Safe set}

Using the previous assumptions, we can construct high-probability confidence intervals on the safety constraint values $s(\bm x)$. Concretely, \citet{kernelized_bandits} show that, for any $\delta > 0$, it is possible to find a sequence of positive numbers $\{\beta_n\}$ such that the true function $h(\bm x, i)$ is within the confidence interval $\left[\mu_n(\bm x, i) \pm \beta_n\sigma_n(\bm x, i)\right]$ with probability at least $1 - \delta$, jointly for all $(\bm x, i) \in \mathcal{\tilde{X}}$ and $n \geq 1$. As \citet{kernelized_bandits} show, a possible choice for such sequence is:
\begin{equation}\label{eq:beta_def}
\beta_n \coloneqq B + R\sqrt{2\left(\ln(e/\delta) + \gamma_n\right)},
\end{equation}
where $R$ is an upper bound on the sub-Gaussian observation noise (i.e.\@ such that both $\mathbb{P}\left\{|\nu_n^s| \geq \lambda\right\} \leq 2 \exp\{-\lambda^2/R^2\}$ and $\mathbb{P}\left\{|\nu_n^f| \geq \lambda\right\} \leq 2 \exp\{-\lambda^2/R^2\}$), and $\gamma_n$ is the maximum information capacity of the chosen kernel after $n$ iterations: $\gamma_n = \max_{|D| = n}I\left(\bm h(D); \bm y(D)\right)$ \citep{srinivas_gaussian_2010,gp_opt_with_mi}.
From these confidence intervals, using \cref{eq:separate_posterior_f_s} we can derive analogous intervals for $s$, $s(\bm x) \in \left[\mu_n^s(\bm x) \pm \beta_n\sigma_n^s(\bm x)\right]$, which we use to define the notion of a \textit{safe set}
\begin{equation}\label{eq:safe_set_definition}
S_n \coloneqq \{\bm x \in \mathcal{X} : \mu_n^s(\bm x) - \beta_n\sigma_n^s(\bm x) \geq 0\} \cup S_{n - 1};~~S_0 \doteq \bm x_0,
\end{equation}
namely the set that contains all parameters whose $\beta_n$-lower confidence bound is above the safety threshold, together with all parameters that were in previous safe sets. Thanks to the set union in the definition of $S_n$, at each iteration the safe set either expands or remains unchanged, guaranteeing that the safe seed $\bm x_0$ is always contained in it. As a consequence of this definition, we also know that all parameters in $S_n$ are safe, $s(\bm x) \geq 0$ for all $\bm x \in S_n$, with probability of at least $1 - \delta$ jointly over all iterations $n$:

\begin{restatable}{lemma}{SafeSetHighProbab}
\label{lemma:safe_set_is_safe_with_high_probability}
Choose $\delta \in (0, 1)$ and let $h$ as defined in \cref{eq:h_definition} be an element in the RKHS $\mathcal{H}_k$ associated to some kernel $k$, with $\norm{h}_{\mathcal{H}_k} = B < \infty$. Let, moreover, the sequence of positive numbers $\{\beta_n\}$ be as in \cref{eq:beta_def}. Then if we define the safe set as in \cref{eq:safe_set_definition}, we have that $s(\bm x) \geq 0$ with probability of at least $1 - \delta$, jointly for all $\bm x \in S_n$ and for all $n$.
\end{restatable}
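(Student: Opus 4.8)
The plan is to reduce the statement to the uniform confidence bound of \citet{kernelized_bandits} and then deal with the recursive union in the definition of $S_n$ by a short induction. First I would invoke that result: under the stated RKHS assumption $\norm{h}_{\mathcal{H}_k} = B < \infty$ and with $\{\beta_n\}$ chosen as in \cref{eq:beta_def}, the event
\[
\mathcal{E} \coloneqq \Big\{\, |h(\bm x, i) - \mu_n(\bm x, i)| \leq \beta_n \sigma_n(\bm x, i)
\ \text{for all } (\bm x, i) \in \tilde{\mathcal{X}} \text{ and all } n \geq 1 \,\Big\}
\]
holds with probability at least $1 - \delta$. Everything after this is deterministic and takes place on $\mathcal{E}$; crucially, $\mathcal{E}$ is already a joint-over-all-$n$ event, so this single application of \citet{kernelized_bandits} delivers the ``jointly for all $n$'' part of the claim with no further union bound over iterations.

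Next, on $\mathcal{E}$ I would specialize the bound to $i = 1$ and use \cref{eq:separate_posterior_f_s} to obtain $s(\bm x) \geq \mu_n^s(\bm x) - \beta_n \sigma_n^s(\bm x)$ for every $\bm x \in \mathcal{X}$ and every $n$. Hence any parameter whose $\beta_n$-lower confidence bound is nonnegative at some iteration is genuinely safe. I would then run an induction on $n$ to conclude $s(\bm x) \geq 0$ for all $\bm x \in S_n$: the base case is immediate since $S_0 = \{\bm x_0\}$ and $\bm x_0$ is safe by assumption, and for the inductive step \cref{eq:safe_set_definition} forces any $\bm x \in S_n$ either to lie in $S_{n-1}$ (safe by the induction hypothesis) or to satisfy $\mu_n^s(\bm x) - \beta_n \sigma_n^s(\bm x) \geq 0$ (safe by the confidence bound just derived). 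Chaining over all $n$ then gives the result.

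I do not expect a real obstacle here; the only points that need care are (i) invoking the confidence bound in its joint-over-all-$n$ form so that the failure probability stays at $\delta$ rather than growing with the horizon, and (ii) correctly unrolling the recursion $S_n = \{\,\mu_n^s(\cdot) - \beta_n \sigma_n^s(\cdot) \geq 0\,\} \cup S_{n-1}$ so that a parameter certified safe at an early iteration is still covered even if its lower confidence bound later becomes negative — the induction on $n$ handles this cleanly.
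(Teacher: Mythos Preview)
Your proposal is correct and follows essentially the same approach as the paper: invoke the uniform-over-$n$-and-$\bm x$ confidence bound of \citet{kernelized_bandits} for $h$, specialize to the constraint component, and conclude from the definition of $S_n$. The paper's proof is terser---it simply states that the confidence bound ``together with the way we defined the safe set in \cref{eq:safe_set_definition}, implies the claim''---whereas you make the induction over the recursive union explicit; this is a presentational difference, not a methodological one.
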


\begin{proof}
See \cref{appendix:ise_proofs}.
\end{proof}

\subsection{Largest reachable safe set and safe optimum}\label{subsec:largest_reachable_safe_set}
At the beginning of this section, we stated that the goal of a safe optimization algorithm is to find the optimum within the largest safe set reachable from the safe seed $\bm x_0$, like the one shown in \cref{fig:problem_statement_example}. Therefore, we need to define what we mean with \textit{largest reachable safe set}. 

Intuitively, given that we can only evaluate parameters that are safe with high probability and that at the beginning we only know that $\bm x_0$ is safe, we need a way to infer what parameters are also safe, given that $\bm x_0$ is and assuming that we know the value $s(\bm x_0)$ up to a precision of $\varepsilon$. Then we can repeat this process recursively, until we cannot add any more parameters. 

A useful tool for the theoretical analysis that allows us formalize such intuition is a one-step \emph{expansion operator} $R_\varepsilon$, such that $R_\varepsilon(\bm x_0)$ consists of all the parameters that we can assume safe knowing $s(\bm x_0)$ up to $\varepsilon$, $R_\varepsilon\left(R_\varepsilon(\bm x_0)\right)$ the set of all parameters that we can assume safe if we know the value of $s(\bm x)$ up to $\varepsilon$ for all $\bm x$ in $R_\varepsilon(\bm x_0)$, and so on.
With $R_\varepsilon$ at hand, it becomes immediate to define the largest reachable safe set $S_\varepsilon(\bm x_0)$ as the set that we obtain by recursively applying $R_\varepsilon$ to $\bm x_0$ an infinite number of times: $S_\varepsilon(\bm x_0) \coloneqq \lim_{n\to\infty} R^n_\varepsilon(\bm x_0)$.

For their \safeopt algorithm, \citet{sui_safe_2015} define such an operator leveraging the Lipschitz assumption on the safety constraint. Namely, starting from an initial set of parameters known to be safe, they recursively add to that set those parameters that must also be safe given the Lipschitz condition, until no new parameters can be added to the set. 
In \cref{sec:ise_theory} of this paper, for the generalization from the current set of safe parameters, we can rely solely on the GP posterior, so that a natural choice for the largest safe region reachable from the safe seed is the set that is safely reachable from $\bm x_0$ by all well behaved GPs. This intuition leads us, in \cref{subsec:definition_of_safe_set}, to the formal definition of a corresponding expansion operator $R_\varepsilon$ and the resulting reachable safe set $S_\varepsilon(\bm x_0)$. 

Once we have identified the largest safe region of the domain that we can safely reach starting from the safe seed, $S_\varepsilon(\bm x_0)$, we can naturally define also the optimum value of $f$ within that region:
\begin{definition}[Safe optimum $f^*_\varepsilon$]\label{def:safe_optimum}
Let $\varepsilon > 0$, we define the safe optimum reachable from $\bm x_0$ as:
\begin{equation}
f^*_\varepsilon \coloneqq \max_{\bm x \in S_\varepsilon(\bm x_0)}f(\bm x).
\end{equation}
\end{definition}
The reachable safe optimum $f^*_\varepsilon$ is, therefore, the natural objective for the safe optimization problem that we consider in this paper. We present more details and a formal definition of $S_\varepsilon(\bm x_0)$ in \cref{subsec:definition_of_safe_set}.

\subsection{Bayesian optimization}\label{sec:mes_intro}

If the set $S_\varepsilon(\bm x_0)$ were known, then the problem would be global optimization within $S_\varepsilon(\bm x_0)$. A popular framework to find the global optimum of an unknown and expensive function via sequential evaluations is Bayesian optimization. Given a function $f$ defined on some fixed domain $\Omega$, BO algorithms find the $\argmax_{\bm x \in \Omega}f(\bm x)$ by building a probabilistic model of $f$ and evaluating parameters that maximize a so called \textit{acquisition function} $\alpha: \Omega \rightarrow \mathbb{R}$ defined via that model. The probabilistic model is then updated in a Bayesian fashion using the evaluation results.
\cref{alg:bo_generic} shows pseudo-code for the generic BO loop.

BO algorithms differ for the acquisition function that they use. As mentioned in \cref{sec:ise_introduction}, one of such acquisition function that is based on an information-theoretic criterion is the Max-value Entropy Search (MES) acquisition function \citep{wang_max-value_2017}.
More specifically, the MES acquisition function computes the mutual information between an evaluation at parameter $\bm x$ with observed value $y$ and the objective function's optimum value: $I_n\left(\{\bm x, y\}; f^*\right)$, with $f^* \coloneqq \max_{\bm x}f(\bm x)$, so that the next parameter to evaluate is the most informative about the value of the optimum. 
In the case of a GP prior and Gaussian additive noise, the MES mutual information can be expressed as
\begin{equation}\label{eq:mes_acquisition_expression}
I_n\left(\{\bm x, y\}; f^*\right) = \mathbb{E}_{f^*}\left[\frac{\theta_{f^*}(\bm x)\psi(\theta_{f^*}(\bm x))}{2\Psi(\theta_{f^*}(\bm x))} - \ln\left(\Psi(\theta_{f^*}(\bm x))\right)\right],
\end{equation}
where $\theta_{f^*}(\bm x) = \frac{f^* - \mu_n(\bm x)}{\sigma_n(\bm x)}$, while $\psi$ and $\Psi$ are, respectively, the probability density function of a standard Gaussian and its cumulative distribution.
The expectation in \cref{eq:mes_acquisition_expression} is over the possible values of the optimum and can be approximated via a Monte Carlo estimate. \citet{wang_max-value_2017} have shown that estimating \cref{eq:mes_acquisition_expression} even with a single properly chosen sample achieves vanishing regret with high probability, meaning that, with high probability, it leads to learning about the optimum value up to arbitrary confidence.

\begin{algorithm}[t]
   \caption{Generic BO loop}
   \label{alg:bo_generic}
\begin{algorithmic}[1]
   \STATE {\bfseries Input:} GP prior for objective function $f$ $\sim$ GP($\mu_0$, $k$, $\sigma_\nu$)
   \FOR{$n=0$, \dots, $N$}
   \STATE $\bm x_{n+1}$ $\leftarrow \argmax_{\bm x} \alpha(\bm x)$
   \STATE $y_{n+1}$ $\leftarrow$ $f(\bm x_{n+1}) + \nu_n$
   \STATE Update GP posteriors with $(\bm x_{n+1}, y_{n+1})$
   \ENDFOR
\end{algorithmic}
\end{algorithm}

\section{Safe Optimization Algorithm}\label{sec:safe_optimization_algorithm}

In \cref{subsec:largest_reachable_safe_set}, we defined the goal of a safe optimization algorithm, namely finding the safe optimum $f^*_\varepsilon$, while in \cref{sec:mes_intro}, we introduced BO as a framework to solve unconstrained global optimization. To solve our safe optimization problem, we need to design an acquisition function $\alpha$ that at each iteration $n$ we can use to identify the next parameter to evaluate by maximizing it within the current safe set:
\begin{equation}
\bm x_{n+1} \in \argmax_{\bm x \in S_n} \alpha(\bm x).
\end{equation}

A key element to keep in mind when designing a BO acquisition function is the exploration-exploitation trade-off, as the optimum could be both in regions we know more about and seem promising, as well as in areas where the uncertainty about the function values is high. In safe BO, the fact that we are only allowed to evaluate parameters that are safe with high probability adds a further layer of complexity to the exploration-exploitation problem, since applying exploration strategies of popular BO algorithms only within the set of safe parameters is generally insufficient to drive the expansion of the safe set and learn about the safe optimum, as for example shown by \citet{sui_safe_2015} for the well known GP-UCB algorithm \citep{srinivas_gaussian_2010}. In \cref{fig:toy_example}, we can see an example of such situations, where the \mes acquisition \citep{wang_max-value_2017} constrained to the safe set fails to expand the safe set sufficiently to uncover the true safe optimum.

To overcome such issues, a safe BO algorithm needs, therefore, to simultaneously learn about the safety of parameters outside the current safe sets, promoting its expansion, and explore-exploit within the safe set to learn about the safe optimum. 
In order to achieve this behavior, in \cref{sec:safe_exploration} we propose an acquisition function that selects the parameters to evaluate as the ones that maximize the information gain about the safety of parameters outside of the safe set. We then combine it with the \mes acquisition introduced in \cref{sec:mes_intro}, and show that this combinations converges to the safe optimum in \cref{subsec:convergence_results}.

\begin{figure}[t] 
    \centering
    \includegraphics[width=\textwidth]{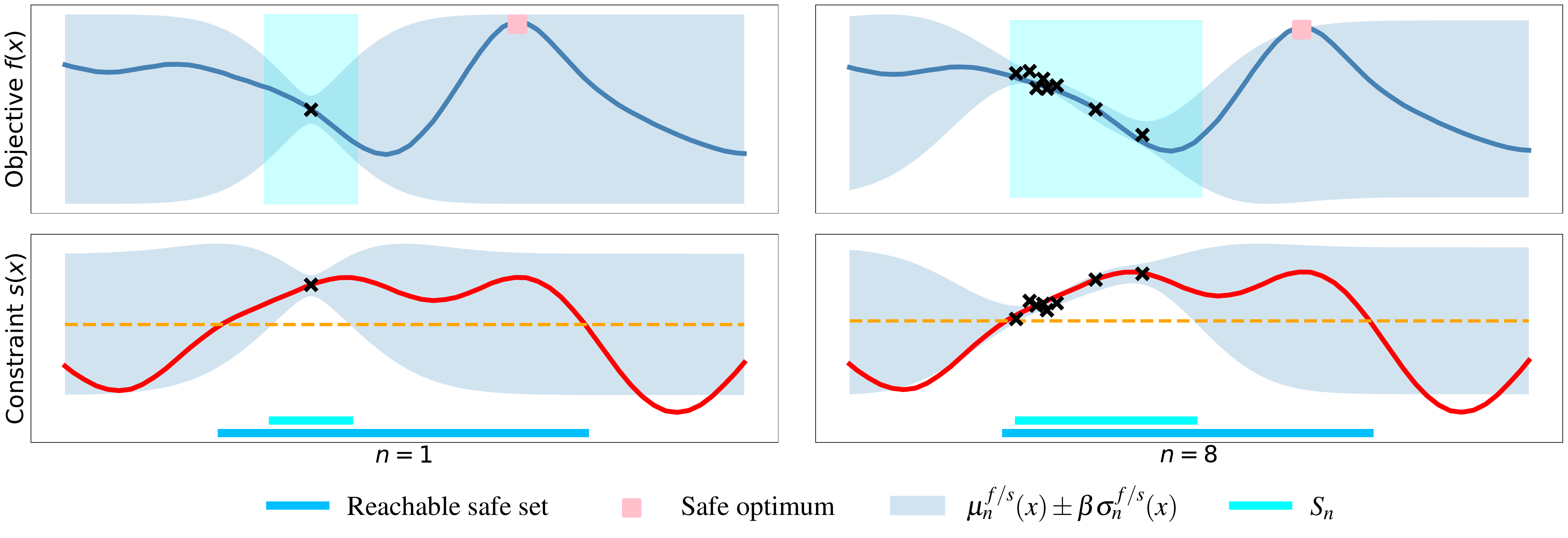}
    \caption{1D example that illustrates why an explicit exploration component that promotes the expansion of the safe set is crucial in safe BO. Here the function in the top half, $f$, is the unknown objective to optimize, while the lower half shows the unknown safety constrain, $s$. The algorithm that generated the plots in the figure chooses the next parameter to evaluate as the one that maximizes the pure \mes acquisition function for $f$, constrained within the current safe set $S_n$. We can see that the safe optimum (pink square) is outside the initial safe set (left plots), but the acquisition function has no interest in evaluating points that could expand the safe set further on the right side of the safe set, since the right boundary of the safe set has clearly a negligible probability of containing the optimum. Instead, the \mes acquisition function focuses on the left boundary of the safe set - as we can see in the plots on the right - since that is the likely location of the optimum within the current safe set.}
    \label{fig:toy_example}
    \hfill
\end{figure}

\subsection{Safe exploration}\label{sec:safe_exploration}

To be able to expand the safe set beyond its current extension, we need to learn what parameters outside of the safe set are also safe with high probability. Most existing safe exploration methods rely on uncertainty sampling over subsets within $S_n$. The safe exploration strategy that we propose instead uses an information gain measure to identify the parameters that allow us to efficiently learn about the safety of parameters \emph{outside} of $S_n$. In other words, we want to evaluate $s$ at safe parameters that are maximally informative about the safety of other parameters, in particular of those where we are uncertain about whether they are safe or not.

To this end, we need a corresponding measure of information gain. We define such a measure using the binary variable $\cpsi(\bm x)$, which is equal to one iff $s(\bm x) \geq 0$ and zero otherwise. Its entropy is given by
\begin{equation}\label{eq:exact_psi_entropy}
H_n\left[\cpsi(\bm z)\right] = -p_n^-(\bm z) \ln(p_n^-(\bm z)) - \left(1 - p_n^-(\bm z)\right) \ln(1 - p_n^-(\bm z)),
\end{equation}
where $p_n^-(\bm z)$ is the probability of $\bm z$ being unsafe: $p_n^-(\bm z) = \frac{1}{2} + \frac{1}{2}\erf \left(-\frac{1}{\sqrt{2}}\frac{\mu^s_n(\bm z)}{\sigma^s_n(\bm z)}\right)$. The random variable $\cpsi(\bm z)$ has high-entropy where we are uncertain whether a parameter is safe or not; that is, its entropy decreases monotonically as $|\mu^s_n(\bm z)|$ increases and the GP posterior moves away from the safety threshold. It also decreases monotonically as $\sigma^s_n(\bm z)$ decreases and we become more certain about whether the constraint is violated or not. This behavior also implies that the entropy goes to zero as the confidence about the safety of $\bm z$ increases, as desired.

Given $\cpsi$, we consider the mutual information $I\left(\{\bm x, y\}; \cpsi(\bm z)\right)$ between an observation $y$ at a parameter $\bm x$ and the value of $\cpsi$ at another parameter $\bm z$. Since $\cpsi$ is the indicator function of the safe regions of the parameter space, the quantity $I_n\left(\{\bm x, y\}; \cpsi(\bm z)\right)$ measures how much information about the safety of $\bm z$ we gain by evaluating the safety constraint $s$ at $\bm x$ at iteration $n$, averaged over all possible observed values $y$. This interpretation follows directly from the definition of mutual information:
\begin{equation}
I_n\left(\{\bm x, y\}; \cpsi(\bm z)\right) = H_n\left[\cpsi(\bm z)\right] - \mathbb{E}_{y}\left[H_{n+1}\left[\cpsi(\bm z) \middle| \{\bm x, y\}\right]\right],
\end{equation}
where $H_n[\cpsi(\bm z)]$ is the entropy of $\cpsi(\bm z)$ according to the GP posterior at iteration $n$, while $H_{n+1}\left[\cpsi(\bm z) \middle| \{\bm x, y\}\right]$ is its entropy at iteration $n+1$, conditioned on a measurement $y$ at $\bm x$ at iteration $n$. Intuitively, $I_n\left(\{\bm x, y\}; \cpsi(\bm z)\right)$ is negligible whenever the confidence about the safety of $\bm z$ is high or, more generally, whenever an evaluation at $\bm x$ does not have the potential to substantially change our belief about the safety of $\bm z$. The mutual information is large whenever an evaluation at $\bm x$ on average causes the confidence about the safety of $\bm z$ to increase significantly. As an example, in \cref{fig:mutual_information} we plot $I_n\left(\{\bm x, y\}; \cpsi(\bm z)\right)$ as a function of $\bm x \in S_n$ for a specific choice of $\bm z$ and for an RBF kernel. As one would expect, we see that the closer $\bm x$ gets to $\bm z$, the bigger the mutual information becomes, and that it vanishes in the neighborhood of previously evaluated parameters, where the posterior variance is small.

To compute $I_n\left(\{\bm x, y\}; \cpsi(\bm z)\right)$, we need to average \cref{eq:exact_psi_entropy} conditioned on an evaluation $y$ over all possible values of $y$. However, the resulting integral is intractable given the expression of $H_n[\cpsi(\bm z)]$ in \cref{eq:exact_psi_entropy}. In order to get a tractable result, we derive a close approximation of \cref{eq:exact_psi_entropy},
\begin{equation}\label{eq:approximated_psi_entropy}
H_n\left[\cpsi(\bm z)\right] \approx \hat{H}_n\left[\cpsi(\bm z)\right] \doteq \ln(2) \exp\left\{-\frac{1}{\pi\ln(2)}\left(\frac{\mu^s_n(\bm z)}{\sigma^s_n(\bm z)}\right)^2\right\}.
\end{equation}
We obtained the approximation in \cref{eq:approximated_psi_entropy} by truncating the Taylor expansion of $H_n[\cpsi(\bm z)]$ at the second order, and noticing that it recovers almost exactly its true behavior (see \cref{appendix:entropy_approx} for details).
Since the posterior mean at $\bm z$ after an evaluation at $\bm x$ depends linearly on $\mu_n(\bm x)$, and since the probability density of $y$ depends exponentially on $-\mu_n^2(\bm x)$, using the approximation \cref{eq:approximated_psi_entropy} reduces the conditional entropy $\mathbb{E}_{y}\left[\hat{H}_{n+1}\left[\cpsi(\bm z) \middle| \{\bm x, y\}\right]\right]$ to a Gaussian integral with the exact solution
\begin{equation}\label{eq:averaged_post_measurement_entropy}
\begin{split}
\mathbb{E}_{y}&\left[\hat{H}_{n+1}\left[\cpsi(\bm z) \middle| \{\bm x, y\}\right]\right] =\\
&\ln(2)\sqrt{\frac{\sigma_\nu^2 + \sigma_n^2(\bm x)(1 - \rho_n^2(\bm x, \bm z))}{\sigma_\nu^2 + \sigma_n^2(\bm x)(1 + c_2\rho_n^2(\bm x, \bm z))}}\exp\left\{-c_1\frac{\mu_n^2(\bm z)}{\sigma_n^2(\bm z)}\frac{\sigma_\nu^2 + \sigma_n^2(\bm x)}{\sigma_\nu^2 + \sigma_n^2(\bm x)(1 + c_2\rho_n^2(\bm x, \bm z))}\right\},
\end{split}
\end{equation}
where we omitted the $s$ superscript for $\mu_n$ and $\sigma_n$, $\rho_n(\bm x, \bm z)$ is the linear correlation coefficient between $s(\bm x)$ and $s(\bm z)$, and $c_1$ and $c_2$ are given by $c_1 \coloneqq 1/\ln(2)\pi$ and $c_2 \coloneqq 2c_1 - 1$. This result allows us to analytically calculate the approximated mutual information 
\begin{equation}\label{eq:approx_exploration_mutual_info}
\hat{I}_n\left(\{\bm x, y\}; \cpsi(\bm z)\right) \doteq \hat{H}_n\left[\cpsi(\bm z)\right] - \mathbb{E}_{y}\left[\hat{H}_{n+1}\left[\cpsi(\bm z) \middle| \{\bm x, y\}\right]\right].
\end{equation}
Now that we have defined a way to measure and compute the information gain about the safety of parameters, we can use it to design the exploration component of our acquisition function, which drives the expansion of the safe set. The natural choice for this component is a function that selects the parameter that maximizes the information gain; that is, we define the \ourmethodlongexp (\ourmethodexp) acquisition $\alpha^{\ourmethodexp}$, as

\begin{equation}\label{eq:alpha_ise}
\alpha^{\ourmethodexp} \coloneqq \max_{\bm z \in \mathcal{X}}\hat{I}_n\left(\{\bm x, y\}; \cpsi(\bm z)\right).
\end{equation}
Evaluating $s$ at $\bm x \in \argmax_{\bm x \in S_n}\alpha^{\ourmethodexp}(\bm x)$ maximizes the information gained about the safety of some parameter $\bm z \in \mathcal{X}$, so that it allows us to efficiently learn about parameters that are not yet known to be safe. While $\bm z$ can lie in the whole domain, the parameters where we are the most uncertain about the safety constraint lie outside the safe set. By leaving $\bm z$ unconstrained, we show in our theoretical analysis in \cref{sec:ise_theory} that, once we have learned about the safety of parameters outside the safe set, \cref{eq:alpha_ise} resorts to learning about the constraint function also inside $S_n$. In \cref{sec:ise_theory} we also show that this behavior yields exploration guarantees, meaning that it leads to classify the whole $S_\varepsilon(\bm x_0)$ as safe.

\begin{algorithm}[t]
   \caption{\ourmethodlong}
   \label{alg:algorithm}
\begin{algorithmic}[1]
   \STATE {\bfseries Input:} GP priors for $h$ ($\mu_0$, $k$, $\sigma_\nu$) and safe seed $\bm x_0$
   \FOR{$n=0$, \dots, $N$}
   \STATE $\bm x_{n+1}$ $\leftarrow \argmax_{\bm x \in S_n}~\max\left\{\alpha^{\ourmethodexp}(\bm x), \alpha^{\mes}(\bm x)\right\}$
   \STATE $y_{n+1}^f$, $y_{n+1}^s$ $\leftarrow$ $f(\bm x_{n+1}) + \nu_n^f$, $s(\bm x_{n+1}) + \nu_n^s$
   \STATE Update GP posteriors with $(\bm x_{n+1}, y_{n+1}^f)$ and $(\bm x_{n+1}, y_{n+1}^s)$ 
   \ENDFOR
\end{algorithmic}
\end{algorithm}

\subsection{Optimization}
Next, we address exploration and exploitation within the safe set, in order to learn about the current safe optimum.
In the previous subsection, we discussed an exploration strategy that promotes the expansion of the safe set beyond the parameters that are currently classified as safe with high probability. As discussed, such an exploration component is essential for the task at hand, especially in those circumstances where the location of the safe optimum is far away from the safe seed $\bm x_0$. In order to find the safe optimum $f^*_\varepsilon$, however, it is not sufficient to learn about the largest reachable safe set, but one also needs to learn about the function values within the safe set, until the optimum can be identified with high confidence. 

To this end, we can pair the mutual information $\hat{I}_n\left(\{\bm x, y\}; \cpsi(\bm z)\right)$ with a quantity that measures the information gain about the safe optimum. For such a pairing to work, we need a quantity that has the same units as $\alpha^{\ourmethodexp}$ to measure progress. A natural choice for such quantity is the mutual information between an evaluation at $\bm x$ and the safe optimum: $I_n\left(\{\bm x, y\}; f^*_\varepsilon\right)$. With this choice, the next parameter to evaluate, $\bm x_{n + 1}$, achieves the maximum between the two information gain measures:
\begin{equation}\label{eq:combined_acqusisition_1}
\bm x_{n + 1} \in \argmax_{\bm x \in S_n}~\max\left\{\alpha^{\ourmethodexp}(\bm x), \alpha^{\mes}(\bm x)\right\},
\end{equation}
where $\alpha^{\mes} \coloneqq I_n\left(\{\bm x, y\}; f^*_\varepsilon\right)$ is the MES acquisition function that \citet{wang_max-value_2017} proposed, as explained in \cref{sec:mes_intro}.
By selecting $\bm x_{n+1}$ according to \cref{eq:combined_acqusisition_1}, we choose the parameter that yields the highest information gain about the two objectives we are interested in: what parameters outside the current safe set are also safe and what is the optimum value within the current safe set. Indeed, as we show in \cref{sec:ise_theory}, with high probability this selection criterion eventually leads to classify as safe the whole $S_\varepsilon(\bm x_0)$ and to identify $f^*_\varepsilon$ with high confidence.

\begin{figure}[t] 
    \centering
    \subfloat[$n = 2$]{%
        \includegraphics[width=0.47\textwidth]{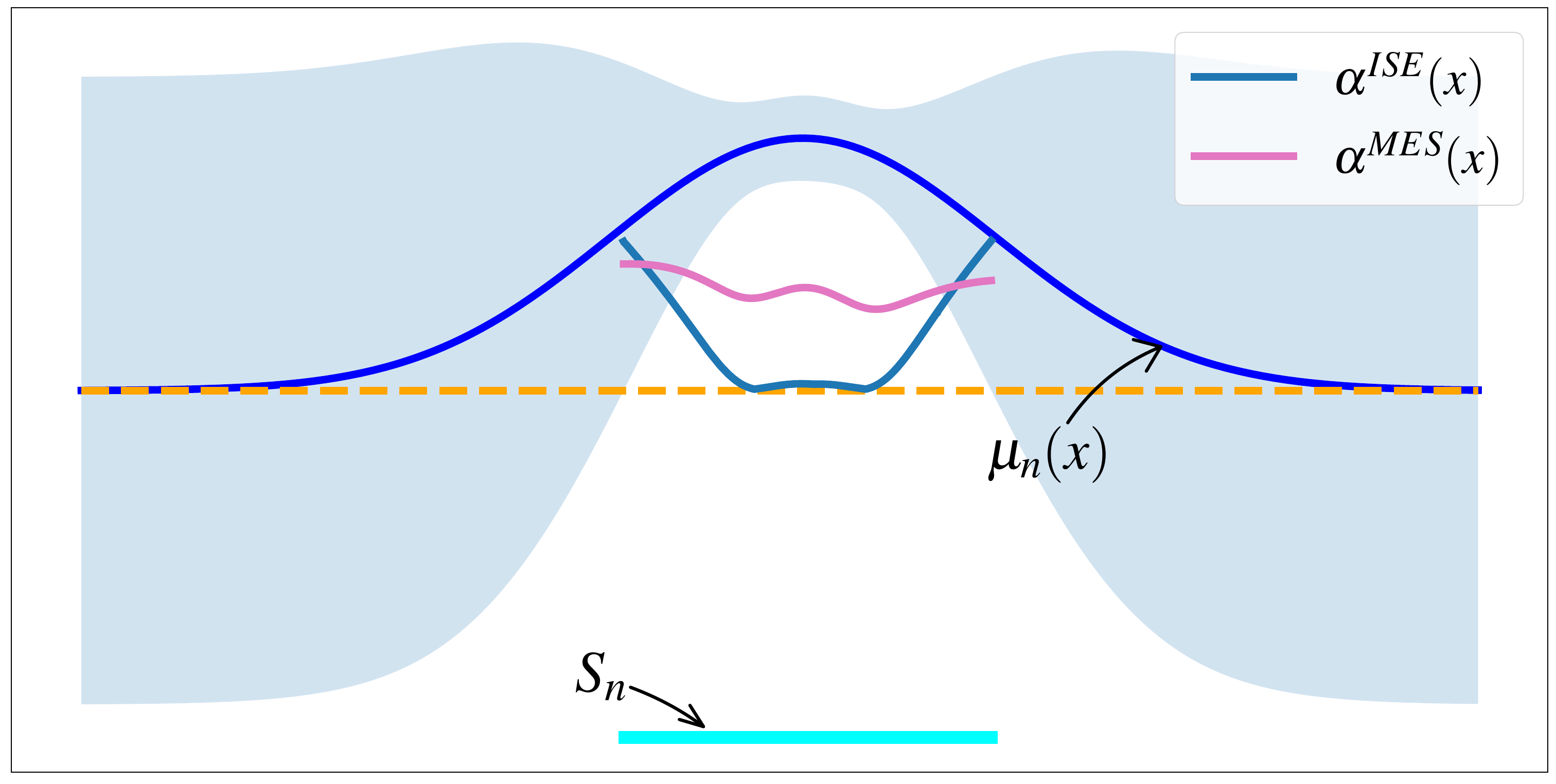}%
        \label{fig:acquisition_example_1}%
        }%
    \hfill
    \subfloat[$n = 7$]{%
        \includegraphics[width=0.47\textwidth]{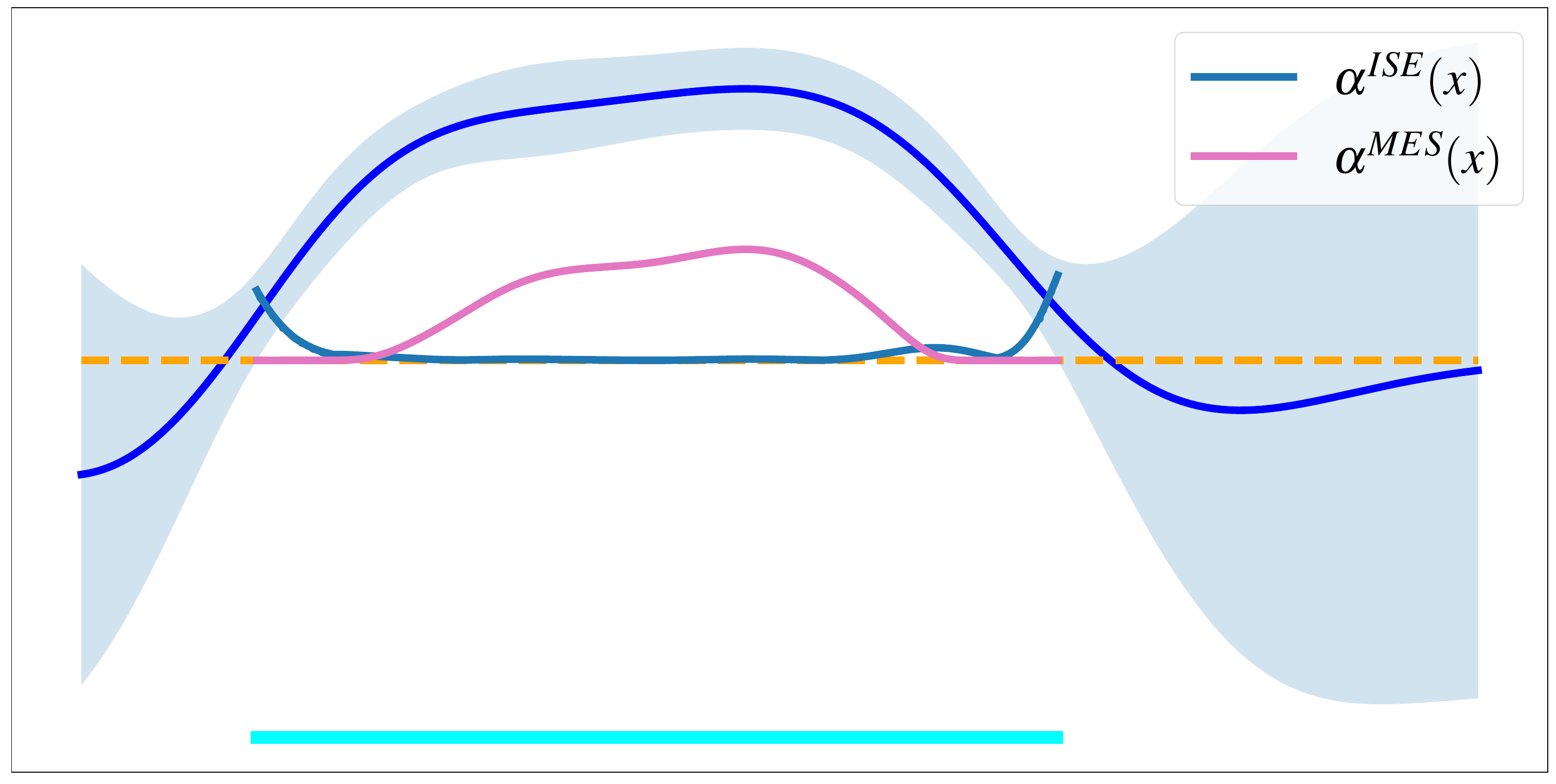}%
        \label{fig:acquisition_example_2}%
        }%
    \caption{Example of the two components of the acquisition function. For simplicity, here the objective $f$ and the constraint $s$ are the same function. The two plots show the values of the safe set expansion component ($\alpha^{\ourmethodexp}$) and the optimization component ($\alpha^{\mes}$) of the acquisition function inside the safe set ($S_n$), against the GP posterior mean and confidence interval (the blue curve and shaded area, respectively), with the safety threshold $s(\bm x) = f(\bm x) = 0$ indicated by the orange dashed line. At early stages (\subref{fig:acquisition_example_1}) we see that both components achieve their maximum on the boundary of the safe set, since, according to the GP posterior, it is both promising as a region that contains the optimum and as a region that can give us much information about the safety of parameters outside $S_n$. The plot in (\subref{fig:acquisition_example_2}), however, shows that, as we continue sampling on the border, $\alpha^{\mes}$ vanishes here, since it is unlikely that the optimum is in these neighborhoods. On the contrary, $\alpha^{\ourmethodexp}$ remains non negligible on the boundaries, as these parameters can still give us information about the safety constraint outside of the current $S_n$, possibly leading to its expansion.}
    \label{fig:acquisition_functions_example}
    \hfill
\end{figure}

\cref{fig:acquisition_functions_example} shows an example of the two components of the acquisition function \cref{eq:combined_acqusisition_1}, in a case where the objective and constraint are the same function. We see in \cref{fig:acquisition_example_2} that the exploration component, $\alpha^{\ourmethodexp}$, vanishes within the safe set, where we have high confidence about the safety of $f$, while it achieves its biggest values on the boundary of the safe set, where an observation is most likely to give us valuable information about the safety of parameters about whose safety we don't know much yet. On the contrary, the optimization component of the acquisition function, $\alpha^{\mes}$, has its optimum in the inside of $S_n$, where the current safe optimum of $f$ is likely to be, so that an observation there would increase the amount of information we have about the safe optimum value $f^*$ within $S_n$. It is also possible to show that both acquisition components are bounded by a monotonically increasing function of the posterior variance, meaning that they vanishes in regions where the posterior variance is very small and where, therefore, we are highly confident about the values of $f$ and $s$:
\begin{restatable}{lemma}{MIsDecreaseWithSigma}
\label{lemma:mutual_infos_decrease_with_sigma}
Let $\alpha^{\ourmethodexp}$ be as defined in \cref{eq:alpha_ise} and $\alpha^{\mes} = I_n\left(\{\bm x, y\}; f^*_\varepsilon\right)$, computed using, respectively, the posterior GP for $s$ and for $f$. Then it holds that $\max\left\{\alpha^{\ourmethodexp}(\bm x), \alpha^{\mes}(\bm x)\right\} \leq \max\left\{\frac{\sigma_{s,n}^2(\bm x)}{\sigma_{s,\nu}^2}, \frac{\sigma_{f,n}^2(\bm x)}{2\sigma_{f,\nu}^2}\right\}$.
\end{restatable}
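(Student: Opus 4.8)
The plan is to bound the two components of \cref{eq:combined_acqusisition_1} separately and then take the maximum. For the \mes term I would use the data processing inequality. Under the GP posterior after $n$ observations, the noisy evaluation $y = f(\bm x) + \nu^f$ depends on the random function $f$ only through the scalar $f(\bm x)$, and $\nu^f$ is independent of $f$; since the safe optimum $f^*_\varepsilon$ is a deterministic functional of $f$, the triple forms the Markov chain $f^*_\varepsilon \to f(\bm x) \to y$. Hence $\alpha^{\mes}(\bm x) = I_n(\{\bm x, y\}; f^*_\varepsilon) \le I_n(\{\bm x, y\}; f(\bm x)) = \tfrac12 \ln\big(1 + \sigma_{f,n}^2(\bm x)/\sigma_{f,\nu}^2\big)$, the last equality being the mutual information of a scalar Gaussian channel with source variance $\sigma_{f,n}^2(\bm x)$ and noise variance $\sigma_{f,\nu}^2$. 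The elementary inequality $\ln(1+t) \le t$ then yields $\alpha^{\mes}(\bm x) \le \sigma_{f,n}^2(\bm x)/(2\sigma_{f,\nu}^2)$.

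For the \ourmethodexp term I would work directly with the closed forms. Fix $\bm z$, substitute \cref{eq:approximated_psi_entropy} and \cref{eq:averaged_post_measurement_entropy} into \cref{eq:approx_exploration_mutual_info}, and abbreviate $a \coloneqq \sigma_{s,n}^2(\bm x)$, $v \coloneqq \sigma_{s,\nu}^2$, $\rho^2 \coloneqq \rho_n^2(\bm x,\bm z) \in [0,1]$ and $m \coloneqq (\mu^s_n(\bm z)/\sigma^s_n(\bm z))^2 \ge 0$, so that $\hat I_n(\{\bm x,y\};\cpsi(\bm z)) = \ln(2)\, g(m)$ with $g(m) = e^{-c_1 m} - C\, e^{-c_1 t m}$, $C \coloneqq \sqrt{(v+a(1-\rho^2))/(v+a(1+c_2\rho^2))}$ and $t \coloneqq (v+a)/(v+a(1+c_2\rho^2))$. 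Using $c_1 = 1/(\pi\ln 2) \in (0,\tfrac12)$ and $c_2 = 2c_1 - 1 \in (-1,0)$, I would first record that $0 < C \le 1$ and $t \ge 1$, with equality in either only when $\rho = 0$ (in which case $\hat I_n = 0$). Then I maximise $g$ over $m \ge 0$: if $tC \le 1$ it is nonincreasing, so $\sup g = g(0) = 1-C$; if $tC > 1$ the unique stationary point $m^\star = \ln(tC)/(c_1(t-1))$ is the maximiser, with $g(m^\star) = (tC)^{-1/(t-1)}(1 - 1/t) \le 1 - 1/t$.

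It then remains to bound $\ln(2)(1-C)$ and $\ln(2)(1-1/t)$ by $a/v$. For the first, $1 - C \le 1 - C^2 = a\rho^2(1+c_2)/(v+a(1+c_2\rho^2)) \le 2c_1 a/v$, using $1 + c_2\rho^2 \ge 0$ to lower-bound the denominator by $v$ and $1+c_2 = 2c_1$, so $\ln(2)(1-C) \le (2/\pi)\, a/v$. For the second, $1/t = 1 + ac_2\rho^2/(v+a)$ gives $1-1/t = a|c_2|\rho^2/(v+a) \le a|c_2|/v$, so $\ln(2)(1-1/t) \le (\ln 2 - 2/\pi)\, a/v$. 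Both $2/\pi$ and $\ln 2 - 2/\pi$ are below $1$, hence $\hat I_n(\{\bm x,y\};\cpsi(\bm z)) \le \sigma_{s,n}^2(\bm x)/\sigma_{s,\nu}^2$ for every $\bm z$; taking the supremum over $\bm z$ gives $\alpha^{\ourmethodexp}(\bm x) \le \sigma_{s,n}^2(\bm x)/\sigma_{s,\nu}^2$. Taking the maximum of the two bounds finishes the proof.

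The \mes half is a one-line consequence of the data processing inequality. The real work is in the \ourmethodexp half: one has to perform the single-variable maximisation of $g$ over all $m \ge 0$ and verify that the bound holds \emph{uniformly} in $m$, $\rho^2$, $a$ and $v$. Because $c_2 < 0$, the maximiser of $g$ sits at $m = 0$ in one regime and at an interior point $m^\star > 0$ in the other, so inspecting only the symmetric case $\mu^s_n(\bm z) = 0$ is not enough — both regimes must be handled and each bounded by a multiple of the posterior variance.
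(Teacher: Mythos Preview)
Your proof is correct. The \mes half is essentially identical to the paper's: the paper expands $I\bigl(\bm y_D;(f,f^*)\bigr)$ via the chain rule and uses $I(\bm y_D;f^*\mid f)=0$, which is exactly the data processing inequality for the Markov chain you write down, followed by the same $\tfrac12\ln(1+t)\le t/2$ step.

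The \ourmethodexp half differs in organisation. The paper first proves a separate monotonicity lemma showing that $\hat I_n(\{\bm x,y\};\cpsi(\bm z))$ is \emph{decreasing} in $m=(\mu_n^s(\bm z)/\sigma_n^s(\bm z))^2$ for every $\bm x,\bm z$; in your notation this amounts to showing $tC\le 1$ always, i.e.\ $(1-\tilde\rho^2)\le (1+c_2\tilde\rho^2)^3$ with $\tilde\rho^2=a\rho^2/(v+a)$, which holds for the specific value $c_2=2/(\pi\ln 2)-1$. Hence the paper only has to bound $g(0)=1-C$, and does so by the chain $1-C\le 1-\sqrt{1-\tilde\rho^2}\le\tilde\rho^2\le a/(v+a)\le a/v$, arriving at $\ln(2)\,a/v$. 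Your direct optimisation of $g$ over $m$ is valid and self-contained, but the regime $tC>1$ you treat is in fact vacuous, so your closing remark that ``inspecting only $\mu_n^s(\bm z)=0$ is not enough'' is the one inaccurate statement: with the monotonicity lemma in hand, $m=0$ \emph{is} enough. What your route buys is independence from that auxiliary lemma (and, incidentally, the slightly sharper constant $2/\pi$ in place of $\ln 2$); what the paper's route buys is a shorter bound once monotonicity is available, plus a reusable structural fact about $\hat I_n$.
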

\begin{proof}
See \cref{appendix:ise_proofs}.
\end{proof}

\cref{fig:acquisition_functions_example} also serves as a further intuition for why the single components alone would ultimately lead to an inefficient safe optimization behavior: the exploration part keeps sampling close to the border of $S_n$, until the safe set cannot be expanded further, and then it tries to reduce the uncertainty on the border to very low values, before turning its attention to the inside of the safe set. On the other hand, the exploration-exploitation component within $S_n$ drives the expansion of the safe set only until it is still plausible that the optimum is in the vicinity of the boundary of $S_n$, so that an observation in that area can give us information about the optimum value. As soon as that is not the case anymore, as in \cref{fig:acquisition_example_2}, this component of the acquisition function would keep focusing on the inside of $S_n$ for a large number of iterations, effectively stopping the expansion of the safe set, even in the case that the current safe set is not yet the largest one and it may not yet contain the true safe optimum $f^*_\varepsilon$.

Finally, in \cref{fig:algo_Example} we report an example run of \cref{alg:algorithm}. This figure shows how the acquisition function \cref{eq:combined_acqusisition_1} alternates between evaluating parameters that lead to an expansion of the safe set and parameters that give information about the safe optimum, until it confidently identifies $f^*_\varepsilon$.
In the next section, we analyze the theoretical convergence properties of the acquisition function \cref{eq:combined_acqusisition_1} and show that it does indeed allow us to eventually learn about the safe optimum $f^*_\varepsilon$ with high probability.

\begin{figure}[t] 
    \centering
    \includegraphics[width=\textwidth]{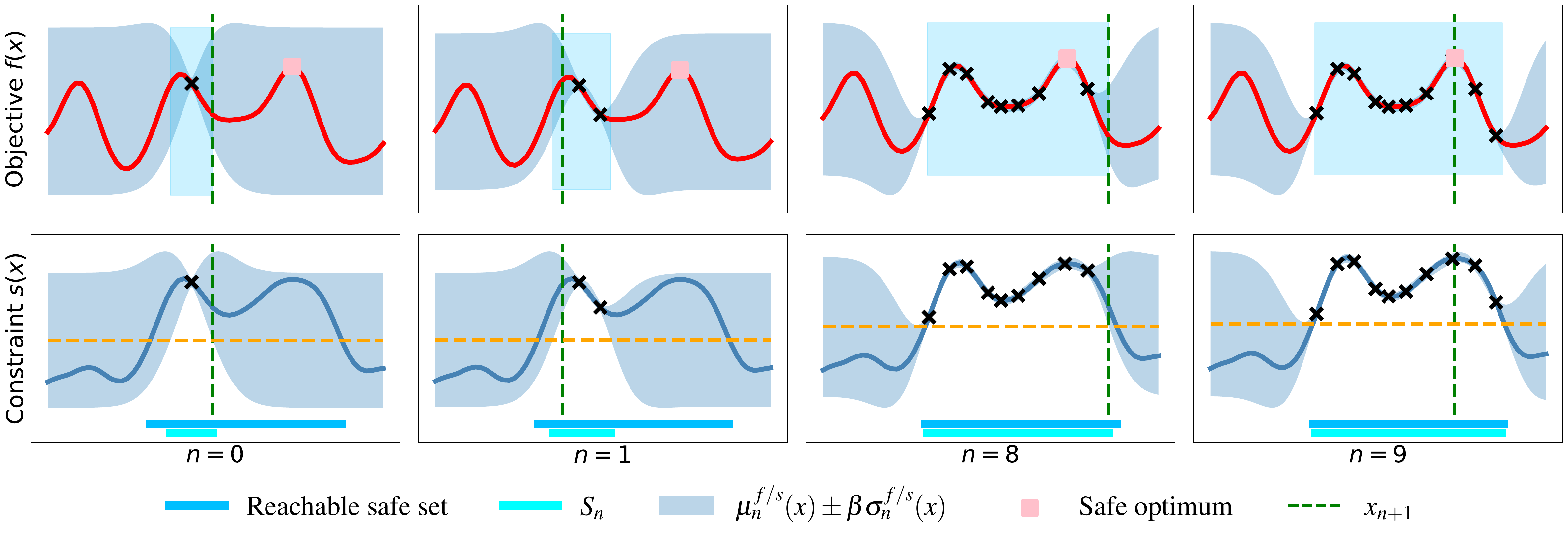}
    \caption{1D example that illustrates \cref{alg:algorithm}. The top row shows the objective function $f$ with the corresponding posterior GP confidence interval at various iterations, while the bottom row does the same for the safety constraint $s$. The dashed orange line represents the safety threshold, while the black crosses mark the observed values $y_n^{f/s}$ at the evaluated parameters $\bm x_n$. We see how the the acquisition function \cref{eq:combined_acqusisition_1} selects the next parameter $\bm x_{n+1}$ (green dashed line) alternating between parameters that are likely to expand the safe set ($n=0$, $n=8$) and parameters that are  informative about the current safe optimum ($n=1$, $n=9$), until it is able to identify the reachable safe optimum (pink square).}
    \label{fig:algo_Example}
    \hfill
\end{figure}

\section{Theoretical Results}\label{sec:ise_theory}

In \cref{subsec:largest_reachable_safe_set}, we provided an intuition for what we defined as the largest reachable safe set $S_\varepsilon(\bm x_0)$, which defines the safe optimization objective $f^*_\varepsilon$, and in \cref{sec:safe_optimization_algorithm} we introduced the selection criterion \cref{eq:combined_acqusisition_1} for our proposed safe BO algorithm, which \cref{alg:algorithm} summarizes.
In this section, we first present a formal definition of the set $S_\varepsilon(\bm x_0)$, in \cref{subsec:definition_of_safe_set}, while in \cref{subsec:convergence_results} we analyze the properties of the selection criterion \cref{eq:combined_acqusisition_1}, in order to show that it leads to the discovery of the largest reachable safe set $S_\varepsilon(\bm x_0)$ and of the optimum value of $f$ within this set.

\subsection{Definition of \texorpdfstring{$S_\varepsilon(\bm x_0)$}{TEXT}}\label{subsec:definition_of_safe_set}

As we explained in \cref{subsec:largest_reachable_safe_set}, the set $S_\varepsilon(\bm x_0)$ is the set of all parameters that are safe with high probability and that are safely reachable from $\bm x_0$ by all well behaved GPs. In order to formally explain what we mean by it, we first need some preliminary definitions. In the following, we refer to a GP $\sim \mathcal{N}(\mu_0, k)$ conditioned on some observation data $\mathcal{D} = \{(\bm x_n, y_n)\}$ as GP$_\mathcal{D}$. 
Using this notation, we can define the set of all well behaved GPs that have small uncertainty over a given set $\Omega$:
\begin{definition}[$\bm \beta$-GP$_s^\varepsilon(\Omega)$]\label{def:k_gp_beta_omega}
Given a kernel $k$, a function $s$, and ${\bm \beta \coloneqq \{\beta_n\}}$ as in \cref{lemma:safe_set_is_safe_with_high_probability}, let $\Omega$ be a subset of the domain and $\varepsilon > 0$. We define the set $\bm \beta$-GP$_s^\varepsilon(\Omega)$ as the one containing all the well behaved GPs -- i.e.\@ those GPs whose $\beta$-confidence interval contains the true value of $s$ -- that have an uncertainty over $\Omega$ smaller than $\varepsilon$:
\begin{equation}\label{eq:k_gp_beta_omega_condition}
\begin{split}
\bm \beta\text{-GP}_s^\varepsilon(\Omega) = \big\{\text{GP}_\mathcal{D}: &~s(\bm x) \in [\mu_n(\bm x) - \beta_n\sigma_n(\bm x), \mu_n(\bm x) + \beta_n\sigma_n(\bm x)] \forall \bm x \in \mathcal{X}, \forall n \leq |\mathcal{D}|\big. \\ 
\big . &\wedge \beta_{|\mathcal{D}|}\sigma_{|\mathcal{D}|}(\bm x) \leq \varepsilon ~ \forall \bm x \in \Omega \big\}.
\end{split}
\end{equation}
\end{definition}
As a consequence of the choice of the sequence $\left\{\beta_n\right\}$, we know that every GP$_\mathcal{D}$ is well behaved with probability at least $1 - \delta$, so that \cref{def:k_gp_beta_omega} is not trivial (see \cref{lemma:in_beta_gp_with_high_prob} in \cref{appendix:ise_proofs} for more details).

Now that we have a notion of well behaved GPs, we need to define what it means for a region of the domain to be safely reachable by all such GPs. Intuitively, a set $\mathcal{Y} \subseteq \mathcal{X}$ is safely reachable by a GP if, starting from the safe seed $\bm x_0$ and progressively reducing the posterior uncertainty over the safe set, eventually $\mathcal{Y}$ is classified as safe.
In order to formalize this intuition, we first define a one step expansion operator, that we can then recursively apply to the safe seed to obtain the largest reachable safe set.
\begin{definition}[Expansion operator $R_\varepsilon$]\label{def:expansion_operator}
Given a dataset $\mathcal{D}$ and a GP conditioned on it, GP$_{\mathcal{D}}$, we call $S_{\text{GP}_\mathcal{D}}$ the safe set associated to such GP posterior, as prescribed by \cref{eq:safe_set_definition}. With this notation, given a safe set $S$ and $\varepsilon > 0$, we define the expansion operator $R_\varepsilon$ as
\begin{equation}\label{eq:expansion_operator}
R_\varepsilon(S) = \big\{\bm x \in \mathcal{X}: \bm x \in S_{\text{GP}_\mathcal{D}} \text{ for all GP } \in \bm \beta\text{-GP}_s^\varepsilon(S) \big\},
\end{equation}
so that $R_\varepsilon(S)$ would be the set of all parameters that are classified as safe, according to \cref{eq:safe_set_definition}, by all well behaved GPs that have an uncertainty of at most $\varepsilon$ over $S$.
\end{definition} 
With the expansion operator $R_\varepsilon$ at hand, we can finally define the largest reachable safe set, as the set that we obtain by recursively applying $R_\varepsilon$ to the safe seed $\bm x_0$:
 
\begin{definition}[Largest reachable safe set $S_\varepsilon(\bm x_0)$]\label{def:largest_safe_set}
Given the expansion operator $R_\varepsilon$ as defined in \cref{def:expansion_operator}, we define the largest reachable safe set starting from the safe seed $\bm x_0$ as the set $S_\varepsilon(\bm x_0)$, obtained by recursively applying $R_\varepsilon$ to $\bm x_0$ an infinite number of times:
\begin{equation}\label{eq:s_epsilon_x0_def}
S_\varepsilon(\bm x_0) \coloneqq \lim_{n\to\infty} \underbrace{R_\varepsilon\big(R_\varepsilon \big(\dots \big(R_\varepsilon}_{n}(\bm x_0)\big).
\end{equation}
\end{definition}
The set $S_\varepsilon(\bm x_0)$ as defined in \cref{def:largest_safe_set}, is, therefore, the largest set that all well behaved GPs can classify as safe starting from $\bm x_0$, when the posterior uncertainty over the safe set at each iteration can be at most of $\varepsilon$.

\subsection{Convergence results}\label{subsec:convergence_results}

In \cref{subsec:definition_of_safe_set}, we formally defined the largest reachable safe set $S_\varepsilon(\bm x_0)$. Here we show that the selection criterion \cref{eq:combined_acqusisition_1} leads to the discovery of the optimum within this set, $f_\varepsilon^*$.
Although \cref{alg:algorithm} can be applied both to discrete and continuous domains alike, in the following analysis we will restrict ourselves to a discrete and finite domain: $|\mathcal{X}| = N < \infty$. All proofs for the results presented in this section can be found in \cref{appendix:ise_proofs}.

As a first result, we show that by sampling only according to $\alpha^{\ourmethodexp}$, i.e., $\bm x_{n+1} \in \argmax_{\bm x \in S_n} \alpha^{\ourmethodexp}(\bm x)$, we eventually classify as safe the whole $S_\varepsilon(\bm x_0)$.
To this end, we introduce the set $\bar{S}(s) \coloneqq \left\{\bm x \in \mathcal{X} : s(\bm x) \geq 0\right\}$, which is the true safe set of the constraint $s$. With it, we can present the following result:
\begin{restatable}{theorem}{ThmExplorationConvergence}
\label{thm:exploration_convergence}
Let the domain $\mathcal{X}$ be discrete and of size $D$: $|\mathcal{X}| = D < \infty$. Assume, moreover, that $\bm{x}_{n+1}$ is chosen according to $\bm x_{n+1} \in \argmax_{\bm x \in S_n} \alpha^{\ourmethodexp}(\bm x)$. Then, if we define $N_S$ as the size of the true safe set of $s$, $N_S = |\bar{S}(s)|$, it holds that for all $\varepsilon > 0$ there exists $N_\varepsilon$ such that, with probability of at least $1 - \delta$, $S_n \supseteq S_\varepsilon(\bm x_0)$ for all $n \geq N_S N_\varepsilon$. 

%
\begin{equation}\label{eq:N_epsilon}
N_\varepsilon = \min\left\{N \in \mathbb{N} : \beta_N \eta^{-1}\left(\frac{C\gamma_N}{N}\right) \leq \varepsilon\right\},
\end{equation}
where $\eta(x) \coloneqq \ln(2)\exp\left\{-c_1\frac{M^2}{x}\right\}\left[1 - \sqrt{\frac{\sigma_\nu^2}{2c_1x + \sigma_\nu^2}}\right]$, $\gamma_N$ is the maximum information capacity of the chosen kernel, and $C = \ln(2) / \sigma_\nu^2\ln(1 + \sigma_\nu^{-2})$.
\end{restatable}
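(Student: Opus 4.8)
The plan is to show that sampling according to $\alpha^{\ourmethodexp}$ forces the posterior uncertainty over the current safe set to shrink below $\varepsilon$ after enough iterations, at which point the safe set must have grown by at least one application of $R_\varepsilon$ — so that after $N_S$ such "rounds" the safe set contains all of $S_\varepsilon(\bm x_0)$. The key quantitative engine is a standard information-capacity argument: since $\sum_{n} \sigma_n^2(\bm x_n) \le C' \gamma_N$ for a kernel-dependent constant, if we always sample the maximizer of $\alpha^{\ourmethodexp}$ and the acquisition value stays bounded below, the total information gain would exceed $\gamma_N$, a contradiction. So I will first establish a lower bound relating $\alpha^{\ourmethodexp}(\bm x_n)$ to $\sigma_n^s(\bm x_n)$ (the reverse direction of \cref{lemma:mutual_infos_decrease_with_sigma}), isolating the function $\eta$ that appears in \cref{eq:N_epsilon}: roughly, $\alpha^{\ourmethodexp}(\bm x) \ge \eta(\sigma_n^2(\bm x))$ whenever there remains a $\bm z$ with $|\mu_n^s(\bm z)| \le M$ and nontrivial posterior variance. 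Here $M$ should be an \emph{a priori} bound on $|s|$ on the domain (finite since $\|s\|_{\mathcal H_k} = B$, $k \le 1$), guaranteeing that as long as $S_\varepsilon(\bm x_0) \not\subseteq S_n$ there is an informative target $\bm z$.

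Next I would run the contradiction: suppose that for $N$ consecutive iterations the safe set does not expand, i.e. $S_{n} = S$ fixed, while $S \not\supseteq R_\varepsilon(S)$. By \cref{def:expansion_operator}, not expanding means the current GP posterior is \emph{not} in $\bm\beta\text{-GP}_s^\varepsilon(S)$ — but conditioned on the high-probability event of \cref{lemma:safe_set_is_safe_with_high_probability} the GP is well behaved, so the only way to fail membership is $\beta_n \sigma_n^s(\bm x) > \varepsilon$ for some $\bm x \in S$. Sampling $\alpha^{\ourmethodexp}$ then keeps picking points inside $S$ where the mutual information — hence, via the $\eta$ lower bound, the posterior variance — is large. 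Summing the per-step information gain over these $N$ steps gives $N \cdot \eta^{-1}$-type lower bound on $\sum \sigma_n^2$, which must be $\le C\gamma_N$; solving for when this is impossible yields exactly $N_\varepsilon$ as in \cref{eq:N_epsilon}. Concretely, after at most $N_\varepsilon$ iterations within a stalled phase we must have $\beta_n\sigma_n^s(\bm x) \le \varepsilon$ for all $\bm x \in S$, i.e. GP$_{\mathcal D_n} \in \bm\beta\text{-GP}_s^\varepsilon(S)$, so $S_n \supseteq R_\varepsilon(S) \supsetneq S$ — the set has expanded.

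Finally I would chain this: the safe set is monotone nondecreasing and bounded by $\bar S(s)$ of size $N_S$, so it can expand at most $N_S$ times; each expansion costs at most $N_\varepsilon$ iterations; hence by iteration $N_S N_\varepsilon$ no further expansion is \emph{needed}, meaning $R_\varepsilon$ applied to $S_{N_S N_\varepsilon}$ adds nothing, and by the fixed-point characterization of $S_\varepsilon(\bm x_0)$ in \cref{def:largest_safe_set} we get $S_{N_S N_\varepsilon} \supseteq S_\varepsilon(\bm x_0)$. The whole argument runs on the $1-\delta$ event that all GP confidence intervals hold (\cref{lemma:safe_set_is_safe_with_high_probability}), which is where the probabilistic qualifier comes from.

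The main obstacle I anticipate is the first step: carefully extracting the lower bound $\alpha^{\ourmethodexp}(\bm x) \ge \eta(\sigma_n^2(\bm x))$ with the precise constants $c_1, M$ from the closed form \cref{eq:averaged_post_measurement_entropy}, including handling the correlation coefficient $\rho_n(\bm x,\bm z)$ (the bound must be worst-case over admissible $\rho$, which is what produces the $\sqrt{\sigma_\nu^2/(2c_1 x + \sigma_\nu^2)}$ term) and verifying $\eta$ is invertible on the relevant range. A secondary subtlety is making the "stalled phase" bookkeeping rigorous — ensuring that the counter $N_\varepsilon$ resets appropriately across the $N_S$ expansion events and that the target $\bm z$ witnessing non-negligible mutual information genuinely exists whenever $S_n$ is not yet maximal, which relies on the geometry of $R_\varepsilon$ and the uniform bound $M$ on $|s|$.
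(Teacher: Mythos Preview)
Your overall architecture matches the paper's: an upper bound $\alpha^{\ourmethodexp}(\bm x_{n+1}) \le C_0\,\sigma_n^2(\bm x_{n+1})$, a lower bound $\alpha^{\ourmethodexp}(\bm x_{n+1}) \ge \eta(\tilde\sigma_n^2)$ with $\tilde\sigma_n^2 = \max_{\bm x\in S_n}\sigma_n^2(\bm x)$, then the information-capacity summation $\sum \sigma_n^2 \le C'\gamma_N$, and finally chaining over at most $N_S$ expansion events. The paper packages the first three steps as a generic lemma about any acquisition satisfying those two bounds, and then proves $\alpha^{\ourmethodexp}$ satisfies them; the chaining is exactly your $N_S N_\varepsilon$ argument.

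There is, however, a real gap in how you plan to obtain the lower bound. You propose to exhibit a target $\bm z$ (implicitly outside $S_n$, witnessing that $S_\varepsilon(\bm x_0)\not\subseteq S_n$) with $|\mu_n^s(\bm z)|\le M$, and then take the \emph{worst case} over the correlation $\rho_n(\bm x,\bm z)$. That will not work: when $\rho_n(\bm x,\bm z)\to 0$ the bracketed factor in $\hat I_n$ collapses to $0$, so a worst-case-in-$\rho$ bound is vacuous. In particular, the $\sqrt{\sigma_\nu^2/(2c_1x+\sigma_\nu^2)}$ term you are trying to produce does \emph{not} arise from worst-case $\rho$; it arises from $\rho_n=1$. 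The paper's trick is to take $\bm z=\bm x=\tilde{\bm x}$, the maximum-variance point \emph{inside} $S_n$. Since $\bm z$ ranges over all of $\mathcal X$, the choice $\bm z=\tilde{\bm x}$ is admissible, gives $\rho_n(\tilde{\bm x},\tilde{\bm x})=1$, and $\Delta\hat H_n(\tilde{\bm x},\tilde{\bm x})$ can be computed in closed form; substituting $\rho_\nu^2(\tilde{\bm x})=\tilde\sigma^2/(\tilde\sigma^2+\sigma_\nu^2)$ is exactly what yields the $\sqrt{\sigma_\nu^2/(2c_1\tilde\sigma^2+\sigma_\nu^2)}$ factor.

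Relatedly, $M$ is not an a priori bound on $|s|$ but a bound on the \emph{posterior mean} $|\mu_n^s(\tilde{\bm x})|$, since that is what appears in $\Delta\hat H_n(\tilde{\bm x},\tilde{\bm x})$. The paper bounds $|\mu_n^s|\le 2\beta_n$ on the high-probability event (zero prior mean, $k\le 1$, and two applications of the confidence interval). With this correction, your ``secondary subtlety'' about the existence of an informative target $\bm z$ whenever $S_n$ is not yet maximal disappears entirely: the witness is always the max-variance point of $S_n$ itself, with no appeal to the geometry of $R_\varepsilon$ needed at that step.
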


\cref{thm:exploration_convergence} shows that $\alpha^{\ourmethodexp}$ is indeed a good choice for an acquisition function that promotes the expansion of the safe set as much as possible. This property can be combined with the results by \citet{wang_max-value_2017}, who showed that sampling according to $\alpha^{\mes}$ in a fixed domain leads to vanishing regret with high probability, meaning that if we first sample according to $\alpha^{\ourmethodexp}$ until $S_{\hat{n}} \supseteq S_\varepsilon(\bm x_0)$, and then, for $n > \hat{n}$, we sample according to $\alpha^{\mes}$ in $S_{\hat{n}}$, the simple regret with respect to $f_\varepsilon^*$ will decay to zero with high probability.

On the other hand, in general it is not necessary to learn about the whole $S_\varepsilon(\bm x_0)$ in order to uncover the location of the safe optimum, and it will also often be the case that the optimum is not close to the boundary of $S_\varepsilon(\bm x_0)$, so that, in principle, it is not necessary to learn about the whole reachable safe set in order to find it. In such situations, it could be a waste of resources to first learn about the whole $S_\varepsilon(\bm x_0)$ and only then turning the attention to $f_\varepsilon^*$. Rather, as proposed in \cref{eq:combined_acqusisition_1}, one can optimize both acquisition components at the same time. \cref{thm:combined_convergence} offers some convergence guarantees for this scenario, for a particular approximation of \cref{eq:mes_acquisition_expression}. 

Before we can prove \cref{thm:combined_convergence}, however, we first need the following Lemma, which presents a minor twist to a result derived by \citet{wang_max-value_2017} and suggests an equivalent acquisition function to $\alpha^{\mes}$, when we only use one sample to compute the average in \cref{eq:mes_acquisition_expression}.

\begin{restatable}{lemma}{LemmaMesEquivalentAcquisition}
\label{lemma:mes_equivalent_acquisition}
Let $\alpha^{\mes}_{y^*}$ be the mutual information \cref{eq:mes_acquisition_expression} where the average is computed via Monte Carlo with only one sample of $f_{S_n}^*$ of value equal to $y^*$. Then maximizing $\alpha_{y^*}^{\mes}$ is equivalent to maximizing $\hat{\alpha}_{y^*}^{\mes} \coloneqq \sigma_n^2(\bm x) \left(y^* - \mu_n(\bm x)\right)^{-2}$, in the sense that $\argmax_{\bm x} \alpha_{y^*}^{\mes}(\bm x) = \argmax_{\bm x} \hat{\alpha}_{y^*}^{\mes}(\bm x)$.
\end{restatable}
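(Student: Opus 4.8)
The plan is to start from the closed-form expression \eqref{eq:mes_acquisition_expression} for the MES mutual information, specialized to a single Monte Carlo sample $y^*$ of $f^*_{S_n}$, so that the expectation over $f^*$ disappears and we are left with the scalar function
\begin{equation}
\alpha^{\mes}_{y^*}(\bm x) = \frac{\theta(\bm x)\,\psi(\theta(\bm x))}{2\Psi(\theta(\bm x))} - \ln\big(\Psi(\theta(\bm x))\big), \qquad \theta(\bm x) = \frac{y^* - \mu_n(\bm x)}{\sigma_n(\bm x)}.
\end{equation}
I would then define the auxiliary one-variable function $g(\theta) \coloneqq \tfrac12 \theta\,\psi(\theta)/\Psi(\theta) - \ln \Psi(\theta)$ and argue that $\alpha^{\mes}_{y^*}(\bm x)$ is a monotone function of $\hat\alpha^{\mes}_{y^*}(\bm x) = \sigma_n^2(\bm x)(y^*-\mu_n(\bm x))^{-2} = \theta(\bm x)^{-2}$; since $\theta\mapsto\theta^{-2}$ is even and monotone decreasing in $|\theta|$, and $g$ is even, it suffices to show $g$ is monotone in $|\theta|$. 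Concretely, the argmax claim follows if $g(\theta)$ is strictly decreasing in $|\theta|$ on $[0,\infty)$, because then $\argmax_{\bm x} g(\theta(\bm x))$ is exactly the set of $\bm x$ minimizing $|\theta(\bm x)|$, which is the same as the set maximizing $\theta(\bm x)^{-2}=\hat\alpha^{\mes}_{y^*}(\bm x)$.

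The key step is therefore a monotonicity statement about $g$. I would compute $g'(\theta)$ and show it has the sign of $-\theta$. Writing $r(\theta) \coloneqq \psi(\theta)/\Psi(\theta)$ for the (negative of the) reciprocal Mills-type ratio, one uses $\psi'(\theta) = -\theta\psi(\theta)$ and $\Psi'(\theta)=\psi(\theta)$ to get $r'(\theta) = -r(\theta)(\theta + r(\theta))$, and after simplification
\begin{equation}
g'(\theta) = \tfrac12\big(r(\theta) + \theta r'(\theta)\big) - r(\theta) = -\tfrac12 r(\theta) - \tfrac12\theta r(\theta)\big(\theta + r(\theta)\big) = -\tfrac12 r(\theta)\Big(1 + \theta^2 + \theta r(\theta)\Big).
\end{equation}
Since $r(\theta) = \psi(\theta)/\Psi(\theta) > 0$ for all real $\theta$, it remains to show the bracket $1 + \theta^2 + \theta r(\theta)$ is positive. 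For $\theta \geq 0$ this is immediate since all three terms are nonnegative. For $\theta < 0$, write $\theta r(\theta) = -|\theta| r(\theta)$ and use the standard Gaussian tail bound $\Psi(\theta) \geq \psi(\theta)/(\sqrt{\theta^2+4}+|\theta|)$ (equivalently the lower bound on the Mills ratio) — actually more simply the bound $r(-t) = \psi(t)/\Psi(-t) \le t + \sqrt{t^2+4}\,$ hmm; cleaner is: one shows $\theta r(\theta) > -(1+\theta^2)$, i.e. $|\theta|\,\psi(\theta)/\Psi(\theta) < 1+\theta^2$ for $\theta<0$, which follows from the well-known inequality $\Psi(-t) > \psi(t)\, t/(1+t^2)$ for $t>0$ (the lower Mills-ratio bound). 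Hence $g'(\theta) < 0$ for $\theta>0$, $g'(\theta)>0$ for $\theta<0$, and $g'(0)=0$, so $g$ is strictly decreasing in $|\theta|$. This is exactly the property \citet{wang_max-value_2017} use; the "minor twist" is just reorganizing it into the equivalent acquisition $\hat\alpha^{\mes}_{y^*}$.

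The main obstacle I anticipate is the positivity of the bracket $1+\theta^2+\theta r(\theta)$ for large negative $\theta$, where $r(\theta)\to +\infty$: one must control the growth rate of the ratio $\psi/\Psi$ against the quadratic $1+\theta^2$, which requires invoking the right Gaussian tail inequality (the lower bound on the Mills ratio $\Psi(-t)/\psi(t) > t/(1+t^2)$) rather than a crude estimate. Everything else — the derivative computation for $g$, the evenness arguments, and the translation of "minimize $|\theta|$" into "maximize $\theta^{-2}$" — is routine. Once monotonicity of $g$ in $|\theta|$ is in hand, the equality of argmax sets is immediate and the lemma follows.
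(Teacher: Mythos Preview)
Your derivative computation is correct, but your conclusions from it are not. The function $g(\theta)=\tfrac{1}{2}\theta\,r(\theta)-\ln\Psi(\theta)$ is \emph{not} even: since $\Psi(-\theta)=1-\Psi(\theta)$, one has for instance $g(1)\approx 0.32$ while $g(-1)\approx 1.08$. In fact, your own expression $g'(\theta)=-\tfrac{1}{2}r(\theta)\big(1+\theta^2+\theta r(\theta)\big)$, together with $r(\theta)>0$ and the Mills-ratio bound you invoke to make the bracket positive, yields $g'(\theta)<0$ for \emph{all} real $\theta$ (in particular $g'(0)=-\tfrac{1}{2}\sqrt{2/\pi}\neq 0$). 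So $g$ is strictly decreasing on the whole real line, not ``decreasing in $|\theta|$'', and your stated conclusion $g'(\theta)>0$ for $\theta<0$ contradicts your computation.

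The consequence is that maximizing $g(\theta(\bm x))$ is equivalent to \emph{minimizing $\theta(\bm x)$}, not to minimizing $|\theta(\bm x)|$. This is exactly what the paper uses (citing \citet{wang_max-value_2017}, Lemma~3.1). The remaining step ``minimizing $\theta$ is the same as maximizing $\theta^{-2}$'' is only valid when $\theta(\bm x)>0$ throughout the feasible set, i.e., when $y^*>\mu_n(\bm x)$ for all candidate $\bm x$; this is the regime in which MES samples of the maximum are used, and it is made explicit in the hypothesis of \cref{thm:combined_convergence} (where $\phi>\mu_n(\bm x)$). Your attempt to avoid that positivity assumption via an evenness argument does not go through; once you drop the false evenness claim and add the assumption $\theta>0$, the rest of your argument (the derivative formula and the Mills-ratio bound) does establish the needed monotonicity and the lemma follows.
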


With the equivalent approximation $\hat{\alpha}_{y^*}^{\mes}$ to the $\alpha^{\mes}$ acquisition function, we can now show the following result:

\begin{restatable}{theorem}{ThmCombinedConvergence}
\label{thm:combined_convergence}
Let the domain $\mathcal{X}$ be discrete and of dimension $D$: $|\mathcal{X}| = D < \infty$. Assume, moreover, that $\bm{x}_{n+1}$ is chosen according to $\bm x_{n+1} \in \argmax~\max\left\{\alpha^{\ourmethodexp}(\bm x), \hat{\alpha}_\phi^{\mes}(\bm x)\right\}$, with $\phi \in \mathbb{R}$ such that $\phi > \mu_n(\bm x)$ for all $\bm x \in S_n$ and for all $n$. Then, if we define $N_S$ as the size of the true safe set of $f$, $N_S = |\bar{S}(s)|$, and as $\bm x_\varepsilon^*$ the safe $\argmax$, it holds that for all $\varepsilon > 0$ there exists $N_\varepsilon$ such that, with probability of at least $1 - \delta$, $\bm x_\varepsilon^* \in S_n$ for all $n \geq N_S N_\varepsilon$, and $\sigma_n(\bm x_\varepsilon^*) \leq \varepsilon$ for all $n \geq (N_S + 1) N_\varepsilon$. The smallest of such $N_\varepsilon$ is given by
\begin{equation}\label{eq:N_epsilon_combined}
N_\varepsilon = \min\left\{N \in \mathbb{N} : \beta_N b^{-1}\left(\frac{C\gamma_N}{N}\right) \leq \varepsilon\right\},
\end{equation}
where $b(x) \coloneqq \min\left\{\eta(x), \frac{x}{\phi}\right\}$, with $\eta$ as given in \cref{thm:exploration_convergence}, $\gamma_N = \max_{D \subset \mathcal{X}; |D| = N}I\left(\bm f(D); \bm y(D)\right)$ is the maximum information capacity of the chosen kernel, and $C = \max\left\{\frac{\ln(2)}{\sigma_\nu^2}, \frac{1}{\phi - 2\beta}\right\}$.
\end{restatable}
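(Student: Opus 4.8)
The plan is to run the round-based scheme of Theorem~\ref{thm:exploration_convergence}, but now tracking simultaneously the expansion of $S_n$ and the posterior uncertainty at the safe maximiser $\bm x_\varepsilon^*$, using Lemma~\ref{lemma:mes_equivalent_acquisition} to replace $\alpha^{\mes}$ by the tractable surrogate $\hat\alpha^{\mes}_\phi(\bm x)=(\sigma^f_n(\bm x))^2(\phi-\mu^f_n(\bm x))^{-2}$. The backbone is a windowed smallness statement. Since $\bm x_{n+1}$ maximises $\max\{\alpha^{\ourmethodexp}(\cdot),\hat\alpha^{\mes}_\phi(\cdot)\}$ over $S_n$, for every $\bm x\in S_n$ both $\alpha^{\ourmethodexp}(\bm x)$ and $\hat\alpha^{\mes}_\phi(\bm x)$ are at most $\max\{\alpha^{\ourmethodexp}(\bm x_{n+1}),\hat\alpha^{\mes}_\phi(\bm x_{n+1})\}$; conversely, Lemma~\ref{lemma:mutual_infos_decrease_with_sigma} gives $\alpha^{\ourmethodexp}(\bm x_{n+1})\le(\sigma^s_n(\bm x_{n+1}))^2/\sigma_\nu^2$, while the hypothesis $\phi>\mu^f_n$ on $S_n$ together with the RKHS bound $|\mu^f_n(\bm x)|\le B+\beta_n$ (from $\|h\|_{\mathcal H_k}=B$, $k\le1$, and $|h-\mu_n|\le\beta_n\sigma_n$ on the probability-$(1-\delta)$ event) bounds $\hat\alpha^{\mes}_\phi(\bm x_{n+1})$ by a constant multiple of $(\sigma^f_n(\bm x_{n+1}))^2$. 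Hence the sampled acquisition value is at most a constant multiple of $\max\{(\sigma^s_n(\bm x_{n+1}))^2,(\sigma^f_n(\bm x_{n+1}))^2\}$, and summing over any window of $N$ consecutive iterations and using $\sum_t\sigma_{t-1}^2(\bm x_t)\le\frac{2}{\ln(1+\sigma_\nu^{-2})}\gamma_N$ — which stays valid on a window because conditioning on earlier observations only shrinks the posterior covariance, so the information gain of any $N$ later evaluations is still at most $\gamma_N$ — shows that some iteration $m$ in the window satisfies $\max\{\alpha^{\ourmethodexp}(\bm x_{m+1}),\hat\alpha^{\mes}_\phi(\bm x_{m+1})\}\le C\gamma_N/N$. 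By maximality of $\bm x_{m+1}$ this forces $\alpha^{\ourmethodexp}(\bm x)\le C\gamma_N/N$ \emph{and} $\hat\alpha^{\mes}_\phi(\bm x)\le C\gamma_N/N$ for \emph{all} $\bm x\in S_m$.

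Next I would convert these into geometric progress, taking $N=N_\varepsilon$ and using that $b^{-1}$ dominates both $\eta^{-1}$ and the inverse of the $x/\phi$ branch. For the first conclusion I reuse, from the proof of Theorem~\ref{thm:exploration_convergence}, the lower bound that evaluating $s$ at $\bm x$ is at least as informative about $\cpsi(\bm x)$ as $\eta(\sigma^s_m(\bm x))$, so $\alpha^{\ourmethodexp}(\bm x)\ge\eta(\sigma^s_m(\bm x))$, and smallness of $\alpha^{\ourmethodexp}$ gives $\beta_{N_\varepsilon}\sigma^s_m(\bm x)\le\beta_{N_\varepsilon}\eta^{-1}(C\gamma_{N_\varepsilon}/N_\varepsilon)\le\varepsilon$ for all $\bm x\in S_m$; this is precisely the condition in Definition~\ref{def:k_gp_beta_omega} placing the iterate-$m$ GP in $\bm\beta\text{-GP}^\varepsilon_s(S_m)$, so on the probability-$(1-\delta)$ event of Lemma~\ref{lemma:safe_set_is_safe_with_high_probability} we get $R_\varepsilon(S_m)\subseteq S_m$: the safe set has completed one full expansion step. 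For the second conclusion, $\hat\alpha^{\mes}_\phi(\bm x)\le C\gamma_{N_\varepsilon}/N_\varepsilon$ together with the lower bound of $\hat\alpha^{\mes}_\phi$ in terms of $(\sigma^f_m(\bm x))^2$ captured by the $x/\phi$ branch of $b$ yields $\sigma^f_m(\bm x)\le\varepsilon$ for all $\bm x\in S_m$. I then iterate over rounds of $N_\varepsilon$ iterations: the non-decreasing chain $\{\bm x_0\}\subseteq R_\varepsilon(\{\bm x_0\})\subseteq\cdots$ lies in $\bar S(s)$ and so reaches its fixed point $S_\varepsilon(\bm x_0)$ within $N_S=|\bar S(s)|$ expansion steps, hence $S_n\supseteq S_\varepsilon(\bm x_0)\ni\bm x_\varepsilon^*$ for all $n\ge N_SN_\varepsilon$ — the first claim. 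In the $(N_S+1)$-st round no further expansion step is possible, but the window statement still yields some $m\le(N_S+1)N_\varepsilon$ with $\hat\alpha^{\mes}_\phi(\bm x)\le C\gamma_{N_\varepsilon}/N_\varepsilon$ for all $\bm x\in S_m\supseteq S_\varepsilon(\bm x_0)\ni\bm x_\varepsilon^*$, hence $\sigma^f_m(\bm x_\varepsilon^*)\le\varepsilon$, and monotonicity of the posterior variance gives $\sigma_n(\bm x_\varepsilon^*)\le\varepsilon$ for all $n\ge(N_S+1)N_\varepsilon$ — the second claim.

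The main obstacle is making the two branches live under one threshold. One has to verify that the per-step and windowed bounds for the MES part are genuinely governed by the $x/\phi$ branch of $b$ (this is where $\phi>\mu^f_n$ on $S_n$, the resulting control of $\phi-\mu^f_n$ via $|\mu^f_n|\le2\beta$, and the choice $C=\max\{\ln 2/\sigma_\nu^2,\,1/(\phi-2\beta)\}$ enter), and that the value of $\beta_n$ appearing in the expansion argument of each round can be folded into $\beta_{N_\varepsilon}$, so that the single $N_\varepsilon=\min\{N:\beta_Nb^{-1}(C\gamma_N/N)\le\varepsilon\}$ simultaneously controls all $N_S$ expansion rounds and the final optimisation round. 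A secondary but important point is that iterations ``spent'' on $\hat\alpha^{\mes}_\phi$ never stall the expansion — this is automatic here, because the windowed bound is on the \emph{maximum} of the two acquisitions, so a small maximum forces $\alpha^{\ourmethodexp}$ small throughout $S_m$ irrespective of which component was actually being maximised. The remainder is a direct assembly of Theorem~\ref{thm:exploration_convergence}, Lemma~\ref{lemma:mes_equivalent_acquisition} and Lemma~\ref{lemma:mutual_infos_decrease_with_sigma}.
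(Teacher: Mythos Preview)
Your proposal is correct and follows essentially the same route as the paper. The paper's proof verifies that the combined acquisition $\max\{\alpha^{\ourmethodexp},\hat\alpha^{\mes}_\phi\}$ satisfies the two sandwich hypotheses of its generic Lemma~\ref{lemma:generic_form_vanishing_variance}: an upper bound $\alpha(\bm x_{n+1})\le C_0\sigma_n^2(\bm x_{n+1})$ with $C_0=C=\max\{\ln 2/\sigma_\nu^2,\,1/(\phi-2\beta)\}$ (via Lemma~\ref{lemma:delta_h_less_than_sigma} for the ISE branch and $|\mu_n|\le 2\beta$ from Lemma~\ref{lemma:mean_is_bounded} for the MES branch), and a lower bound $\alpha(\bm x_{n+1})\ge g(\tilde\sigma_n^2)$ with $g=b=\min\{\eta(\cdot),\cdot/\phi\}$ (via Lemma~\ref{lemma:delta_h_is_bounded_below} and the trivial MES lower bound). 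It then feeds these into the round-based machinery of Lemma~\ref{lemma:convergence_bound_safe_set_generic}. Your windowed-averaging argument unfolds precisely this machinery by hand: the upper sandwich gives summability against $\gamma_N$, the lower sandwich (applied pointwise) converts a small acquisition at the argmax into small posterior variance on all of $S_m$, and the $N_S$-round bookkeeping matches Lemma~\ref{lemma:convergence_bound_safe_set_generic}.

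One small remark: your windowed argument is in fact slightly tighter than the paper's Lemma~\ref{lemma:generic_form_vanishing_variance}, because averaging does not need the ``non-expanding domain'' hypothesis $\mathcal X_{n+1}\subseteq\mathcal X_n$ that the paper's monotonicity-of-$\tilde\sigma_n$ step requires. Indeed, once you obtain $R_\varepsilon(S_m)\subseteq S_m$ with $\bm x_0\in S_m$, monotonicity of $R_\varepsilon$ already gives $S_\varepsilon(\bm x_0)\subseteq S_m$ directly (this is exactly Lemma~\ref{lemma:safe_set_at_epsilon_includes_reachable_set}), so strictly speaking a single window would suffice for the first conclusion; your phrasing ``one full expansion step'' per window is more conservative than what your own estimate delivers, but it still proves the stated $N_S N_\varepsilon$ bound.
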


\cref{thm:combined_convergence} extends the result of \cref{thm:exploration_convergence}, in that it shows that, if we evaluate parameters according to the combined acquisition function \cref{eq:combined_acqusisition_1}, then not only we will learn about $S_\varepsilon(\bm x_0)$, but we will also reduce the uncertainty about the optimum within it, hence solving the safe optimization task.

Both \cref{thm:exploration_convergence,thm:combined_convergence} find convergence bounds on $n$, \cref{eq:N_epsilon,eq:N_epsilon_combined}, that depend on the behavior of $\gamma_n$. In \cref{appendix:ise_proofs}, we show that, in the discrete domain setting, \cref{eq:N_epsilon,eq:N_epsilon_combined} have a non trivial solution.

\section{Discussion and Limitations}\label{sec:discussion}

Since the \ourmethod objective is defined over a continuous domain, our proposed algorithm can deal with higher dimensional domains better than algorithms that rely on a discrete parameter space. However, to find $\bm x_{n+1}$ in \cref{eq:combined_acqusisition_1} we have to solve a non-convex optimization problem with twice the dimension of the the parameter space, which can become a computationally challenging problem as the dimension of the domain grows. 
In higher-dimensional settings, we follow \linebo by \citet{kirschner_adaptive_2019}, which at each iteration selects a random one-dimensional subspace to which it restricts the optimization of the acquisition function.

The selection criterion \cref{eq:combined_acqusisition_1} trades off the expansion of the safe set with the search for the safe optimum. This comparison works best when both the objective function $f$ and the constraint $s$ share the same scale.

In \cref{sec:ise_problem_statement}, we assumed the observation process to be homoskedastic. However, the results can be extended to the case of heteroskedastic Gaussian noise. The observation noise at a parameter $\bm x$ explicitly appears in the \ourmethodexp acquisition function, since it crucially affects the amount of information that we can gain by evaluating the constraint $s$ at $\bm x$. On the contrary, \safeopt-like methods do not consider the observation noise in their acquisition functions. As a consequence, \ourmethod can perform significantly better in an heteroskedastic setting, as we also show in \cref{sec:experiments}.

Lastly, we reiterate that the theoretical safety guarantees offered by \ourmethod are derived under the assumption that $h$ is a bounded norm element of the RKHS space associated with the GP's kernel. In applications, therefore, the choice of the kernel function becomes even more crucial when safety is an issue. For details on how to construct and choose kernels see \citep{garnett_bo_book}. The safety guarantees also depend on the choice of $\beta_n$. Typical expressions for $\beta_n$ include the RKHS norm of the composite function $h$ \citep{kernelized_bandits,fiedler_beta_bounds}, as shown in \cref{eq:beta_def}, which is in general difficult to estimate in practice. Because of this, usually in practice a constant value of $\beta_n$ is used instead.

\section{Experiments}\label{sec:experiments}

In this section, we present experiments where we evaluate the proposed \ourmethod acquisition function on different tasks and investigate its practical performance against competitor algorithms.

\paragraph{GP Samples} For the first part of the experiments, we evaluate \ourmethod on objective and constraint functions $f$ and $s$ that we obtain by sampling a GP prior at a finite number of points. This allows for an in-model comparison of \ourmethod, where we compare its performance to that of a slightly modified version of \safeopt \citep{sui_safe_2015}. In particular, we modify the version of \safeopt that we use in the experiment by defining the safe set in the same way \ourmethod does, i.e., by means of the GP posterior, as done, for example, also by \citet{safe_quadrotors_felix}. 
We also compare against two versions of the \mes algorithm: the standard unconstrained \mes, which we allow to sample also parameters outside of the safe set, and a constrained version of it, \mes-safe, that restricts the optimization of the acquisition function within the safe set.
As metric we use the simple regret, defined as
\begin{equation}\label{eq:simple_regret_def}
r_n \coloneqq \min_{\hat{n} \leq n} f^*_\varepsilon - f(\bm x_n),
\end{equation}
which is the distance between the safe optimum as defined in \cref{def:safe_optimum} and the best safe parameter evaluated so far.

We select 50 independent samples of objective and constraint functions from a two-dimensional GP with RBF kernel, defined in $[-1, 1]\times[-1, 1]$ and run the different algorithms for 100 iterations for each sample. 
As a first experiment, we test the setting where the safety constraint and the objective function are the same, that is $f = s$. 
\cref{fig:2d_gp_samples_same} shows the average simple regret over the 50 samples and the associated standard error over the regret for this setting. From this plot, we can see that \ourmethod achieves comparable results with \safeopt and how the unconstrained version of \mes finds the global optimum, which, in general, is greater than $f^*_\varepsilon$, leading to negative regret in the plots. We also notice that the constrained version of \mes achieves a good performance, comparable with \ourmethod. This result is due to the fact that, when constraint and objective are the same randomly selected GP sample, more often than not the boundaries of the safe set happen to be promising areas also for the location of the current safe optimum, so that also a constrained version of \mes does promote expansion of the safe set, resulting in a good performance.

Subsequently, we move on to explore the setting where objective and constraint are two independently selected GP samples, and plot the corresponding results in \cref{fig:2d_gp_samples_separate}. The results are very similar to the ones shown in \cref{fig:2d_gp_samples_same}, with the only noteworthy difference being the behavior of the constrained version of \mes: while in the case of $f = s$ \mes-safe performs on par with \ourmethod and \safeopt, for $f \neq s$ it achieves very poor results. Such bad performance is a consequence of the safe exploration limits of pure \mes when constrained to a fixed domain, as exemplified in \cref{fig:toy_example}. In the setting $f \neq s$ this limits become relevant, since the boundary of the safe set is in general no longer an interesting location to explore for what concerns the optimum of the objective function.

Finally, in \cref{table:safety_violations} we report the average percentage of safety violations for the various methods, and we notice that the values are comparable for all constrained methods, as the safe set is defined in the same way for all of them.

\begin{figure}[t] 
    \centering
    \subfloat[Same function for constraint and objective]{%
        \includegraphics[width=0.47\textwidth]{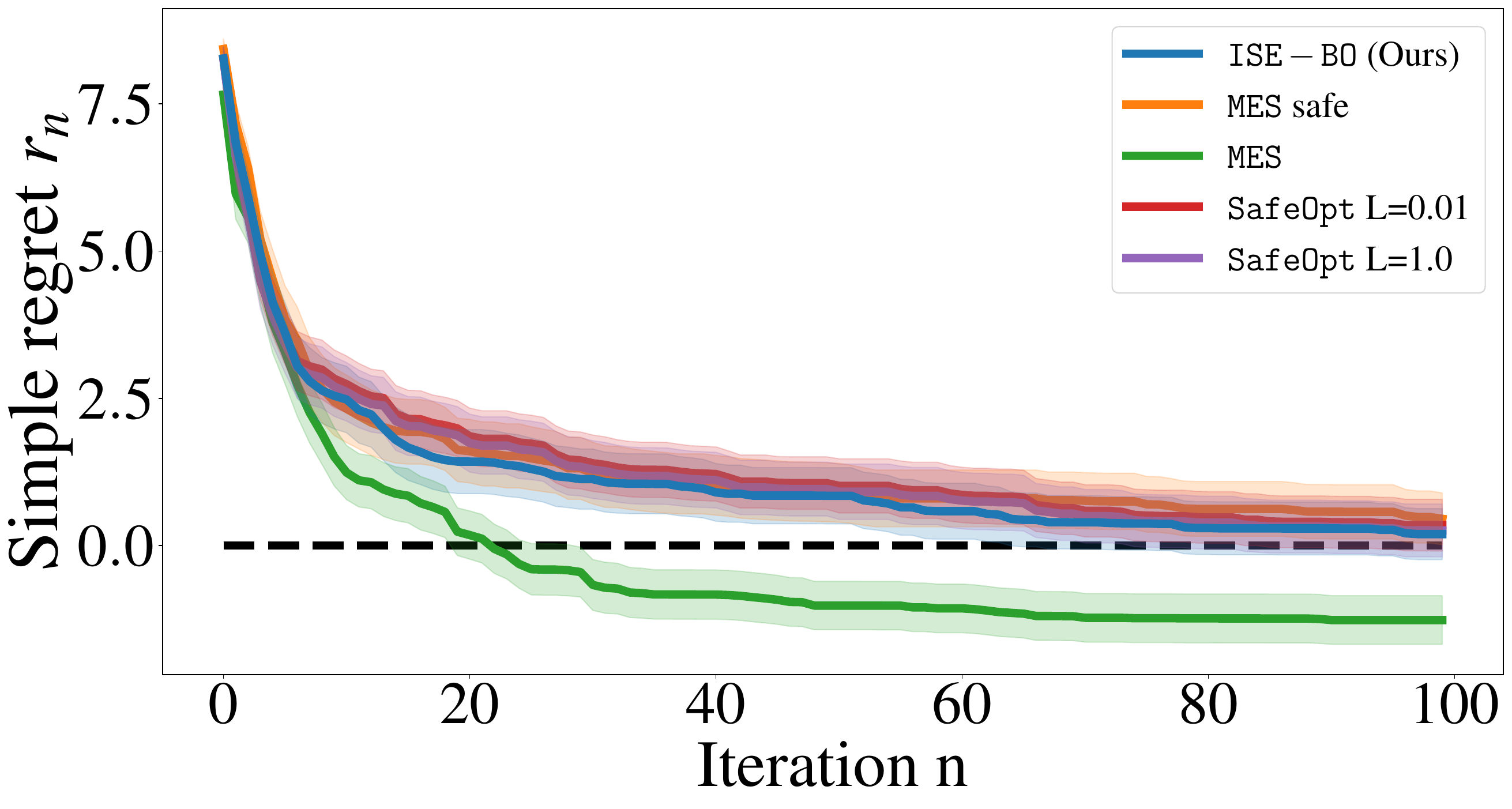}%
        \label{fig:2d_gp_samples_same}%
        }%
    \hfill
    \subfloat[Independent objective and constraint]{%
        \includegraphics[width=0.47\textwidth]{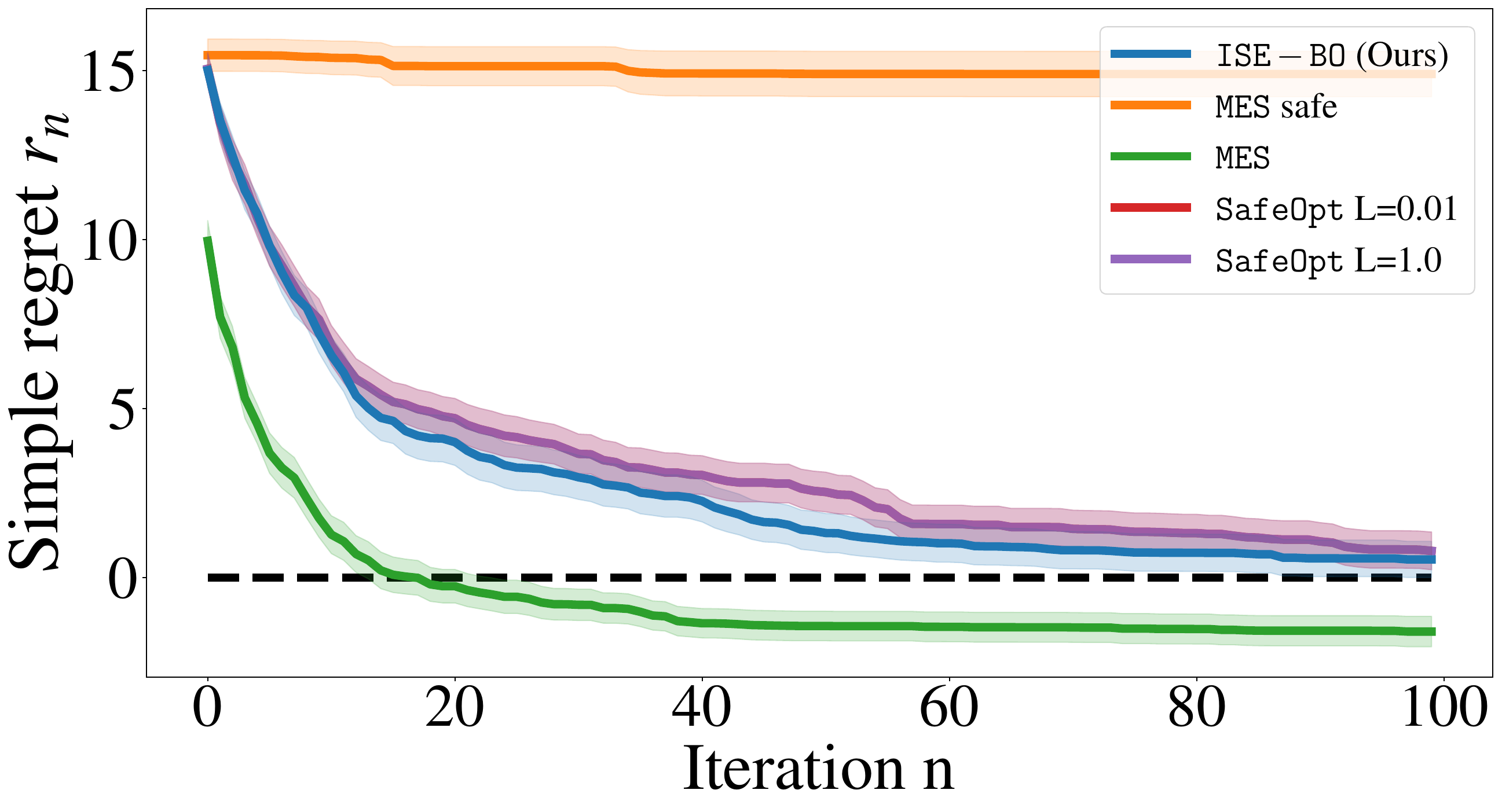}%
        \label{fig:2d_gp_samples_separate}%
        }%
    \caption{Performance of \ourmethod compared to \safeopt and both constrained and unconstrained versions of \mes, when the objective and constraint function are samples drawn the a GP. In both (\subref{fig:2d_gp_samples_separate}) and (\subref{fig:2d_gp_samples_same}) we can see the average simple regret \cref{eq:simple_regret_def} and its standard error for the tested methods over  50 random seeds. In (\subref{fig:2d_gp_samples_same}), the same GP sample served as both the objective $f$ and the constraint $s$, while in (\subref{fig:2d_gp_samples_separate}), for each random seed, two different GP samples were drawn to serve, respectively, as the objective and the constraint.}
    \label{fig:gp_samples_exp}
    \hfill
\end{figure}

\begin{table}[hb]
\adjustbox{max width=\textwidth}{%
\centering
\begin{tabular}{c c c c c c}
\specialrule{.1em}{.05em}{.05em}
\rule{0mm}{4mm}
       & \ourmethod & \mes safe & \mes & \safeopt L = 0.01 & \safeopt L = 1.0\\ 
      \hline
      \% Safety violations & 0.001 $\pm$ 0.002 & 0.004 $\pm$ 0.007 & 0.379 $\pm$ 0.239 & 0.001 $\pm$ 0.001 & 0.001 $\pm$ 0.001 \\
      \specialrule{.1em}{.05em}{.05em}
\end{tabular}}
\caption{Average safety violations per run over the 50 runs used to obtain \cref{fig:gp_samples_exp}}
\label{table:safety_violations}
\end{table}

\paragraph{Synthetic one-dimensional objective} To showcase the advantage of \ourmethod with respect to \safeopt-like approaches, we tested \ourmethod, \mes, \mes-safe and \safeopt for different values of the Lipschitz constant on the synthetic function shown in \cref{fig:one_d_exp_objective}. For simplicity, this function serves both as objective and as constraint.
Starting from the safe seed $\bm x_0 = 0$, in order to find the safe optimum -- marked by the pink square -- the safe optimization algorithm needs to learn up to a very small error the value of the function in the region $[0, 2.3]$, where it gets very close to the safety threshold.

Once it has explored ad learned about the optimum in the region $[-2.5, 0]$, \ourmethod behaves exactly this way, driven by the exploration component $\alpha^\ourmethodexp$ of the acquisition function. On the other hand, \safeopt spends time exploring that region only for appropriate values of the Lipschitz constant, as \cref{fig:1d_experiment_regret} shows. And, naturally, \mes-safe will focus on learning about the local optimum at $\bm x = -2.5$, away from the interesting region. 

\begin{figure}[t] 
    \centering
    \subfloat[One-dimensional objective-constraint]{%
        \includegraphics[width=0.47\textwidth]{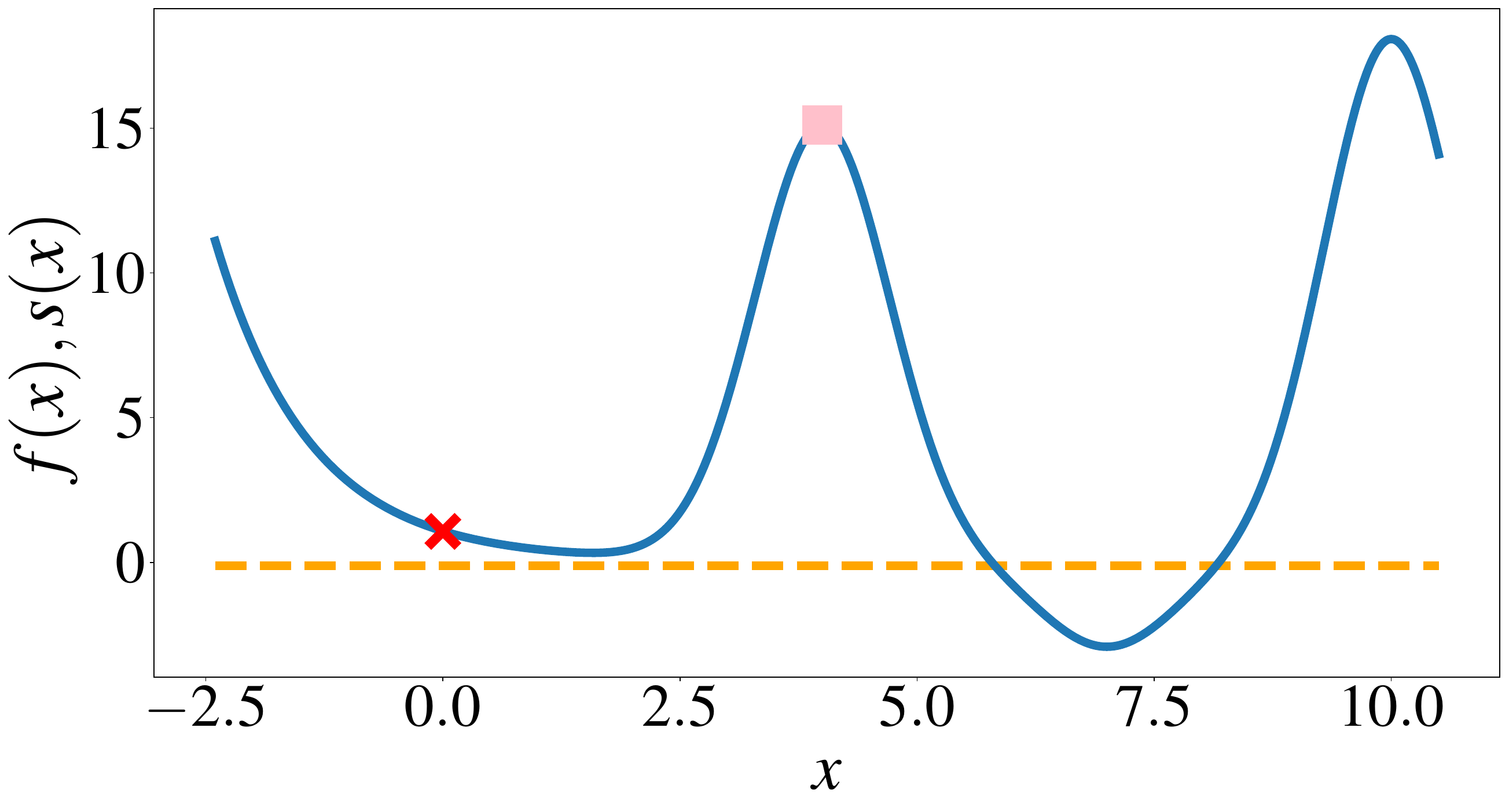}%
        \label{fig:one_d_exp_objective}%
        }%
    \hfill
    \subfloat[Simple regret comparison]{%
        \includegraphics[width=0.47\textwidth]{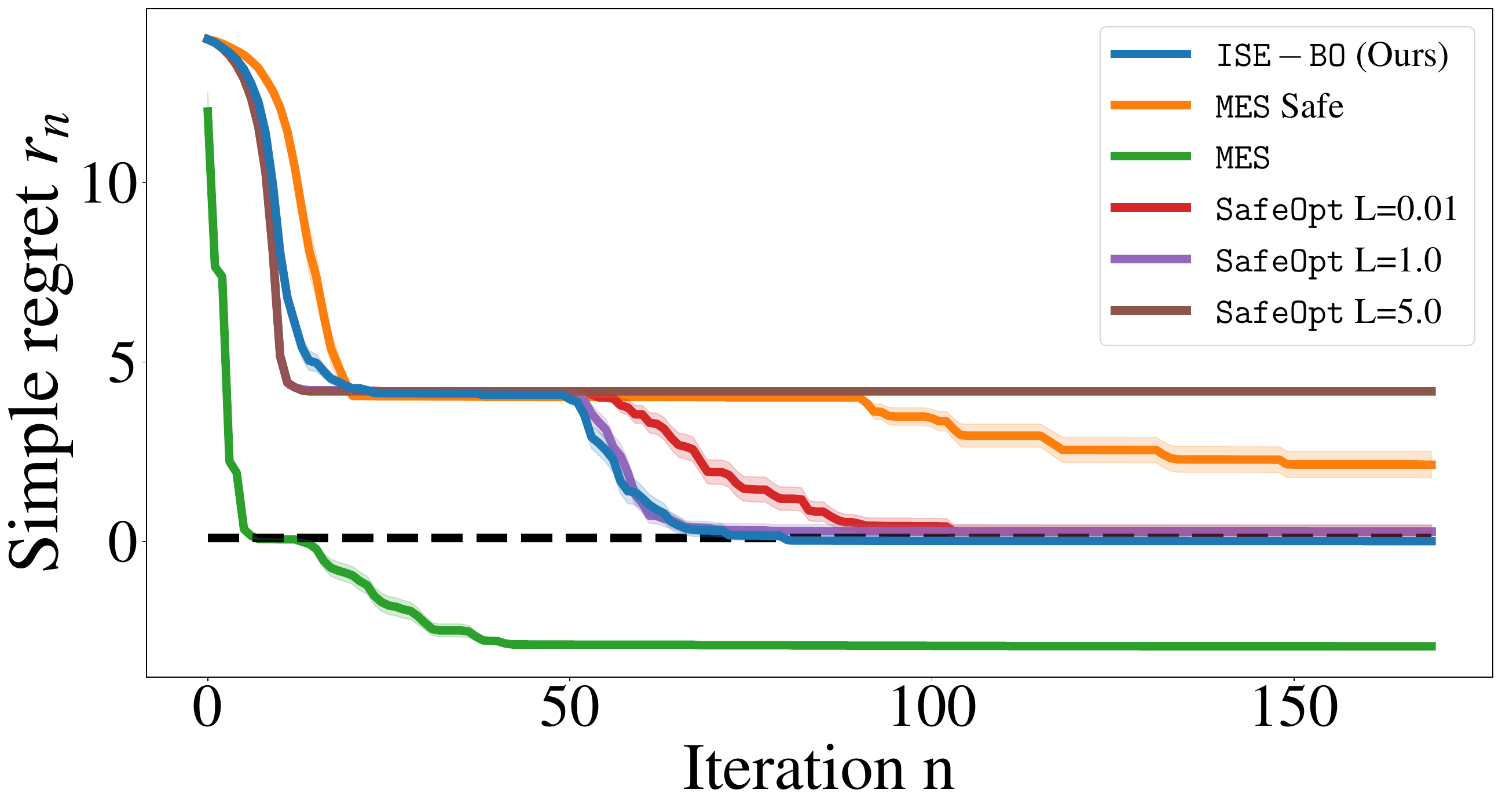}%
        \label{fig:1d_experiment_regret}%
        }%
    \caption{Performance of \ourmethod compared to \safeopt and both constrained and unconstrained versions of \mes for a one dimensional synthetic function, with $f = s$. In (\subref{fig:one_d_exp_objective}), we plot this function, with the value of the safe seed indicated by the red cross and the one of the safe optimum by the pink square, while in (\subref{fig:1d_experiment_regret}) we plot the average simple regret and its standard error over 50 random seeds. \ourmethod will first focus on the left of the safe seed. Once it has learned about the value of the optimum at $\bm x = -2.5$, it will it find more informative to explore to the right, eventually uncovering the region containing $f^*_\varepsilon$. On the other hand, \mes-safe will spend most of its time around the local optimum at $\bm x = -2.5$. Similarly, \safeopt algorithms with a too low or too high value of the Lipschitz constant will take longer to bypass the region where the constraint is very close to the safety threshold.}
    \label{fig:1d_experiments}
    \hfill
\end{figure}

\paragraph{Heteroskedastic noise domains}\label{par:high_heteroskedastic}
As discussed in \cref{sec:discussion}, for high dimensional domains, we can follow a similar approach to \linebo, limiting the optimization of the acquisition function to a randomly selected one-dimensional subspace of the domain. Moreover,  it is also interesting to test \ourmethod in the case of heteroskedastic observation noise, since the noise is a critical quantity for our acquisition function, while it does not affect the selection criterion of \safeopt-like methods. Therefore, in this experiment we combine a high dimensional problem with heteroskedastic noise. In particular, we apply a \linebo version of \ourmethod to the constraint and objective function $f(\bm x) = \frac{1}{2}e^{-\bm x^2} + e^{-(\bm x \pm \bm x_1)^2} + 3e^{-(\bm x \pm \bm x_2)^2} + 0.2$ in dimension four and six, with the safe seed being the origin. This function has two symmetric global optima at $\pm \bm x_2$ and we set two different noise levels in the two symmetric domain halves containing the optima. To assess the exploration performance, also in this case we use the simple regret. As the results in \cref{fig:high_d_line} show, \ourmethod achieve a superior sample efficiency than \safeopt, which, in turn, as expected obtains a greater performance that the constrained version of \mes, for the reasons that we discussed in \cref{sec:safe_optimization_algorithm}. Namely, for a given number of iterations, by explicitly exploiting knowledge about the observation noise, \ourmethod is able to classify as safe regions of the domain further away from the origin, in which the objective function assumes its largest values, resulting in a smaller regret. On the other hand, \safeopt-like methods only focus on the posterior variance, sampling parameters in both regions of the domain with with similar frequency, so that the higher observation noise causes them to remain stuck in a smaller neighborhood of the origin, resulting in bigger regret.

\begin{figure*}
\hfill
     \centering
     \begin{subfigure}{0.495\textwidth}
         \centering
         \includegraphics[width=\textwidth,height=0.16\textheight]{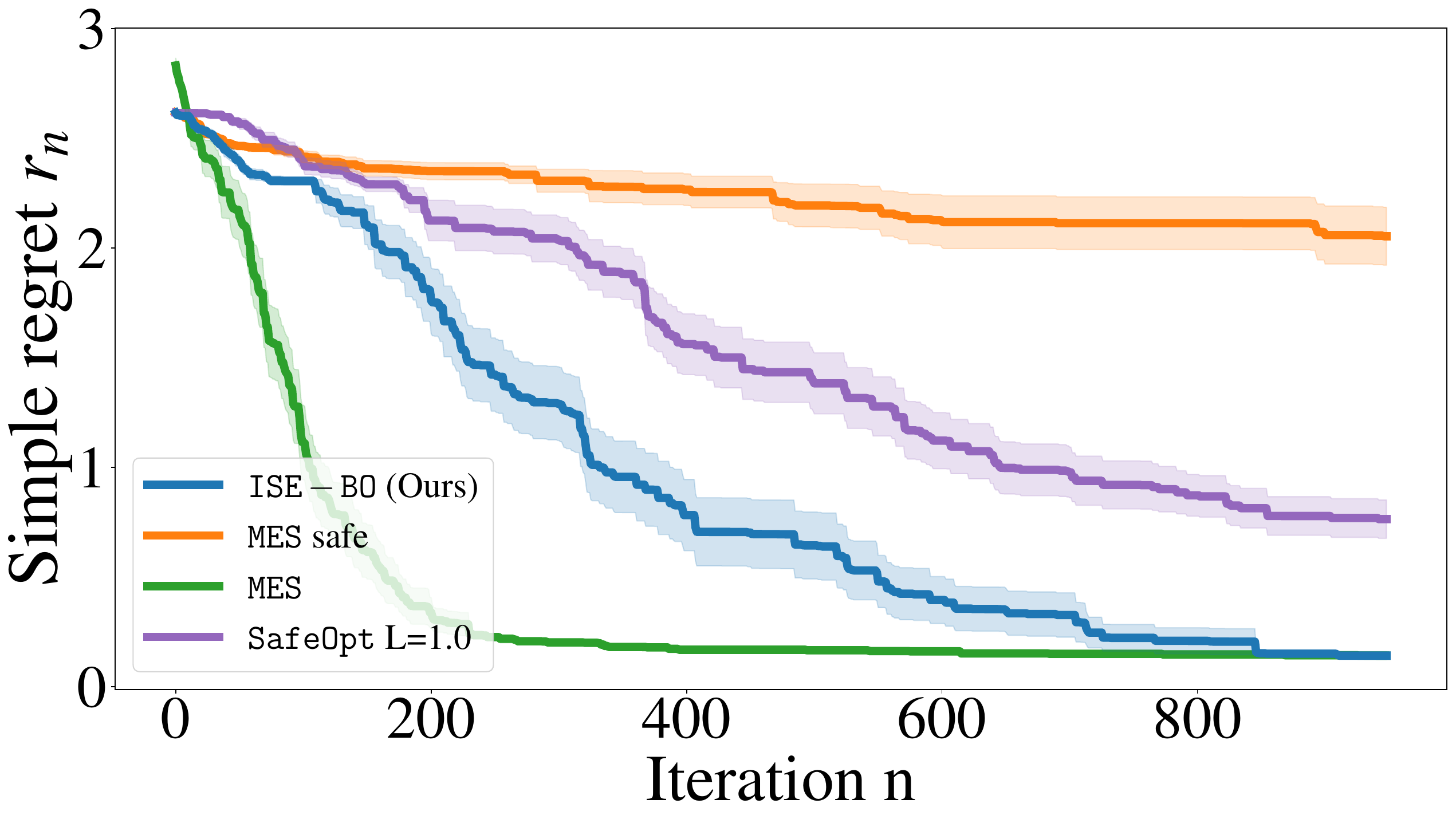}
         \caption{Dimension 4.}
         \label{fig:high_d_line_4}
     \end{subfigure}
     \begin{subfigure}{0.495\textwidth}
         \centering
         \includegraphics[width=\textwidth,height=0.16\textheight]{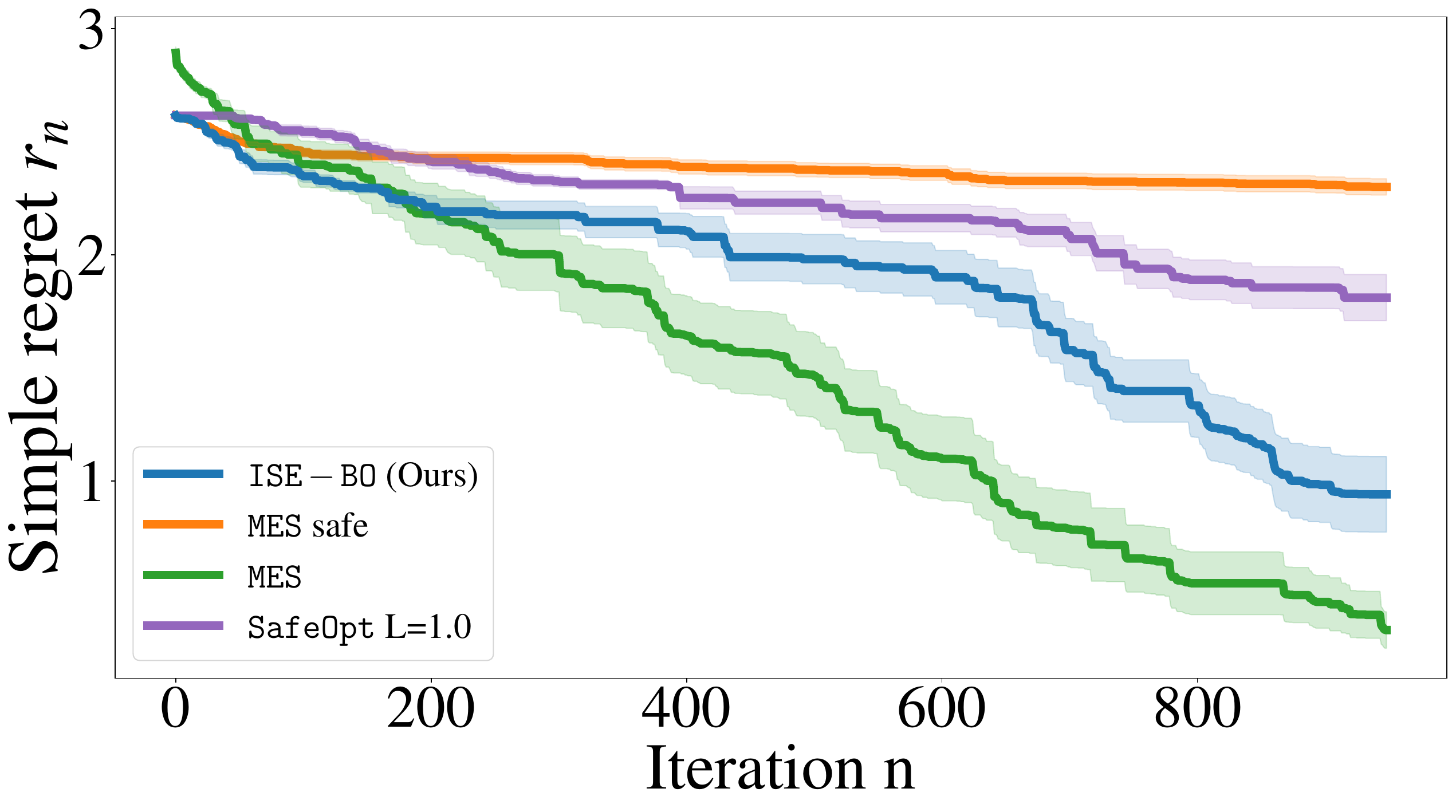}
         \caption{Dimension 6.}
         \label{fig:high_d_line_6}
     \end{subfigure}
        \caption{Performance comparison in higher dimension with heteroskedastic noise, for $d=$4 and $d=$6, where we plot the simple regret $r_n$ with respect to the safe optimum. (\subref{fig:high_d_line_4}) and (\subref{fig:high_d_line_6}) show, respectively, the average regret over 30 runs in dimension four and six, as a function of the number of iterations. We can see that this adapted version of \ourmethod promotes expansion of the safe set, leading to classifying as safe regions where the latent function attains its largest value. The plots also show that \ourmethod achieves a better sample efficiency than both \stageopt-like exploration and the \stageopt inspired heuristic acquisition. As one would expect, we also see that the constrained version of \mes achieves the worst performance, as it lacks motivation to expand the safe set. On the other hand, the unconstrained \mes algorithm is naturally the quickest at identifying the optimum, since it is free to evaluate any parameter in the domain.}
        \label{fig:high_d_line}
\end{figure*}

\paragraph{OpenAI Gym control}
After investigating the performance of \ourmethod on GP samples, we apply its exploration component to a classic control task from the OpenAI Gym framework \citep{brockman2016openai}, with the goal of finding the set of parameters of a controller that satisfy some safety constraint. In particular, we consider a linear controller for the inverted pendulum.
The controller is given by $u_t = x_1 \theta_t + x_2 \dot{\theta}_t$, where $u_t$ is the control signal at time $t$, while $\theta_t$ and $\dot{\theta}_t$ are, respectively, the angular position and the angular velocity of the pendulum. Starting from a position close to the upright equilibrium, the controller's task is the stabilization of the pendulum, subject to a safety constraint on the maximum velocity reached within one episode. For a given initial controller configuration $\bm x_0 \coloneqq (x_1^0, x_2^0)$, we want to explore the controller's parameter space avoiding configurations that lead the pendulum to swing with a too high velocity. Namely, the safety constraint is the maximum angular velocity reached by the pendulum during an episode of fixed length $T$, $s(\bm x) = \max_{t\leq T} \dot{\theta}_t(\bm x)$, and the safety threshold is not at zero, but rather at some finite value $\dot{\theta}_M = 0.5\text{rad/}s$, so that the safe parameters are those for which the maximum velocity is below $\dot{\theta}_M$. In \cref{fig:true_pendulum_safe_set}, we show the true safe set for this problem, while in \cref{fig:pendulum_discover_safe_set_1,fig:pendulum_discover_safe_set_2,fig:pendulum_discover_safe_set_3} one can see how \ourmethodexp safely explores the true safe set. 
These plots show the parameters that \ourmethodexp chooses to evaluate at different iterations. We see that in all these occasions, the \ourmethodexp acquisition function focuses on the boundary of the current safe set, as this is the region that is naturally most informative about the safety of parameters outside of the safe set. This way, \ourmethodexp is able to expand the safe set, until it learns the constraints function with high confidence in the vicinity of the boundary, at which point the safe set can only change slightly. This behavior eventually leads to the full true safe set to be classified as safe by the GP posterior, as \cref{fig:pendulum_discover_safe_set_3} shows.

\begin{figure}[ht]
\hfill
     \centering
     \begin{subfigure}[b]{0.49\textwidth}
         \centering
         \includegraphics[width=\textwidth,height=0.165\textheight]{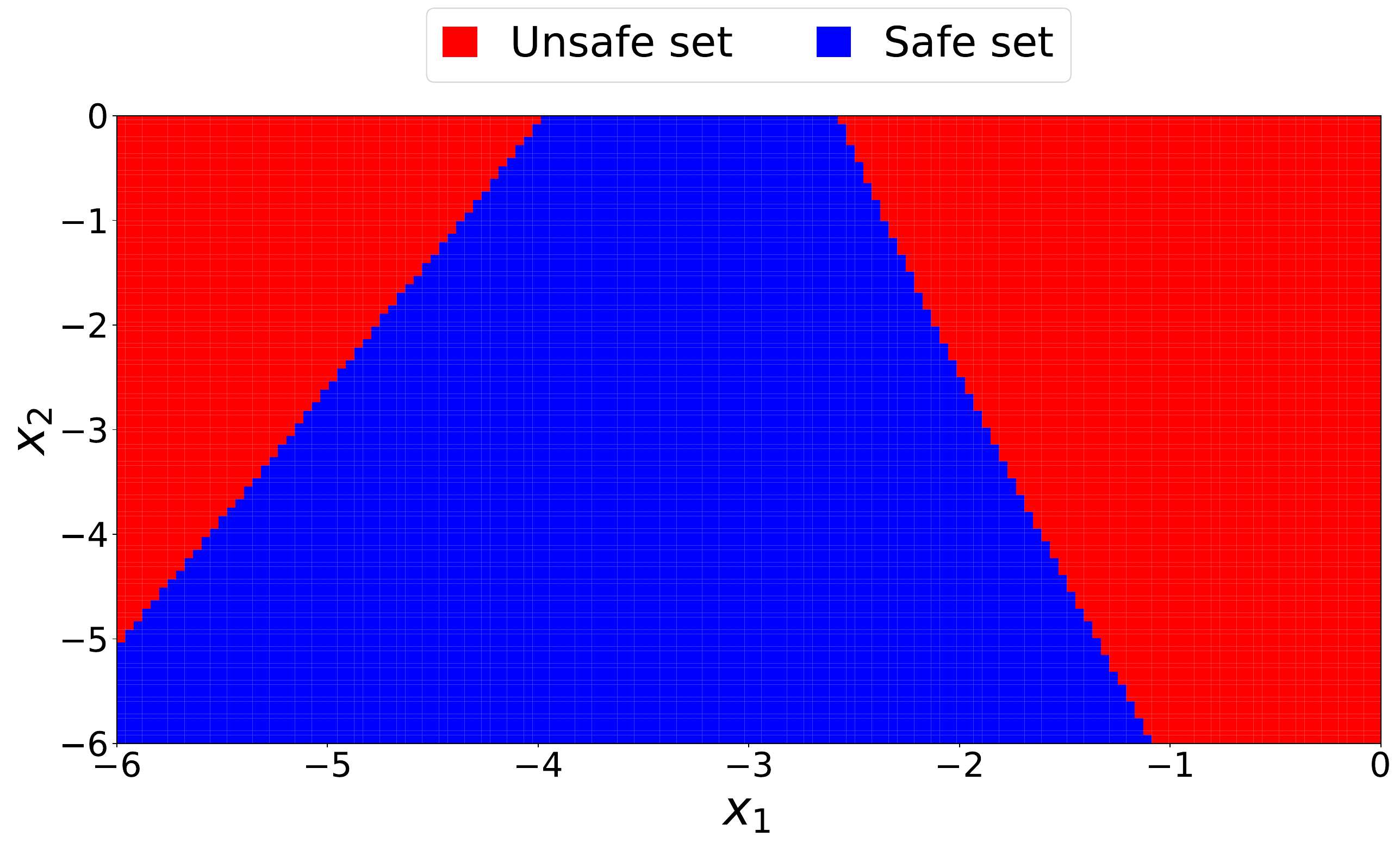}
         \caption{True safe set}
         \label{fig:true_pendulum_safe_set}
     \end{subfigure}
     \hfill
     \begin{subfigure}[b]{0.49\textwidth}
         \centering
         \includegraphics[width=\textwidth,height=0.165\textheight]{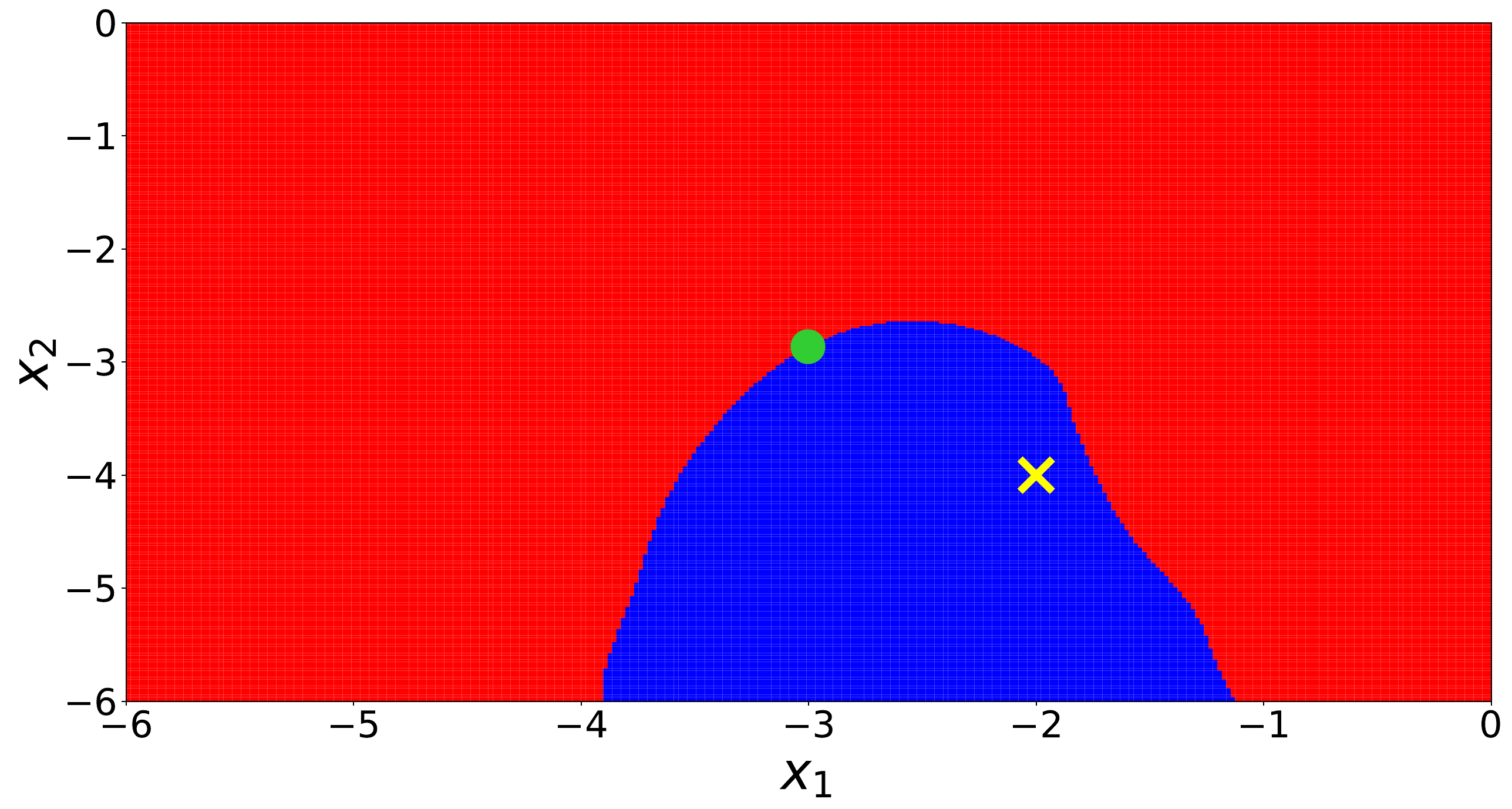}
         \caption{Iteration 10}
         \label{fig:pendulum_discover_safe_set_1}
     \end{subfigure}
     \hfill
     \begin{subfigure}[b]{0.49\textwidth}
         \centering
         \includegraphics[width=\textwidth,height=0.165\textheight]{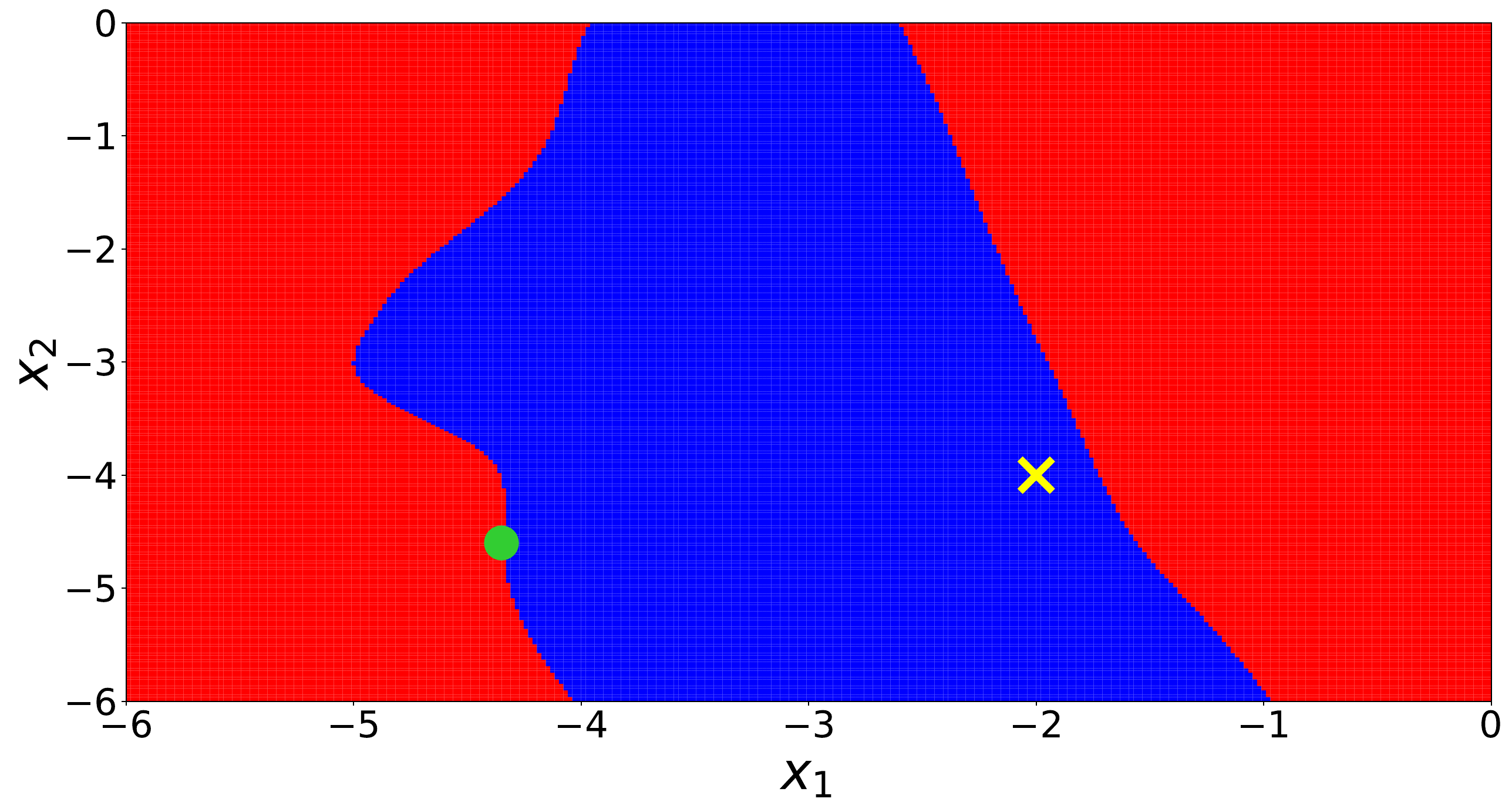}
         \caption{Iteration 30}
         \label{fig:pendulum_discover_safe_set_2}
     \end{subfigure}
     \hfill
     \begin{subfigure}[b]{0.49\textwidth}
         \centering
         \includegraphics[width=\textwidth,height=0.165\textheight]{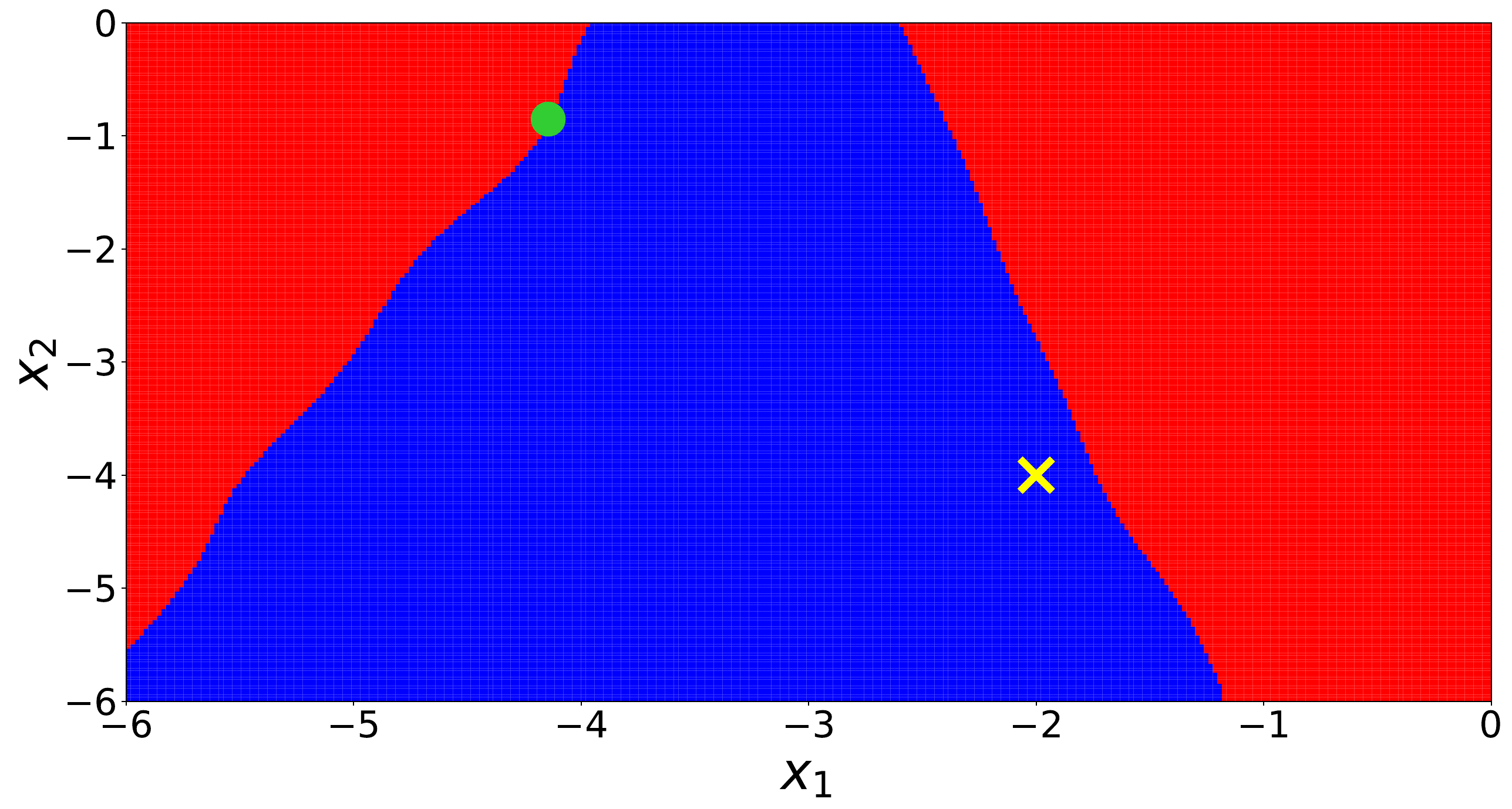}
         \caption{Iteration 50}
         \label{fig:pendulum_discover_safe_set_3}
     \end{subfigure}
        \caption{Safe exploration of the linear controller's parameter space in the inverted pendulum experiment. In (\subref{fig:true_pendulum_safe_set}) we see the true safe set, while in (\subref{fig:pendulum_discover_safe_set_1}-\subref{fig:pendulum_discover_safe_set_3}) we see the safe set (blue region) as learned by \ourmethodexp at various iterations. The point marked by the green dot is $\bm x_{n+1} = \argmax \alpha^\ourmethodexp$, while the yellow cross is the initial safe seed $\bm x_0$.}
        \label{fig:pendulum_experiment}
\end{figure}

\paragraph{Rotary inverted pendulum controller}
Next, we consider a simulated Furuta pendulum \citep{furuta_pendulum} and an associated Proportional-Derivative (PD) controller, for which we want to find the optimal parameters. In particular, we consider an initial state close to the upward equilibrium state and look for parameters $\bm x^*$ that maintain that equilibrium with the least effort possible. 
The state of the pendulum is given by the four dimensional vector $[\theta, \alpha, \dot{\theta}, \dot{\alpha}]$, where $\theta$ is the horizontal angle, $\alpha$ the vertical one, and $\dot{\theta}, \dot{\alpha}$ their respective angular velocities. The objective function is given by the negative sum of the velocities and actions collected along an episode: $f(\bm x) \coloneqq -\sum_t\left(|\dot{\theta}_t| + |\dot{\alpha}_t| + |a_t|\right)$, where $\bm x$ are the controller parameters, which the values of the angles and angular velocities during an episode depend upon, while $a_t$ is the action that the controller outputs at time-step $t$. To maximize this objective while maintaining the pendulum close to its upward equilibrium means to find a controller that is able to keep the pendulum up with the minimum amount of movement and effort. Closeness to the upward equilibrium position is encoded in the safety constraint: $s(\bm x) \coloneqq -\sum_t |\alpha_t| + b$, where $b$ is an offset term to ensure that the safety threshold is at $s(\bm x) = 0$. A big value of $s$ means that $\alpha$ has deviated only slightly from the upward equilibrium point $\alpha = 0$ during the episode, while bigger values of $s$ mean larger deviation. The PD controller has one proportional parameter and one derivative parameter associated with $\theta$ and another pair associated to $\alpha$: $\bm x \coloneqq [k_p^\theta, k_d^\theta, k_p^\alpha, k_d^\alpha]$. In this experiment, we fix $k_p^\theta$ and $k_p^\alpha$, and consider the problem of finding the optimal safe $k_d^\theta$ and $k_d^\alpha$.
\cref{fig:pd_ctrl_exp} summarizes the results of the experiment. In this plot, we can see that also in this case \ourmethod and \safeopt achieve a comparable result, eventually learning about the safe optimum, while the safely constrained version of \mes struggles to efficiently explore and uncover the region of the domain that contains the safe optimum, leading to a slower convergence compared to \ourmethod. On the other hand, as expected we can see that the unconstrained \mes algorithm quickly converges to the safe optimum (which in this case is also the global one) since it is able to freely explore any region of the domain.

\begin{figure}[t] 
  \centering
  \includegraphics[width=0.47\textwidth]{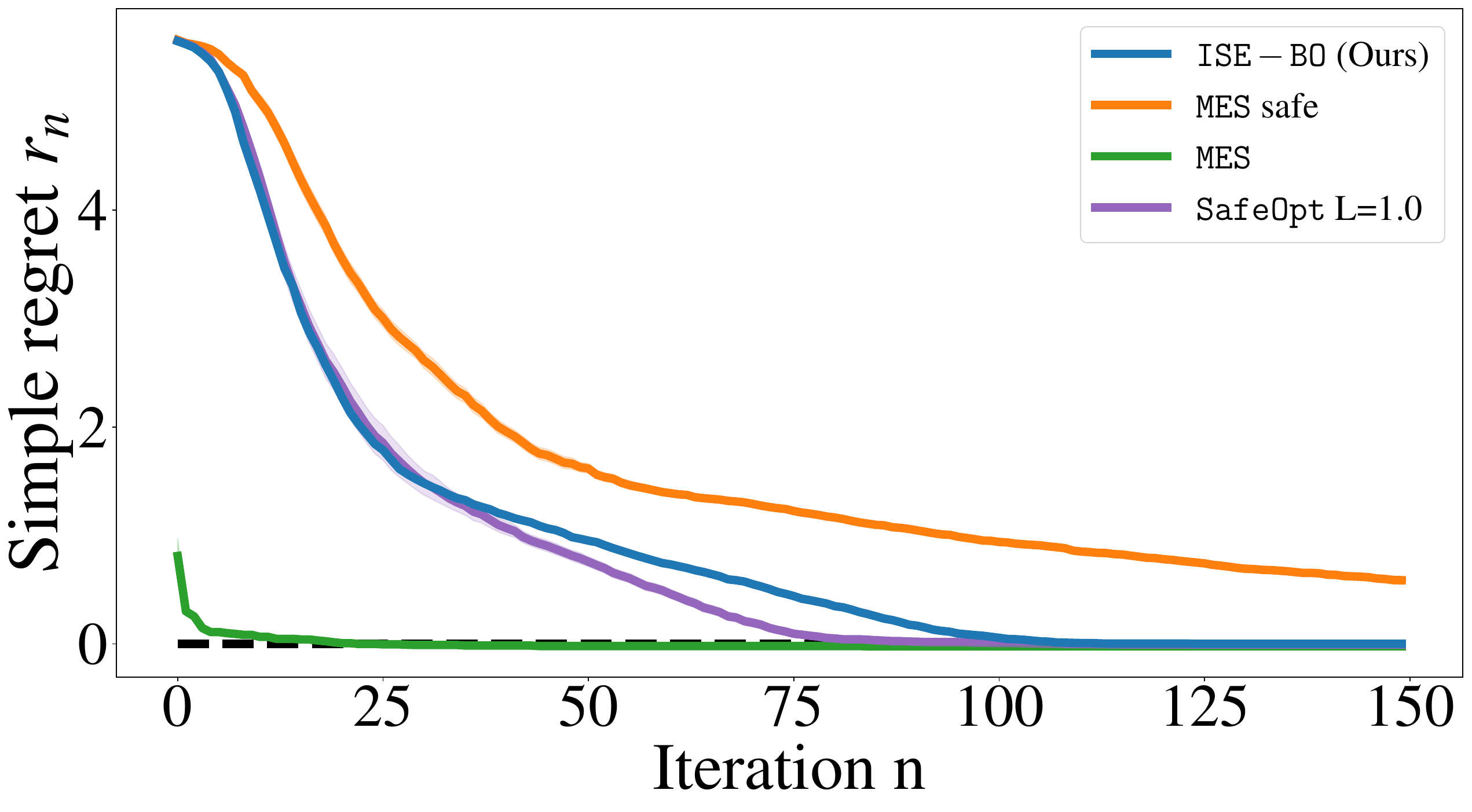}
  \caption{Performance of \ourmethod compared to \safeopt and both constrained and unconstrained versions of \mes for the Furuta pendulum PD controller experiment. We plot the average over 30 random seeds and the corresponding standard error. The plot shows that, also in this case, \ourmethod performs comparably with \safeopt and that both outperform the safety constrained version of \mes. In this experiment, the safe optimum is also the global optimum, which explains why the green line, corresponding unconstrained version of \mes, quickly converges to zero regret.}
  \label{fig:pd_ctrl_exp}
\end{figure}

\section{Conclusions}

We have introduced \ourmethodlong (\ourmethod), a safe BO algorithm that pairs Max-Value Entropy Search with a novel approach to safely explore a space where the safety constraint is \textit{a priori} unknown. \ourmethod efficiently and safely explores by evaluating only parameters that are safe with high probability and by choosing those parameters that yield the greatest information gain about either the safety of other parameters or the value of the safe optimum. We theoretically analyzed \ourmethod and showed that it leads to arbitrary reduction of the uncertainty in the largest reachable safe set containing the starting parameter. Our experiments support these theoretical results and demonstrate an increased sample efficiency of \ourmethod compared to \safeopt-based approaches.

\vskip 0.2in
\bibliography{references}

\appendix
\section{Proofs - Not polished}\label{appendix:ise_proofs}

Let's start by proving \cref{lemma:safe_set_is_safe_with_high_probability,lemma:in_beta_gp_with_high_prob}, while we present the proof of \cref{lemma:mutual_infos_decrease_with_sigma} later, as we need some additional results to be able to prove it. In the following, when it is clear from the context, for the sake of simplicity we drop the superscripts $f$ and $s$ in the notation of the posterior mean and variance.

\SafeSetHighProbab*
\begin{proof}
Under the assumptions of the Lemma, \citet{kernelized_bandits} have shown that with the choice of $\beta_n = B + R\sqrt{2\left(\ln(e/\delta) + \gamma_n\right)}$, it holds with probability of at least $1 - \delta$ that $s(\bm x) \in \left[\mu_n^s(\bm x) \pm \beta_n\sigma_n^s(\bm x)\right]$ for all $\bm x \in \mathcal{X}$ and for all $n$, with $\gamma_n$ being the maximum information gain about $h$ after $n$ iterations. This result, together with the way we defined the safe set in \cref{eq:safe_set_definition}, implies the claim of the Lemma, concluding the proof. 
\end{proof}

\begin{lemma}
\label{lemma:in_beta_gp_with_high_prob}
Let $s, f$ and $h$ be as in \cref{eq:h_definition}, and let $h$ have bounded RKHS norm, as in \cref{lemma:safe_set_is_safe_with_high_probability}. 
Let, moreover, $\mathcal{D}$ be a set of observations $\mathcal{D} = \left\{(\bm x_n, y_n)\right\}_{n=1}^{|\mathcal{D}|}$. Then for all $\delta \in (0, 1)$, with probability of at least $1 - \delta$ it holds that $s(\bm x) \in \left[\mu_n^s(\bm x) \pm \beta_n\sigma_n^s(\bm x)\right]$ for all $\bm x \in \mathcal{X}$ and for all $n \leq |\mathcal{D}|$, with $\beta_n$ as defined in \cref{eq:beta_def}.
\end{lemma}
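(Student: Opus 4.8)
The plan is to reduce the statement to the self-normalized concentration inequality of \citet{kernelized_bandits} applied to the composite surrogate $h$ on the extended domain $\tilde{\mathcal{X}}$. The key observation is that this inequality is agnostic to how the evaluation points are chosen: it holds uniformly over the domain and over all rounds for \emph{any} sequence of query points, whether adaptively selected or fixed in advance. A generic observation set $\mathcal{D} = \{(\bm x_n, y_n)\}_{n=1}^{|\mathcal{D}|}$ is just the special case of a prescribed (non-adaptive) sequence, so no new argument beyond \cref{lemma:safe_set_is_safe_with_high_probability} is needed — only a change in the interpretation of which data feeds the GP posterior.

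Concretely, first I would invoke the assumption $\norm{h}_{\mathcal{H}_k} = B < \infty$ together with the fact that the observation noise $\nu_n^{f/s}$ is zero-mean sub-Gaussian bounded by $R$, so that the hypotheses of the concentration result of \citet{kernelized_bandits} are met for $h$ on $\tilde{\mathcal{X}}$. This yields that, with the choice $\beta_n = B + R\sqrt{2(\ln(e/\delta) + \gamma_n)}$ of \cref{eq:beta_def}, with probability at least $1-\delta$ one has $h(\bm x, i) \in [\mu_n(\bm x, i) \pm \beta_n \sigma_n(\bm x, i)]$ jointly for all $(\bm x, i) \in \tilde{\mathcal{X}}$ and all $n \geq 0$, where $\mu_n$ and $\sigma_n$ are the posterior mean and standard deviation obtained by conditioning on the first $n$ elements of $\mathcal{D}$. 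In particular this band is valid for all $n \leq |\mathcal{D}|$, which is the range asserted in the lemma.

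Finally, I would specialize $i = 1$ and substitute the identities from \cref{eq:separate_posterior_f_s}, namely $\mu_n^s(\bm x) = \mu_n(\bm x, 1)$ and $\sigma_n^s(\bm x) = \sigma_n(\bm x, 1)$, together with $s(\bm x) = h(\bm x, 1)$ from \cref{eq:h_definition}. This directly gives $s(\bm x) \in [\mu_n^s(\bm x) \pm \beta_n \sigma_n^s(\bm x)]$ for all $\bm x \in \mathcal{X}$ and all $n \leq |\mathcal{D}|$ on the same probability-$(1-\delta)$ event, completing the argument.

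I do not expect any genuine obstacle here; the only point requiring care is to make explicit that the cited bound tolerates an arbitrary (in particular data-independent) choice of evaluation points, which is exactly what licenses stating it for a generic $\mathcal{D}$ rather than for the sequence produced by \cref{alg:algorithm}. It is worth noting that this lemma is precisely what makes \cref{def:k_gp_beta_omega} — and hence the expansion operator and the set $S_\varepsilon(\bm x_0)$ — non-vacuous, since it guarantees that a GP conditioned on any dataset is well behaved with probability at least $1-\delta$.
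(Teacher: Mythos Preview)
Your proposal is correct and follows essentially the same approach as the paper: both invoke the concentration result of \citet{kernelized_bandits} for $h$ on $\tilde{\mathcal{X}}$, observe that it holds uniformly for all $\bm x$ and all $n$ regardless of how the evaluation sequence is chosen, and then specialize to the posteriors obtained by conditioning on the prefixes $\mathcal{D}_n$ of the given dataset $\mathcal{D}$. Your explicit passage through $i=1$ and \cref{eq:separate_posterior_f_s} is slightly more detailed than the paper's version, but the argument is the same.
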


\begin{proof}
As we recalled in the proof of \cref{lemma:safe_set_is_safe_with_high_probability}, \citet{kernelized_bandits} have shown that with a choice of $\beta_n = B + R\sqrt{2\left(\ln(e/\delta) + \gamma_n\right)}$, it holds with probability of at least $1 - \delta$ that $s(\bm x) \in \left[\mu_n^s(\bm x) \pm \beta_n\sigma_n^s(\bm x)\right]$ for all $\bm x \in \mathcal{X}$ and for all $n$. If we now take a GP prior and a dataset $\mathcal{D}$ such that $\mu_n$ and $\sigma_n$ define the posterior $GP$ obtained by conditioning on the subset of $\mathcal{D}$ $\mathcal{D}_n = \{(\bm x_i, y_i)\}_{i = 1}^n$, then the result by \citet{kernelized_bandits} means that, with probability of at least $1 - \delta$,  $s(\bm x) \in \left[\mu_n^s(\bm x) \pm \beta_n\sigma_n^s(\bm x)\right]$ for all $\bm x$ and for all $n \leq \abs{\mathcal{D}}$.
\end{proof}

Now, in order to prove \cref{thm:exploration_convergence,thm:combined_convergence} we will first show that the result holds for any acquisition function that satisfies some specific properties, and then show that our acquisitions satisfy those properties.

\begin{lemma}\label{lemma:generic_form_vanishing_variance}
Assume that $\bm{x}_{n+1}$ is chosen according to $\bm x_{n+1} \in \argmax_{\bm x \in \mathcal{X}_n}\alpha_n(\bm x)$, where $\mathcal{X}_n \subseteq \tilde{\mathcal{X}}$ is a subset of the domain of $h$, possibly different at every iteration $n$, but such that $\mathcal{X}_{n+1} \subseteq \mathcal{X}_n$, while $\alpha_n$ is an acquisition function that satisfies the two properties:
\begin{itemize}
    \item $\alpha_n(\bm x_{n+1}) \leq C_0\sigma_n^2(\bm x_{n+1})$ for some constant $C_0 > 0$;
    \item $\alpha_n(\bm x_{n+1}) \geq g(\tilde{\sigma}_n^2)$, with $g$ being a monotonically increasing, and with ${\tilde{\sigma}_n^2 \coloneqq \max_{\bm x \in \mathcal{X}_n}\sigma_n^2(\bm x)}$.
\end{itemize}
Then it holds that for all $\varepsilon > 0$ there exists $N_\varepsilon$ such that $\sigma_n^2(\bm x) \leq \varepsilon$ for every $\bm x \in \mathcal{X}_n$ if $n \geq \hat{n} + N_\varepsilon$. The smallest of such $N_\varepsilon$ is given by
\begin{equation}\label{eq:N_epsilon_generic}
N_\varepsilon = \min\left\{N \in \mathbb{N} : g^{-1}\left(\frac{C_1\gamma_N}{N}\right) \leq \varepsilon\right\},
\end{equation}
where $\gamma_N = \max_{D \subset \mathcal{X}; |D| = N}I\left(\bm f(D); \bm y(D)\right)$ is the maximum information capacity of the chosen kernel \citep{srinivas_gaussian_2010,gp_opt_with_mi}, and $C_1 = C_0 / \sigma_\nu^2\ln(1 + \sigma_\nu^{-2})$.
\end{lemma}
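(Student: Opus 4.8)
The plan is to argue by contradiction via a standard information-theoretic accumulation argument à la \citet{srinivas_gaussian_2010}. Suppose that, for some $\varepsilon > 0$, there were arbitrarily large $n$ with $\tilde\sigma_n^2 = \max_{\bm x \in \mathcal{X}_n}\sigma_n^2(\bm x) > \varepsilon$. Since the domain $\mathcal{X}_n$ shrinks over time ($\mathcal{X}_{n+1}\subseteq\mathcal{X}_n$) and the posterior variance at any fixed point is nonincreasing in $n$, the quantity $\tilde\sigma_n^2$ is itself nonincreasing; hence $\tilde\sigma_m^2 > \varepsilon$ for \emph{all} $m \le n$. Using the second property of $\alpha_n$ together with the selection rule $\bm x_{n+1}\in\argmax_{\bm x\in\mathcal{X}_n}\alpha_n(\bm x)$, we get $\alpha_n(\bm x_{n+1}) \ge g(\tilde\sigma_n^2) \ge g(\varepsilon)$ for every such step (monotonicity of $g$). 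Combining with the first property, $C_0\sigma_n^2(\bm x_{n+1}) \ge \alpha_n(\bm x_{n+1}) \ge g(\varepsilon)$, so each of the first $n$ evaluation points has posterior variance at least $g(\varepsilon)/C_0$.

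Next I would convert this lower bound on the selected-point variances into a lower bound on information gain. The mutual information between $h$ and the first $n$ observations telescopes as $I(\bm h_n;\bm y_n) = \tfrac12\sum_{m=0}^{n-1}\ln\bigl(1 + \sigma_\nu^{-2}\sigma_m^2(\bm x_{m+1})\bigr)$, and since $t\mapsto \ln(1+\sigma_\nu^{-2}t)/t$ is decreasing while $\sigma_m^2(\bm x_{m+1}) \le k((\bm x,i),(\bm x,i)) \le 1$, each summand is at least $\tfrac12\,\sigma_m^2(\bm x_{m+1})\,\frac{\ln(1+\sigma_\nu^{-2})}{1} \ge \tfrac12\,\frac{g(\varepsilon)}{C_0}\,\ln(1+\sigma_\nu^{-2})$. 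Hence $\gamma_n \ge I(\bm h_n;\bm y_n) \ge \tfrac{n}{2}\cdot\frac{g(\varepsilon)}{C_0}\ln(1+\sigma_\nu^{-2})$, i.e.\ $g(\varepsilon) \le \frac{2C_0}{\ln(1+\sigma_\nu^{-2})}\cdot\frac{\gamma_n}{n}$. With $C_1 \coloneqq C_0/\bigl(\sigma_\nu^2\ln(1+\sigma_\nu^{-2})\bigr)$ (absorbing the factor-of-$2$/scaling into the constant consistent with the statement), this reads $g(\varepsilon) \le C_1\gamma_n/n$, and since $g$ is increasing and invertible, $\varepsilon \le g^{-1}(C_1\gamma_n/n)$. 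But $\gamma_n$ is sublinear for the kernels of interest, so $C_1\gamma_n/n \to 0$, giving a contradiction for $n$ large; more precisely, once $n \ge \hat n + N_\varepsilon$ with $N_\varepsilon$ as in \cref{eq:N_epsilon_generic} we must have $\tilde\sigma_n^2 \le \varepsilon$, and by monotonicity $\sigma_n^2(\bm x)\le\varepsilon$ for all $\bm x\in\mathcal{X}_n$. The shift by $\hat n$ simply accounts for restarting the counting after iteration $\hat n$, applying the telescoping bound to the observations collected in iterations $\hat n+1,\dots,n$.

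The main obstacle I anticipate is bookkeeping rather than conceptual: (i) making the telescoping identity for $I(\bm h_n;\bm y_n)$ precise in this setting where both the objective and constraint observations are folded into the single GP on $h$ (so that "one iteration" contributes the evaluation of $h$ at $(\bm x_{n+1},0)$ and $(\bm x_{n+1},1)$ — one has to decide whether the constant $C_1$ as written accounts for one or two observations per round, and the factor $\tfrac12$ versus $1$), and (ii) correctly handling the role of $\hat n$ and verifying that $\tilde\sigma_n^2$ is genuinely monotone given that $\mathcal{X}_n$ changes — this needs that removing points from the feasible set cannot increase the max posterior variance and that conditioning on more data cannot increase any individual posterior variance. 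Both are routine, but the constants must be tracked carefully to land exactly on the stated $C_1$ and \cref{eq:N_epsilon_generic}. I would also note explicitly that $g$ being monotonically increasing guarantees $g^{-1}$ is well-defined and monotone on the relevant range, so that the set in \cref{eq:N_epsilon_generic} is nonempty precisely when $\gamma_N/N \to 0$, which is the only place the choice of kernel enters.
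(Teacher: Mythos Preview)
Your proposal is correct and follows essentially the same route as the paper: both arguments sandwich $\alpha_n(\bm x_{n+1})$ between $g(\tilde\sigma_n^2)$ and $C_0\sigma_n^2(\bm x_{n+1})$, use the monotonicity of $\tilde\sigma_n^2$ (from $\mathcal{X}_{n+1}\subseteq\mathcal{X}_n$ and nonincreasing posterior variance) to propagate the bound across all earlier iterations, and then apply the Srinivas-type inequality $\sigma_\nu^{-2}\sigma_m^2 \le \tfrac{\sigma_\nu^{-2}}{\ln(1+\sigma_\nu^{-2})}\ln(1+\sigma_\nu^{-2}\sigma_m^2)$ together with the information-capacity bound $\sum_m \ln(1+\sigma_\nu^{-2}\sigma_m^2(\bm x_{m+1})) \lesssim \gamma_n$. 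The only cosmetic difference is that the paper writes the direct bound $n\,g(\tilde\sigma_n^2) \le C_1\gamma_n$ and then inverts $g$, whereas you phrase the same computation as a contradiction from $\tilde\sigma_n^2 > \varepsilon$; your anticipated bookkeeping concerns (the factor of~2, the role of $\hat n$) are exactly the loose ends the paper also glosses over.
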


\begin{proof}
Thanks to the assumptions on the aquisition function $\alpha_n$, we know that for every $n$ it holds that
\begin{equation}
\begin{split}
g(\tilde{\sigma}_n^2) \leq \alpha_n(\bm x_{n+1}) \leq C_0\sigma_n^2(\bm x_{n+1}) \\
\Longrightarrow g(\tilde{\sigma}_n^2) \leq C_0\sigma_n^2(\bm x_{n+1})
\end{split}
\end{equation}
we can now exploit the monotonicity of $g$ and the fact that $\tilde{\sigma}_n^2$ is not increasing if the set $\mathcal{X}_n$ does not expand with $n$:
\begin{equation}
g(\tilde{\sigma}_n^2) \leq g(\tilde{\sigma}_m^2) \leq C_0\sigma_m^2(x_{m+1}) ~~~~~~ \forall n \geq m 
\end{equation}
we can then use this result to write:
\begin{equation}\label{eq:g_bounded_by_gamma}
\begin{split}
ng(\tilde{\sigma}_n^2) = \sum_{i = 1}^{n}g(\tilde{\sigma}_n^2) &\leq \\
C_0\sigma_\nu^2\sum_{i = 1}^{n}\sigma_\nu^{-2}\sigma_i^2(\bm x_{i+1}) &\leq \\
\frac{C_0\sigma_\nu^2}{\sigma_\nu^2\ln(1 + \sigma_\nu^{-2})}\sum_{i = 1}^{n}\ln\left(1 + \sigma_\nu^{-2}\sigma_i^2(\bm x_{i+1})\right) &\leq \\
C_1\gamma_n
\end{split}
\end{equation}
where $\gamma_n$ is the maximum information capacity and $C_1 = \frac{C_0\sigma_\nu^2}{\sigma_\nu^2\ln(1 + \sigma_\nu^{-2})}$. The second last passage follows from the fact that $x \leq \ln(1 + x) \sigma_\nu^{-2} / \ln(1 + \sigma_\nu^2)$ for $x \in [0, \sigma_\nu^{-2}]$ together with the fact that $\sigma_\nu^{-2}\sigma_i^2(\bm x_{i+1}) \leq \sigma_\nu^{-2}k(\bm x_{i+1}, \bm x_{i+1}) \leq \sigma_\nu^{-2}$. Finally, the last passage uses the representation of the mutual information $I(\{y_n\}; \{f(\bm x_n)\})$ presented by \citet{srinivas_gaussian_2010}.

Using \cref{eq:g_bounded_by_gamma}, we can show that the minimum $N_\varepsilon$ satisfying the claim of the theorem is the one given by \cref{eq:N_epsilon_generic}:
\begin{equation}\label{eq:N_epsilon_generic_proof}
N_\varepsilon = \min\left\{N \in \mathbb{N} : g^{-1}\left(\frac{C_1\gamma_N}{N}\right) \leq \varepsilon\right\}
\end{equation}
and we are now able to conclude that, as long as the information capacity grows sub-linearly in $N$, the set on the r.h.s. of~\cref{eq:N_epsilon_generic_proof} is not empty $\forall \varepsilon > 0$. This is guaranteed by the fact that $g^{-1}$ is monotonically increasing, since so is its inverse $g$.
To check that this $N_\varepsilon$ indeed satisfies the claim, one just has to apply $g^{-1}$ on both initial and final state of \cref{eq:g_bounded_by_gamma} and then substitute $N_\varepsilon$ in the place of $n$; the rest follows from the fact that the maximum variance is non increasing on $\mathcal{X}_n$ thanks to the assumption that $\mathcal{X}_{n+1} \subseteq \mathcal{X}_{n}$ for all $n$.
\end{proof}

\cref{lemma:generic_form_vanishing_variance} tells us that if an acquisition function satisfies the conditions of the Lemma and it is only allowed to sample within a non-expanding set, then it will cause the posterior variance to vanish over such set. Now we can use this result to show that, if we employ such an acquisition function to sample within $S_n$ at each iteration, then, starting from $\bm x_0$, with high probability we will eventually classify as safe the whole $S_\varepsilon(\bm x_0)$.
We start by deriving a sufficient condition for $S_n$ to include $S_\varepsilon(\bm x_0)$.

\begin{definition}[Per GP expansion operator]
Similarly to \cref{def:expansion_operator}, given a subset $\Omega$ of the domain, for each GP$_\mathcal{D}$ in the set $\bm \beta$-GP$_s^\varepsilon(\Omega)$, we can define the \emph{per GP} expansion operator $R^{\text{GP}_\mathcal{D}}_\varepsilon$ as
\begin{equation}
R^{\text{GP}_\mathcal{D}}_\varepsilon(\Omega) = \{\bm x \in \mathcal{X}: \bm x \in S_{\text{GP}_\mathcal{D}} \}.
\end{equation}
\end{definition}
Given the current safe set $S_n$, the set $R^{\text{GP}_\mathcal{D}}_\varepsilon(S_n)$ would then be the set of all points classified as safe by a specific GP in $\bm \beta$-GP$_s^\varepsilon(S_n)$, i.e.\@ by a particular well behaved GP whose uncertainty is smaller than $\varepsilon$ on $S_n$. Once we recall the definition of the expansion operator $R_\varepsilon$ \cref{eq:expansion_operator}, we see that $R_\varepsilon(S_n)$ is nothing else than the intersection of all $R^{\text{GP}_\mathcal{D}}_\varepsilon(S_n)$ for all $\text{GP}_\mathcal{D} \in \bm \beta$-GP$_s^\varepsilon(S_n)$, which offers a proof for the following result:

\begin{lemma}\label{lemma:inclusion_for_per_gp_operator}
Let $\Omega$ be a subset of the domain, then for any GP$_\mathcal{D}$, GP$_\mathcal{D}$ $\in$ $\bm \beta\text{-GP}_f^\varepsilon(\Omega)$ implies the inclusion $R^{\text{GP}_\mathcal{D}}_\varepsilon(\Omega) \supseteq R_\varepsilon(\Omega)$.
\end{lemma}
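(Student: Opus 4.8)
The plan is to simply unfold the two definitions and observe that $R_\varepsilon(\Omega)$ is, by construction, an \emph{intersection} of a family of sets indexed by the well-behaved GPs with uncertainty at most $\varepsilon$ over $\Omega$, while $R^{\text{GP}_\mathcal{D}}_\varepsilon(\Omega)$ is one particular member of that family. First I would recall from \cref{def:expansion_operator} that
\begin{equation*}
R_\varepsilon(\Omega) = \big\{\bm x \in \mathcal{X}: \bm x \in S_{\text{GP}_{\mathcal{D}'}} \text{ for all } \text{GP}_{\mathcal{D}'} \in \bm\beta\text{-GP}_s^\varepsilon(\Omega)\big\} = \bigcap_{\text{GP}_{\mathcal{D}'} \in \bm\beta\text{-GP}_s^\varepsilon(\Omega)} S_{\text{GP}_{\mathcal{D}'}},
\end{equation*}
and that by the Per GP expansion operator definition $R^{\text{GP}_\mathcal{D}}_\varepsilon(\Omega) = S_{\text{GP}_\mathcal{D}}$.

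The key step is then the hypothesis itself: $\text{GP}_\mathcal{D} \in \bm\beta\text{-GP}_s^\varepsilon(\Omega)$, so $\text{GP}_\mathcal{D}$ is one of the indices appearing in the intersection above. Hence, for any $\bm x \in R_\varepsilon(\Omega)$, the defining universal quantifier ``$\bm x \in S_{\text{GP}_{\mathcal{D}'}}$ for all $\text{GP}_{\mathcal{D}'} \in \bm\beta\text{-GP}_s^\varepsilon(\Omega)$'' in particular instantiates at $\text{GP}_{\mathcal{D}'} = \text{GP}_\mathcal{D}$, yielding $\bm x \in S_{\text{GP}_\mathcal{D}} = R^{\text{GP}_\mathcal{D}}_\varepsilon(\Omega)$. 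This gives the claimed inclusion $R_\varepsilon(\Omega) \subseteq R^{\text{GP}_\mathcal{D}}_\varepsilon(\Omega)$.

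There is essentially no technical obstacle here — the result is the elementary fact that an intersection of a family of sets is contained in each of its members, and the only thing one must verify is that the family is nonempty and that $\text{GP}_\mathcal{D}$ genuinely belongs to it, both of which are handed over directly by the hypothesis. The one point I would be careful about is notational consistency: the statement writes $\bm\beta\text{-GP}_f^\varepsilon(\Omega)$ whereas \cref{def:k_gp_beta_omega} and \cref{def:expansion_operator} are phrased in terms of $\bm\beta\text{-GP}_s^\varepsilon(\Omega)$ (the constraint $s$); I would align the subscript and make explicit that $R_\varepsilon$ in \cref{eq:expansion_operator} is defined relative to exactly the same family $\bm\beta\text{-GP}_s^\varepsilon(S)$ that appears in the per-GP operator, so that the instantiation step is legitimate.
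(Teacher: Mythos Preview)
Your proposal is correct and is exactly the paper's argument: the paper observes (in the paragraph preceding the lemma and in its one-line proof) that $R_\varepsilon(\Omega)$ is the intersection of the sets $R^{\text{GP}_{\mathcal{D}'}}_\varepsilon(\Omega)$ over all $\text{GP}_{\mathcal{D}'} \in \bm\beta\text{-GP}_s^\varepsilon(\Omega)$, so it is contained in any single member, in particular the one indexed by the assumed $\text{GP}_\mathcal{D}$. Your remark on the $f$ versus $s$ subscript is also well taken---it is indeed a typo in the statement.
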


\begin{proof}
The claim follows directly by the definitions of the two operators $R^{\text{GP}_\mathcal{D}}_\varepsilon$ and $R_\varepsilon$, since a parameter $\bm x$ that is not classified as safe by the specific GP$_\mathcal{D}$ is not, by definition, part of $R_\varepsilon(\Omega)$.
\end{proof}

With the help of the expansion operator $R^{\text{GP}_\mathcal{D}}_\varepsilon$ it is also straightforward to prove the following lemma:

\begin{lemma}\label{lemma:safe_set_at_epsilon_includes_reachable_set}
Fix $\varepsilon > 0$ and let $\hat{n}$ be such that $\beta_{\hat{n}}\sigma_{\hat{n}}^s(\bm x) \leq \varepsilon$ $\forall \bm x \in S_{\hat{n}}$. Then it follows that $S_{\hat{n}} \supseteq S_\varepsilon(\bm x_0)$ with probability at least $1 - \delta$.
\end{lemma}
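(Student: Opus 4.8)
The plan is to fix the high-probability event on which our actual posteriors behave well, and then show by induction on the number of applications of $R_\varepsilon$ that every iterate $R_\varepsilon^k(\bm x_0)$ is already contained in $S_{\hat n}$; passing to the limit then gives $S_\varepsilon(\bm x_0)\subseteq S_{\hat n}$.

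First I would set up the conditioning. Let $\mathcal{D}_{\hat n}$ be the data collected up to iteration $\hat n$ and let $E$ be the event that $s(\bm x)\in[\mu_n^s(\bm x)\pm\beta_n\sigma_n^s(\bm x)]$ for all $\bm x\in\mathcal{X}$ and all $n\le\hat n$; by \cref{lemma:in_beta_gp_with_high_prob}, $\mathbb{P}(E)\ge 1-\delta$. On $E$ the posterior $\text{GP}_{\mathcal{D}_{\hat n}}$ is well behaved, and the hypothesis $\beta_{\hat n}\sigma_{\hat n}^s(\bm x)\le\varepsilon$ for all $\bm x\in S_{\hat n}$ then places it in $\bm \beta\text{-GP}_s^\varepsilon(S_{\hat n})$; moreover its associated safe set, in the sense of \cref{def:expansion_operator} and \cref{eq:safe_set_definition}, is exactly the realized safe set $S_{\hat n}$. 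This is precisely the step that uses both the hypothesis on the uncertainty over $S_{\hat n}$ and the well-behavedness event, and I expect it to be the main point requiring care — essentially a bookkeeping check that $S_{\text{GP}_{\mathcal{D}_{\hat n}}}=S_{\hat n}$ and that the membership in $\bm \beta\text{-GP}_s^\varepsilon(S_{\hat n})$ is genuine.

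Next I would record two elementary facts about $R_\varepsilon$. (i) Monotonicity: if $\Omega_1\subseteq\Omega_2$ then $\bm \beta\text{-GP}_s^\varepsilon(\Omega_2)\subseteq\bm \beta\text{-GP}_s^\varepsilon(\Omega_1)$, since requiring uncertainty below $\varepsilon$ on the larger set is a stronger condition; hence the intersection defining $R_\varepsilon(\Omega_2)$ runs over fewer GPs and $R_\varepsilon(\Omega_1)\subseteq R_\varepsilon(\Omega_2)$. (ii) $R_\varepsilon(S_{\hat n})\subseteq S_{\hat n}$: since $\text{GP}_{\mathcal{D}_{\hat n}}\in\bm \beta\text{-GP}_s^\varepsilon(S_{\hat n})$, \cref{lemma:inclusion_for_per_gp_operator} gives $R_\varepsilon(S_{\hat n})\subseteq R_\varepsilon^{\text{GP}_{\mathcal{D}_{\hat n}}}(S_{\hat n})=S_{\text{GP}_{\mathcal{D}_{\hat n}}}=S_{\hat n}$.

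Finally the induction. For $k=0$, $R_\varepsilon^0(\bm x_0)=\{\bm x_0\}=S_0\subseteq S_{\hat n}$ by the nestedness built into \cref{eq:safe_set_definition}. Assuming $R_\varepsilon^k(\bm x_0)\subseteq S_{\hat n}$, fact (i) gives $R_\varepsilon^{k+1}(\bm x_0)=R_\varepsilon\big(R_\varepsilon^k(\bm x_0)\big)\subseteq R_\varepsilon(S_{\hat n})$, and fact (ii) then gives $R_\varepsilon^{k+1}(\bm x_0)\subseteq S_{\hat n}$. Since $\{\bm x_0\}\subseteq R_\varepsilon(\{\bm x_0\})$ (the seed lies in every safe set) and $R_\varepsilon$ is monotone, the iterates form an increasing chain, so $S_\varepsilon(\bm x_0)=\lim_{k\to\infty}R_\varepsilon^k(\bm x_0)=\bigcup_{k\ge 0}R_\varepsilon^k(\bm x_0)\subseteq S_{\hat n}$. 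All of this holds on $E$, i.e.\ with probability at least $1-\delta$, which is the claim. Apart from the membership check mentioned above, every step here is routine, so I would not expect to spend much text on them.
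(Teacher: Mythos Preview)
Your proposal is correct and follows essentially the same approach as the paper: induction on the number of applications of $R_\varepsilon$, using \cref{lemma:in_beta_gp_with_high_prob} for the high-probability well-behavedness and \cref{lemma:inclusion_for_per_gp_operator} to pass from $R_\varepsilon$ to the per-GP safe set $S_{\text{GP}_{\mathcal{D}_{\hat n}}}=S_{\hat n}$. The only cosmetic difference is that you isolate monotonicity of $R_\varepsilon$ and the inclusion $R_\varepsilon(S_{\hat n})\subseteq S_{\hat n}$ as standalone facts and chain them, whereas the paper re-applies the per-GP inclusion directly at each inductive level (using that $R_\varepsilon^{j-1}(\bm x_0)\subseteq S_{\hat n}$ forces $\text{GP}_{\mathcal{D}_{\hat n}}\in\bm\beta\text{-GP}_s^\varepsilon(R_\varepsilon^{j-1}(\bm x_0))$); the underlying content is identical.
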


\begin{proof}
We prove the claim by showing that $S_{\hat{n}} \supseteq R_\varepsilon^j(\bm x_0)$ $\forall j$, which we can show by induction. For the base case, let's take $j = 1$ and show that $S_{\hat{n}} \supseteq R_\varepsilon(\bm x_0)$. This inclusion follows directly from Lemma \ref{lemma:inclusion_for_per_gp_operator}, once we recall that the GP that generated $S_{\hat{n}}$, denoted as GP$_{\hat{\mathcal{D}}}$, is in $\beta\text{-GP}_f^\varepsilon(\bm x_0)$ with probability of at least $1 - \delta$ (thanks to Lemma \ref{lemma:in_beta_gp_with_high_prob}) and after noting that $S_{\hat{n}}$ is nothing else than $R^{\text{GP}_{\hat{\mathcal{D}}}}_\varepsilon(\bm x_0)$. For the induction step, let's assume that $S_{\hat{n}} \supseteq R_\varepsilon^{j-1}(\bm x_0)$ and show that $S_{\hat{n}} \supseteq R_\varepsilon^{j}(\bm x_0)$. The reasoning is completely analogous to the base case with the substitution $\bm x_0 \rightarrow R_\varepsilon^{j-1}(\bm x_0)$.
\end{proof}

Lemma \ref{lemma:safe_set_at_epsilon_includes_reachable_set} tells us that if there exists an iteration $\hat{n}$ at which $\beta_{\hat{n}}\sigma_{\hat{n}}$ is less or equal to $\varepsilon$ over $S_{\hat{n}}$, then with high probability we have classified as safe a region that includes the goal set $S_\varepsilon(\bm x_0)$. 

Lastly, before we can prove \cref{thm:exploration_convergence,thm:combined_convergence}, we need two more results about $\bar{S}(s)$, which we recall we defined as the total true safe set: $\bar{S}(s) \coloneqq \left\{\bm x \in \mathcal{X} : s(\bm x) \geq 0\right\}$.

\begin{lemma}\label{lemma:safe_set_in_s0}
For any GP$_\mathcal{D}$ such that $h(\bm x) \in [\mu_n(\bm x) \pm \beta_n\sigma_n(\bm x)]$ for all $\bm x$ and for all $n\leq |\mathcal{D}|$, it holds that $S_n \subseteq \bar{S}(s)$.
\end{lemma}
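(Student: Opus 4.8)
The plan is to prove the inclusion by induction on $n \le |\mathcal{D}|$, peeling off the recursive union in the definition \cref{eq:safe_set_definition} of $S_n$ one layer at a time. The key observation is that membership in $S_n$ can only arise in one of two ways: a parameter is either inherited from $S_{n-1}$, or it is freshly added because its $\beta_n$-lower confidence bound is nonnegative — and in the second case the ``well behaved'' hypothesis of the lemma forces the true constraint value to be nonnegative too.

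For the base case $n = 0$, recall that $S_0 = \{\bm x_0\}$ and that $\bm x_0$ is assumed to be a safe seed with $s(\bm x_0) \ge 0$, so $S_0 \subseteq \bar{S}(s)$ trivially. For the inductive step, assume $S_{n-1} \subseteq \bar{S}(s)$ and take $\bm x \in S_n$. By \cref{eq:safe_set_definition}, either $\bm x \in S_{n-1}$, in which case the induction hypothesis already gives $s(\bm x) \ge 0$, or $\bm x$ lies in the newly added set, i.e.\ $\mu_n^s(\bm x) - \beta_n\sigma_n^s(\bm x) \ge 0$. In the latter case I would invoke the hypothesis of the lemma: since $h(\bm x, i) \in [\mu_n(\bm x, i) \pm \beta_n\sigma_n(\bm x, i)]$ for all arguments and all $n \le |\mathcal{D}|$, specializing to the index $i = 1$ and using the identification $s(\bm x) = h(\bm x, 1)$ together with \cref{eq:separate_posterior_f_s} yields $s(\bm x) \ge \mu_n^s(\bm x) - \beta_n\sigma_n^s(\bm x) \ge 0$. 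Either way $\bm x \in \bar{S}(s)$, which closes the induction and proves $S_n \subseteq \bar{S}(s)$ for all $n \le |\mathcal{D}|$.

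This argument is essentially bookkeeping, so I do not expect a genuine obstacle. The only two points that require a little care are: (i) the confidence-bound hypothesis is phrased for the composite function $h$ on the extended domain $\tilde{\mathcal{X}}$, so it must first be translated into a statement about $s$ via the relations in \cref{eq:separate_posterior_f_s} before it can be applied to the definition of $S_n$; and (ii) the induction only runs up to $n = |\mathcal{D}|$, which is exactly the range of iterations over which the hypothesis is assumed to hold, so the statement should be understood (and proved) only for those $n$.
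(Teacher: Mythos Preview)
Your proof is correct and follows the same idea as the paper's: translate the well-behaved hypothesis on $h$ into the corresponding confidence interval on $s$, and observe that a nonnegative lower confidence bound forces $s(\bm x)\ge 0$. The paper's version is a one-line sketch that leaves the recursive union in \cref{eq:safe_set_definition} implicit, whereas you make this explicit via induction on $n$; this is additional care rather than a different approach.
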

\begin{proof}
The claim is a direct consequence of the well behaved nature of the GP, as $h(\bm x) \in [\mu_n(\bm x) \pm \beta_n\sigma_n(\bm x)]$ implies a similar condition for the safety constraint $s$, so that if the lower bound of the confidence interval is above zero, so is the true value of the constraint $s$.
\end{proof}

\begin{lemma}\label{lemma:s_bar_in_s0}
$\forall \varepsilon > 0$ it holds that $S_\varepsilon(\bm x_0) \subseteq \bar{S}(s)$.
\end{lemma}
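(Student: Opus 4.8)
The plan is to prove the inclusion by induction on the number of applications of the expansion operator $R_\varepsilon$, with \cref{lemma:safe_set_in_s0} doing the heavy lifting, and then pass to the limit in \cref{def:largest_safe_set}. Concretely, I would show that $R_\varepsilon^j(\bm x_0) \subseteq \bar{S}(s)$ for every $j \geq 0$, where $R_\varepsilon^0(\bm x_0) \doteq \{\bm x_0\}$. The base case $j = 0$ is immediate, since by assumption the safe seed satisfies $s(\bm x_0) \geq 0$, i.e.\@ $\bm x_0 \in \bar{S}(s)$.

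For the inductive step, suppose $\Omega \coloneqq R_\varepsilon^j(\bm x_0) \subseteq \bar{S}(s)$. By \cref{def:expansion_operator}, every $\bm x \in R_\varepsilon(\Omega)$ lies in $S_{\text{GP}_\mathcal{D}}$ for \emph{every} $\text{GP}_\mathcal{D} \in \bm\beta\text{-GP}_s^\varepsilon(\Omega)$, so it is enough to exhibit a \emph{single} $\text{GP}_\mathcal{D}$ in this set. Once we have one, $R_\varepsilon^{j+1}(\bm x_0) = R_\varepsilon(\Omega) \subseteq S_{\text{GP}_\mathcal{D}}$, and since such a GP is well behaved (its $\beta$-confidence intervals contain $s$), \cref{lemma:safe_set_in_s0} gives $S_{\text{GP}_\mathcal{D}} \subseteq \bar{S}(s)$, which closes the induction.

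The step that needs care — and the main obstacle — is showing that $\bm\beta\text{-GP}_s^\varepsilon(\Omega)$ is non-empty; otherwise $R_\varepsilon(\Omega)$ would vacuously be all of $\mathcal{X}$ and the argument would collapse. I would argue non-emptiness as follows: since $\mathcal{X}$ is finite, consider a dataset $\mathcal{D}$ that evaluates the constraint at each point of $\Omega$ a large number $m$ of times. The posterior variance $\sigma_{|\mathcal{D}|}^s$ does not depend on the observed values and can be driven below any positive level on the finite set $\Omega$ by taking $m$ large, while in the discrete setting $\beta_{|\mathcal{D}|}$ grows only sublinearly in $|\mathcal{D}|$ (this is exactly the observation that makes \cref{def:k_gp_beta_omega} non-trivial); hence $\beta_{|\mathcal{D}|}\sigma_{|\mathcal{D}|}^s(\bm x) \leq \varepsilon$ for all $\bm x \in \Omega$ once $m$ is large enough. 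With the evaluation locations fixed this way, \cref{lemma:in_beta_gp_with_high_prob} guarantees that the $\beta$-confidence intervals contain $s$ with probability at least $1 - \delta > 0$ over the noise, so at least one noise realization yields a well behaved $\text{GP}_\mathcal{D}$; that GP lies in $\bm\beta\text{-GP}_s^\varepsilon(\Omega)$, as required.

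Finally, I would pass to the limit: since $\bm x_0 \in R_\varepsilon(\bm x_0)$ and $R_\varepsilon$ is monotone in its argument, the iterates $R_\varepsilon^n(\bm x_0)$ form an increasing sequence of sets, so $S_\varepsilon(\bm x_0) = \lim_{n\to\infty} R_\varepsilon^n(\bm x_0) = \bigcup_n R_\varepsilon^n(\bm x_0)$. Each term of this union is contained in $\bar{S}(s)$ by the induction above, hence so is $S_\varepsilon(\bm x_0)$, giving the claim.
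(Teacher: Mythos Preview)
Your proposal is correct and follows essentially the same route as the paper: both arguments use \cref{lemma:safe_set_in_s0} to conclude that each iterate $R_\varepsilon^{j}(\bm x_0)$ is contained in $\bar{S}(s)$, and then pass to the limit. The paper's one-line proof simply observes that every $\text{GP}_\mathcal{D}\in\bm\beta\text{-GP}_s^\varepsilon\!\left(R_\varepsilon^j(\bm x_0)\right)$ is well behaved by definition, so \cref{lemma:safe_set_in_s0} applies; you spell out the same idea more carefully.

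Two minor remarks. First, your inductive hypothesis $\Omega\subseteq\bar{S}(s)$ is never actually used in the inductive step---the containment $R_\varepsilon(\Omega)\subseteq S_{\text{GP}_\mathcal{D}}\subseteq\bar{S}(s)$ holds for any $\Omega$ once you have a single well-behaved GP in $\bm\beta\text{-GP}_s^\varepsilon(\Omega)$---so the argument is really a direct one for each $j$, as in the paper. Second, your explicit treatment of non-emptiness of $\bm\beta\text{-GP}_s^\varepsilon(\Omega)$ is a genuine addition: the paper's proof tacitly assumes it, whereas you supply a concrete construction (repeated evaluations on the finite set $\Omega$ together with \cref{lemma:in_beta_gp_with_high_prob}) that rules out the vacuous case $R_\varepsilon(\Omega)=\mathcal{X}$.
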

\begin{proof}
This result follows directly from Lemma \ref{lemma:safe_set_in_s0}, since all GPs in $\bm \beta$-GP$_s^\varepsilon\left(R_\varepsilon^j(\bm x_0)\right)$ satisfy by definition the assumption of \cref{lemma:safe_set_in_s0} $\forall j$.
\end{proof}

With these results, we can now show that, by starting with $S_n = \{\bm x_0\}$ and then sampling in $S_n$ according to an acquisition function that satisfies the assumptions of \cref{lemma:generic_form_vanishing_variance}, we will eventually classify the whole $S_\varepsilon(\bm x_0)$ as safe.

\begin{lemma}\label{lemma:convergence_bound_safe_set_generic}
Let the domain $\mathcal{X}$ be discrete, and assume that $\bar{S}(s)$ contains a finite number of elements: $|\bar{S}(s)| \leq D_{\max} < \infty$. Then if we choose the parameters to evaluate as the maximizers of an acquisition function that satisfies the assumptions of \cref{lemma:generic_form_vanishing_variance} it holds true that $\forall \varepsilon > 0$ there exist $N_{\varepsilon}$ and $n^* \leq D_{\max} N_{\varepsilon}$ such that, with probability of at least $1 - \delta$, $S_n \supseteq S_\varepsilon(\bm x_0)$ $\forall n \geq n^*$. The smallest $N_\varepsilon$ is given by:
\begin{equation}\label{eq:generic_smallest_n_epsilon}
N_\varepsilon = \min\left\{N \in \mathbb{N} : \beta_n g^{-1}\left(\frac{C_1\gamma_N}{N}\right) \leq \varepsilon\right\}
\end{equation}
where $\gamma_n$, the constant $C_1$ and the function $g$ are as in \cref{lemma:generic_form_vanishing_variance}.
\end{lemma}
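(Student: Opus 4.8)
The plan is to combine three ingredients: the deterministic variance–decay bound of \cref{lemma:generic_form_vanishing_variance}, the finiteness of the true safe set $\bar S(s)$ (which forces the safe set to expand only finitely often), and \cref{lemma:safe_set_at_epsilon_includes_reachable_set}, which converts a statement of the form ``$\beta_n\sigma_n^s(\bm x)\le\varepsilon$ for all $\bm x\in S_n$'' into the inclusion $S_n\supseteq S_\varepsilon(\bm x_0)$. First I would condition on the event that the GP generating the sequence $\{S_n\}$ is well behaved; by \cref{lemma:in_beta_gp_with_high_prob} this event has probability at least $1-\delta$, and everything below is deterministic on it. On this event \cref{lemma:safe_set_in_s0} gives $S_n\subseteq\bar S(s)$ for every $n$, hence $|S_n|\le D_{\max}$; since $S_n$ is non-decreasing in $n$ with $|S_0|=1$, a strict inclusion $S_n\subsetneq S_{n+1}$ can occur for at most $D_{\max}-1$ values of $n$.

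The heart of the argument is a pigeonhole over blocks of iterations. Partition the iteration indices into consecutive blocks $B_k=\{(k-1)N_\varepsilon+1,\dots,kN_\varepsilon\}$ of length $N_\varepsilon$, with $N_\varepsilon$ given by \cref{eq:generic_smallest_n_epsilon}. In any block $B_k$ containing no expansion, the safe set is constant throughout $B_k$, and since on each iteration the algorithm maximizes $\alpha_n$ over (the lift of) this fixed set, \cref{lemma:generic_form_vanishing_variance} applies with $\mathcal{X}_n$ equal to that constant set — the required monotonicity $\mathcal{X}_{n+1}\subseteq\mathcal{X}_n$ holds trivially because the set does not change — and with the reference iteration taken at the start of the block. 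Consequently, at the end of the block the maximal posterior standard deviation over $S_n$ is small enough that $\beta_n\sigma_n^s(\bm x)\le\varepsilon$ for all $\bm x\in S_n$. The only difference from \cref{eq:N_epsilon_generic} is that we now need the $\beta_n$–scaled deviation below $\varepsilon$ rather than the bare quantity appearing in \cref{lemma:generic_form_vanishing_variance}; rescaling the target by $\beta_n$ and using that $\beta_n$ is increasing yields precisely \cref{eq:generic_smallest_n_epsilon}.

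Now count: expansions occur in at most $D_{\max}-1$ distinct blocks, so among the first $D_{\max}$ blocks at least one is expansion-free. Letting $n^*\le D_{\max}N_\varepsilon$ be the last iteration of such a block, we have $\beta_{n^*}\sigma_{n^*}^s(\bm x)\le\varepsilon$ for all $\bm x\in S_{n^*}$. Then \cref{lemma:safe_set_at_epsilon_includes_reachable_set} gives $S_{n^*}\supseteq S_\varepsilon(\bm x_0)$, and monotonicity of the safe set propagates the inclusion to every $n\ge n^*$, which is the claim.

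The main obstacle I anticipate is structural rather than computational: \cref{lemma:generic_form_vanishing_variance} is stated for a \emph{non-increasing} family $\mathcal{X}_n$, whereas the safe set is \emph{increasing}, so the variance–decay estimate can only be applied ``between expansions'', which is exactly what the block decomposition is designed to exploit. Beyond that there is only routine bookkeeping — deciding whether an expansion straddling the boundary of two blocks is charged to the earlier or later block, and carrying the $\beta_n$ factor through the conversion from the quantity controlled in \cref{lemma:generic_form_vanishing_variance} to $\beta_n\sigma_n^s$ — none of which affects the stated bound beyond a harmless $O(1)$ shift that can be absorbed by enlarging blocks by one iteration.
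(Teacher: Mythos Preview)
Your proposal is correct and follows essentially the same approach as the paper: both arguments partition the iterations into windows of length $N_\varepsilon$, invoke \cref{lemma:generic_form_vanishing_variance} on any expansion-free window, bound the number of expansions via $|\bar S(s)|\le D_{\max}$ (through \cref{lemma:in_beta_gp_with_high_prob} and \cref{lemma:safe_set_in_s0}), and finish with \cref{lemma:safe_set_at_epsilon_includes_reachable_set}. The only cosmetic difference is that the paper closes the count with the alternative termination $S_n=\bar S(s)\supseteq S_\varepsilon(\bm x_0)$ via \cref{lemma:s_bar_in_s0}, whereas your explicit pigeonhole over the first $D_{\max}$ blocks guarantees an expansion-free block directly and makes that extra lemma unnecessary.
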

\begin{proof}
Thanks to \cref{lemma:generic_form_vanishing_variance} we know that that $\forall \tilde{\varepsilon} > 0$, if the safe set stops expanding at iteration $\hat{n}$ it is possible to find $N_{\tilde{\varepsilon}}$ such that $\sigma_n \leq \tilde{\varepsilon}$ over the safe set $\forall n \geq \hat{n}+ N_{\tilde{\varepsilon}}$. Let $N_{\varepsilon/\beta}$ be such constant for the threshold $\varepsilon / \beta$, as, for example, the one in \cref{eq:generic_smallest_n_epsilon}, where, for simplicity, we have dropped the subscript for $\beta$, which should be $N_\varepsilon$. Because of the definition of $S_n$ as given in \cref{eq:safe_set_definition}, we have that $S_{n+1} \supseteq S_n$ $\forall n$, which means that $\forall n$ either $S_{n + N_{\varepsilon/\beta}} \supset S_n$ or $S_{n + N_{\varepsilon/\beta}} \setminus S_n = \emptyset$ and, therefore, $\beta\sigma_{n + N_{\varepsilon}/\beta} \leq \varepsilon$ over the safe set. In the second case, thanks to Lemma \ref{lemma:safe_set_at_epsilon_includes_reachable_set} we would have that, with probability of at least $1 - \delta$, $S_{n + N_{\varepsilon/\beta}} \supseteq S_\varepsilon(\bm x_0)$. On the other hand, in the first case, the worst case scenario is the one in which the safe set expands of a single element (i.e. $|S_{n+N_{\varepsilon/\beta}} \setminus S_n| = 1$). In that case, starting from $\bm x_0$, thanks to Lemma \ref{lemma:in_beta_gp_with_high_prob} and Lemma \ref{lemma:safe_set_in_s0} we know that if we add $D_{\max}$ elements to $S_n$, then, with probability of at least $1 - \delta$, $S_n = \bar{S}(s)$, which, thanks to Lemma \ref{lemma:s_bar_in_s0} implies that $S_n \supseteq S_\varepsilon(\bm x_0)$. The proof is then complete once we recall once more that $S_{n+1} \supseteq S_n$ $\forall n$.
\end{proof}

With \cref{lemma:convergence_bound_safe_set_generic} at hand, in order to prove \cref{thm:exploration_convergence} we now only need to show that the exploration component of our acquisition function, $\alpha^{\ourmethodexp}$, satisfies the assumptions of \cref{lemma:generic_form_vanishing_variance}.

To simplify the notation, in the following we define $\Delta \hat{H}_n(\bm x, \bm z) = \hat{I}_n\left(\{\bm x, y\}; \cpsi(\bm z)\right)$.

\begin{lemma}\label{lemma:monotonicity_ratio}
The mutual information $\hat{I}_n(\{\bm x, y\}; \cpsi(\bm z))$ as given by \cref{eq:approx_exploration_mutual_info} is monotonically decreasing in $\mu_n^2(\bm z) / \sigma_n^2(\bm z)$ $\forall \bm x, \bm z \in \mathcal{X}$.
\end{lemma}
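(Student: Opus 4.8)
The plan is to reduce the statement to a one-dimensional monotonicity claim. Write $t \coloneqq \mu_n^2(\bm z)/\sigma_n^2(\bm z) \ge 0$. Since $c_1 = 1/(\pi\ln 2)$, the approximate prior entropy \cref{eq:approximated_psi_entropy} equals $\hat H_n[\cpsi(\bm z)] = \ln(2)\,e^{-c_1 t}$, while the averaged post-measurement entropy \cref{eq:averaged_post_measurement_entropy} equals $\mathbb{E}_y\big[\hat H_{n+1}[\cpsi(\bm z)\mid\{\bm x,y\}]\big] = \ln(2)\,A\,e^{-c_1 B t}$, where, writing $\rho_n^2$ for $\rho_n^2(\bm x,\bm z)$,
\begin{equation*}
A^2 = \frac{\sigma_\nu^2 + \sigma_n^2(\bm x)(1-\rho_n^2)}{\sigma_\nu^2 + \sigma_n^2(\bm x)(1 + c_2\rho_n^2)}, \quad B = \frac{\sigma_\nu^2 + \sigma_n^2(\bm x)}{\sigma_\nu^2 + \sigma_n^2(\bm x)(1+c_2\rho_n^2)}.
\end{equation*}
The key observation is that $A$ and $B$ depend on $\bm x$ and $\bm z$ only through $\rho_n(\bm x,\bm z)$ and $\sigma_n^2(\bm x)$, and are independent of $t$. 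Hence $\hat I_n(\{\bm x,y\};\cpsi(\bm z)) = \ln(2)\big(e^{-c_1 t} - A\,e^{-c_1 B t}\big)$, so $\partial_t \hat I_n = c_1\ln(2)\big(AB\,e^{-c_1 B t} - e^{-c_1 t}\big)$, and it suffices to show $AB\,e^{-c_1 B t}\le e^{-c_1 t}$ for every $t\ge 0$ (with $A,B$ held fixed).

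I would then establish two facts. First, since $c_2 = 2c_1-1 = \tfrac{2}{\pi\ln 2}-1 \in (-1,0)$ (using $2<\pi\ln 2$), the denominator $\sigma_\nu^2 + \sigma_n^2(\bm x)(1+c_2\rho_n^2)$ is positive and at most $\sigma_\nu^2 + \sigma_n^2(\bm x)$, so $B\ge 1$ and thus $e^{-c_1 B t}\le e^{-c_1 t}$ for $t\ge 0$. Second, I claim $AB\le 1$. Setting $a=\sigma_\nu^2>0$, $b=\sigma_n^2(\bm x)\ge 0$, $u = b\rho_n^2\in[0,b]$ and $m = a+b$, a short computation gives $AB = m\sqrt{m-u}\,(m+c_2u)^{-3/2}$, so $AB\le 1$ is equivalent to $m^2(m-u)\le(m+c_2u)^3$. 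On the range $0\le u\le b<m$ we have $0<m+c_2u\le m$ (since $|c_2|<1$), so $\phi(u)\coloneqq(m+c_2u)^3 - m^2(m-u)$ is well defined, $\phi(0)=0$, and $\phi'(u) = 3c_2(m+c_2u)^2 + m^2 \ge 3c_2 m^2 + m^2 = m^2(1+3c_2)>0$, where the inequality uses $(m+c_2u)^2\le m^2$ and $c_2<0$, and the strict positivity uses $c_2>-\tfrac13$, i.e.\@ $\pi\ln 2<3$. Hence $\phi$ is increasing on $[0,m)$ and $\phi\ge 0$, giving $AB\le 1$. Combining the two facts, $AB\,e^{-c_1 B t}\le e^{-c_1 B t}\le e^{-c_1 t}$, so $\partial_t\hat I_n\le 0$; the degenerate case $\sigma_n^2(\bm x)=0$ is immediate since then $A=B=1$ and $\hat I_n\equiv 0$.

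The main obstacle is the algebraic inequality $AB\le 1$. It is not implied by the individually easy bounds $A<1$ and $B\ge 1$, and its proof depends on carefully tracking the admissible range $u = b\rho_n^2\in[0,b]$ so that $m+c_2u$ stays positive with $(m+c_2u)^2\le m^2$, and on the numerical pinch $2<\pi\ln 2<3$, which is exactly what forces $-\tfrac13<c_2<0$ and hence $1+3c_2>0$. A minor step to check along the way is $\rho_n(\bm x,\bm z)^2\le 1$, needed so that $A$ is real and the denominators are positive; this holds because $\rho_n$ is a correlation coefficient. This lemma is the lever used downstream to bound $\hat I_n$ from below by an increasing function of $\sigma_n(\bm x)$, as required to apply \cref{lemma:generic_form_vanishing_variance}.
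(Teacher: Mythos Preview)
Your proof is correct and follows essentially the same route as the paper's: both differentiate with respect to $t=\mu_n^2(\bm z)/\sigma_n^2(\bm z)$ and reduce the sign of the derivative to the algebraic inequality $AB\le 1$, which in the paper's parametrization $\tilde\rho^2=\rho_\nu^2(\bm x)\rho_n^2(\bm x,\bm z)$ reads $(1+c_2\tilde\rho^2)^3\ge 1-\tilde\rho^2$. Your version is in fact slightly more complete, since you explicitly prove that inequality via $\phi'(u)\ge m^2(1+3c_2)>0$ (using $c_2>-\tfrac13$, i.e.\ $\pi\ln 2<3$), whereas the paper simply asserts that the corresponding logarithmic bracket is positive.
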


\begin{proof}
First of all, let us simplify notation and define $R^2 \coloneqq \mu^2_n(\bm z) / \sigma^2_n(\bm z)$ and $\tilde{\rho}^2 \coloneqq \rho_\nu^2(\bm x)\rho^2_n(\bm x, \bm z)$. We then need to show that:
\begin{equation}\label{eq:partial_ratio}
\frac{\partial}{\partial R^2}\left[\exp\left\{-c_1R^2\right\} - \sqrt{\frac{1 - \tilde{\rho}^2}{1 + c_2\tilde{\rho^2}}}\exp\left\{-c_1R^2\frac{1}{1 + c_2\tilde{\rho}^2}\right\}\right] < 0 ~\forall R, ~\forall \tilde{\rho} \in [0, 1]
\end{equation}
We then can compute the derivative and ask under which conditions it is non negative. Requiring \cref{eq:partial_ratio} to be non negative is equivalent to ask that:
\begin{equation}
R^2 \leq \frac{1 + c_2\tilde{\rho}^2}{c_1c_2\tilde{\rho}^2}\left[\ln\left(1 + c_2\tilde{\rho}^2\right) + \frac{1}{2}\ln\left(\frac{1 + c_2\tilde{\rho}^2}{1 - \tilde{\rho}^2}\right)\right]
\end{equation}
Now, we observe that, since $c_2 \in (-1, 0)$, while $c_1>0$ and $\tilde{\rho}^2 \in [0, 1]$, the factor $(1 + c_2\tilde{\rho}^2) / c_1c_2\tilde{\rho}^2$ is always negative. For what concerns the sum of logarithms in the square brackets, it is strictly positive $\forall \tilde{\rho}^2 \in [0, 1]$, which means that, for \cref{eq:partial_ratio} to be non negative, we would need $R^2 < 0$, which is impossible, given that $R \in \mathbb{R}$.
\end{proof}

\begin{lemma}\label{lemma:delta_h_less_than_sigma}
$\forall n$, $\forall \bm x \in S_n$, $\forall \bm z \in \mathcal{X}$, it holds that
\begin{equation}
\Delta \hat{H}_n(\bm x, \bm z) \leq \ln(2) \frac{\sigma_n^2(\bm x)}{\sigma_\nu^2}.
\end{equation}
\end{lemma}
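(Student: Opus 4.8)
The plan is to collapse the claim to a one-variable inequality by using \cref{lemma:monotonicity_ratio}, and then close it with elementary algebra. First I would write $\Delta\hat{H}_n(\bm x,\bm z) = \hat{H}_n[\cpsi(\bm z)] - \mathbb{E}_y[\hat{H}_{n+1}[\cpsi(\bm z)\mid\{\bm x,y\}]]$ explicitly from \cref{eq:approximated_psi_entropy,eq:averaged_post_measurement_entropy}, abbreviating $a \coloneqq \sigma_n^2(\bm x)$, $b \coloneqq \sigma_\nu^2$, $\rho^2 \coloneqq \rho_n^2(\bm x,\bm z) \in [0,1]$ and $R^2 \coloneqq \mu_n^2(\bm z)/\sigma_n^2(\bm z) \ge 0$, so that
\[
\Delta\hat{H}_n(\bm x,\bm z) = \ln(2)\left[e^{-c_1 R^2} - \sqrt{\tfrac{a+b-a\rho^2}{a+b+c_2 a\rho^2}}\;e^{-c_1 R^2\frac{a+b}{a+b+c_2 a\rho^2}}\right].
\]

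Then I would invoke \cref{lemma:monotonicity_ratio}: the right-hand side is monotonically decreasing in $R^2$ on $[0,\infty)$, hence it is maximal at $R^2 = 0$, where it equals $\ln(2)\,[\,1 - \sqrt{(a+b-a\rho^2)/(a+b+c_2 a\rho^2)}\,]$. So it suffices to prove the scalar inequality
\[
1 - \sqrt{\tfrac{a+b-a\rho^2}{a+b+c_2 a\rho^2}} \;\le\; \frac{a}{b} \qquad \text{for all } a,b>0,\ \rho^2\in[0,1],
\]
recalling that $c_2 = 2c_1 - 1 \in (-1,0)$, which in particular makes $a+b+c_2 a\rho^2 > b > 0$, so the square root is well defined and the left-hand side lies in $[0,1)$.

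To prove the scalar inequality I would split on the sign of $a/b - 1$. If $a \ge b$, the left-hand side is below $1 \le a/b$ and there is nothing to do. If $a < b$, then $1 - a/b = (b-a)/b \in (0,1)$ and the inequality is equivalent to $\sqrt{(a+b-a\rho^2)/(a+b+c_2 a\rho^2)} \ge (b-a)/b$; since both sides are nonnegative I can square and cross-multiply (legal since $a+b+c_2 a\rho^2 > 0$) to get $b^2(a+b-a\rho^2) \ge (b-a)^2(a+b+c_2 a\rho^2)$. Using $b^2 - (b-a)^2 = a(2b-a)$ and cancelling a factor of $a>0$, this reduces to $(a+b)(2b-a) \ge \rho^2\,[\,b^2 + c_2(b-a)^2\,]$. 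The left-hand side equals $2b^2 + a(b-a) \ge 2b^2$, while the right-hand side is at most $b^2$ because $\rho^2 \le 1$ and $c_2 < 0$, so the inequality holds with margin.

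The only steps that need care are the two reductions: that the bound from \cref{lemma:monotonicity_ratio} is indeed the value at $R^2 = 0$ (guaranteed by monotonicity on the whole ray $[0,\infty)$), and that all denominators stay strictly positive throughout the squaring and cross-multiplication (which follows from $c_2 > -1$ together with $\rho^2 \le 1$). The rest is a routine polynomial comparison; the underlying reason it works is that the prefactor $1+c_2 = 2c_1$ hidden inside the ratio is comfortably below $1$, so the crude bound $\ln(2)\,\sigma_n^2(\bm x)/\sigma_\nu^2$ — rather than something proportional to $c_1$ — leaves ample slack.
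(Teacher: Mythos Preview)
Your argument is correct. The opening move---invoking \cref{lemma:monotonicity_ratio} to reduce to $R^{2}=0$---matches the paper exactly, but from that point your finish diverges from the paper's. Where you split on the sign of $a-b$, square, and do a polynomial comparison, the paper instead observes that the combined quantity $\tilde{\rho}^{2}\coloneqq \rho_{\nu}^{2}(\bm x)\rho_{n}^{2}(\bm x,\bm z)=\tfrac{a\rho^{2}}{a+b}\in[0,1]$ lets you write the bound at $R^{2}=0$ as $\ln(2)\bigl[1-\sqrt{(1-\tilde{\rho}^{2})/(1+c_{2}\tilde{\rho}^{2})}\bigr]$; then $c_{2}<0$ drops the denominator to $1$, the elementary inequality $1-\sqrt{1-t}\le t$ gives $\ln(2)\tilde{\rho}^{2}$, and two trivial relaxations ($\rho_{n}^{2}\le 1$, then $\rho_{\nu}^{2}=a/(a+b)\le a/b$) finish it in four lines. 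The paper's chain is shorter and en route yields the sharper intermediate bound $\ln(2)\,\sigma_{n}^{2}(\bm x)/(\sigma_{n}^{2}(\bm x)+\sigma_{\nu}^{2})$; your direct algebraic attack is a bit heavier but perfectly valid and self-contained, not relying on spotting the $1-\sqrt{1-t}\le t$ trick.
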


\begin{proof}
This result can be shown directly with the following inequality chain:
\begin{equation}
\begin{split}
\hat{I}_n\left(\{\bm x, y\}; \Psi(\bm z)\right) &\leq \ln(2)\left[1 - \sqrt{\frac{1 - \rho_\nu^2(\bm x)\rho_n^2(\bm x, \bm z)}{1 + c_2\rho_\nu^2(\bm x)\rho_n^2(\bm x, \bm z)}}\right] \\
&\stackrel{c_2 \in (-1, 0)}{\leq} \ln(2)\left(1 - \sqrt{1 - \rho_\nu^2(\bm x)\rho_{n}^2(\bm x, \bm z)}\right) \\
&\stackrel{\rho_n\rho_n^2 \in [0, 1]}{\leq} \ln(2)\left(\rho_\nu^2(\bm x)\rho_{n}^2(\bm x, \bm z)\right)\\\
&\stackrel{\rho_n^2 \in [0, 1]}{\leq} \ln(2)\rho_\nu^2(\bm x) \\
&\leq \ln(2)\frac{\sigma_n^2(\bm x)}{\sigma_\nu^2}
\end{split}
\end{equation}
where the first inequality follows from \cref{lemma:monotonicity_ratio}.
\end{proof}

With \cref{lemma:delta_h_less_than_sigma} we can now also prove \cref{lemma:mutual_infos_decrease_with_sigma}.

\MIsDecreaseWithSigma*
\begin{proof}
We prove this lemma by showing that $\alpha^{\ourmethodexp}(\bm x) \leq \ln(2)\frac{\sigma_{s,n}^2(\bm x)}{\sigma_{s,\nu}^2}$ and that $\alpha^{\mes}(\bm x) \leq \frac{1}{2}\ln\left(1 + \sigma_{f,\nu}^{-2}\sigma_{f,n}(\bm x)\right)$. The first inequality is nothing else that the result of \cref{lemma:delta_h_less_than_sigma}. For the second inequality, we fist show that
\begin{equation}\label{eq:mi_max_value_bounded}
I\left(f^*; \bm y_{D_n}\right) \leq I\left(f; \bm y_{D_n}\right),
\end{equation}
where $\bm y_{D_n}$ are the observed values up to iteration $n$: $[\bm y_{D_n}]_i = y_i$. To prove \cref{eq:mi_max_value_bounded} we recall that, for any three random variables $A, B, C$, it holds that $I\left(A; (B, C)\right) = I\left(A; C\right) + I\left(A; B | C\right)$. From this identity, it follows that we can rewrite $I\left(\bm y_D; (f, f^*)\right)$ in two equivalent ways:
\begin{equation}\label{eq:equivalence_data_proc_inequality}
I\left(f^*; \bm y_D\right) + I\left(\bm y_D; f| f^*\right) = I\left(\bm y_D; (f, f^*)\right) = I\left(\bm y_D; f\right) + I\left(\bm y_D; f^* | f\right).
\end{equation}
Now, if we recall the definition of $f^*$, it is immediate to see that $I(\bm y_D; f^* | f) = 0$, since $f^*$ is completely determined by $f$. The result in \cref{eq:mi_max_value_bounded} then follows from the fact that the mutual information is always non negative.
If we now recall that $I_n\left(f; \{\bm x, y\}\right) = \frac{1}{2}\ln\left(1 + \sigma_{f,\nu}^{-2}\sigma_{f,n}^2(\bm x)\right)$, we can use \cref{eq:mi_max_value_bounded} to derive that $I_n\left(f^*; \{\bm x, y\}\right) \leq \frac{1}{2}\ln\left(1 + \sigma_{f,\nu}^{-2}\sigma_{f,n}(\bm x)\right)$, which concludes the proof once we recall that $\ln(1 + x) \leq x$ $\forall x > 0$ and that $\ln(2) < 1$. 
\end{proof}

\begin{lemma}\label{lemma:mean_is_bounded}
Let $f$ be a real valued function on $\mathcal{X}$ and let $\mu_n$ and $\sigma_n$ be the posterior mean and standard deviation of a GP($\mu_0, k$) such that it exists a non-decreasing sequence of positive numbers $\{\beta_n\}$ for which $P\left\{f(\bm x) \in [\mu_n(\bm x) \pm \beta_n\sigma_n(\bm x)]~\forall \bm x,~\forall n\right\} \geq 1 - \delta$. Moreover, assume that $\mu_0(\bm x) = 0$ for all $\bm x$ and that $k(\bm x, \bm x^\prime) \leq 1$ for all $\bm x, \bm x^\prime \in \mathcal{X}$. Then it follows that $|\mu_n(\bm x)| \leq 2\beta_n$ with probability of at least $1 - \delta$ jointly for all $\bm x$ and for all $n$.
\end{lemma}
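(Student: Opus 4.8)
The plan is a one-line triangle-inequality argument, bootstrapped from the $n=0$ confidence bound and the normalisation of the prior. The two ingredients I would set up first are: (i) $\sigma_0^2(\bm x) = k(\bm x,\bm x) \leq 1$ for every $\bm x$, which is immediate from the assumptions $\mu_0 \equiv 0$ and $k(\bm x,\bm x') \leq 1$; and (ii) $\sigma_n^2(\bm x) \leq \sigma_0^2(\bm x) \leq 1$ for all $n$ and all $\bm x$. Fact (ii) follows directly from the posterior-variance formula $\sigma_n^2(\bm x) = k(\bm x,\bm x) - \bm k(\bm x)^\top(\bm K + \sigma_\nu^2\bm I)^{-1}\bm k(\bm x)$ together with positive semidefiniteness of $(\bm K + \sigma_\nu^2\bm I)^{-1}$, so the subtracted term is non-negative; it is the standard "observations never increase the pointwise posterior variance'' property.

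Next I would condition on the event $E$ that $f(\bm x) \in [\mu_n(\bm x) - \beta_n\sigma_n(\bm x),\, \mu_n(\bm x) + \beta_n\sigma_n(\bm x)]$ jointly for all $\bm x$ and all $n$; by hypothesis $\mathbb{P}(E) \geq 1 - \delta$. Specialising the confidence inclusion to $n=0$ and using $\mu_0 \equiv 0$ and $\sigma_0(\bm x) \leq 1$ gives $|f(\bm x)| \leq \beta_0\sigma_0(\bm x) \leq \beta_0$ for every $\bm x$. For an arbitrary iteration $n$, the same event yields $|\mu_n(\bm x) - f(\bm x)| \leq \beta_n\sigma_n(\bm x) \leq \beta_n$ by (ii), and hence by the triangle inequality $|\mu_n(\bm x)| \leq |f(\bm x)| + \beta_n \leq \beta_0 + \beta_n$. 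Since $\{\beta_n\}$ is non-decreasing we have $\beta_0 \leq \beta_n$, so $|\mu_n(\bm x)| \leq 2\beta_n$, simultaneously for all $\bm x$ and all $n$ on $E$, which is exactly the claim.

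There is no real obstacle here: the only step worth stating carefully is the monotonicity $\sigma_n^2(\bm x) \leq \sigma_0^2(\bm x)$ (or, equivalently, just the bound $\sigma_n^2(\bm x) \leq k(\bm x,\bm x) \leq 1$ read straight off the posterior formula), and the observation that the normalised prior ($\mu_0 \equiv 0$, $k \leq 1$) already confines $f$ to the bounded range $[-\beta_0,\beta_0]$ at $n=0$. Everything else is bookkeeping with the triangle inequality and the monotonicity of $\{\beta_n\}$.
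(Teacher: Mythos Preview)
Your proof is correct and follows essentially the same approach as the paper: both use the $n=0$ confidence interval together with $\mu_0\equiv 0$ and $k\leq 1$ to bound $|f(\bm x)|\leq \beta_0$, then combine this with the general-$n$ confidence bound via the triangle inequality and the monotonicity of $\{\beta_n\}$ (and the non-increase of $\sigma_n$) to obtain $|\mu_n(\bm x)|\leq \beta_0+\beta_n\leq 2\beta_n$. The only cosmetic difference is that the paper first writes $|\mu_n(\bm x)|\leq \beta_0\sigma_0(\bm x)+\beta_n\sigma_n(\bm x)\leq 2\beta_n\sigma_0(\bm x)$ before invoking $\sigma_0\leq 1$, whereas you bound the $\sigma$'s by $1$ earlier.
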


\begin{proof}
From the hypothesis, it follows that the following two conditions hold for all $\bm x$ and for all $n$ with probability of at least $1 - \delta$:
\begin{equation}
|f(\bm x)| \in \left[0, \beta_0\sigma_0(\bm x)\right]
\end{equation}
\begin{equation}
\mu_n(\bm x) \in \left[-|f(\bm x)| - \beta_n\sigma_n(\bm x), |f(\bm x)| + \beta_n\sigma(\bm x) \right]
\end{equation}
From these two conditions, it follows that $|\mu_n(\bm x)| \leq \beta_0\sigma_0(\bm x) + \beta_n\sigma_n(\bm x)$ with probability of at least $1 - \delta$. Now, we recall that the sequence $\{\beta_n\}$ is non decreasing by assumption and that the sequence $\{\sigma_n(\bm x)\}$ is non increasing by the properties of a GP, which allows us to conclude that $|\mu_n(\bm x)| \leq 2\beta_n\sigma_0(\bm x)$, which concludes the proof once we recall the assumption that $k(\bm x, \bm x^\prime) \leq 1$ for all $\bm x, \bm x^\prime \in \mathcal{X}$. The result can easily be extended to the case of non zero prior mean, by just adding the prior mean as offset in the found upper bound for the posterior mean.
\end{proof}

\begin{lemma}\label{lemma:delta_h_is_bounded_below}
$\forall n$, let $\tilde{\bm x} \in \argmax_{S_n}\sigma_n^2(\bm x)$ and define $\tilde{\sigma}^2 \coloneqq \sigma_n^2(\tilde{\bm x})$, then it holds that:
\begin{equation}\label{eq:delta_h_lower_bound}
\Delta \hat{H}_n(\bm x_{n+1}, \bm z_{n+1}) \geq \eta(\tilde{\sigma}^2),
\end{equation}
with probability of at least $1 - \delta$ for all $\bm x, \bm z$ and for all $n$, and with $\eta$ given by
\begin{equation}\label{eq:eta_definition}
\eta(x) \coloneqq \ln(2)\exp\left\{-c_1\frac{4\beta^2}{x}\right\}\left[1 - \sqrt{\frac{\sigma_\nu^2}{2c_1x + \sigma_\nu^2}}\right].
\end{equation}
\end{lemma}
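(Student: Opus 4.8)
The plan is to lower-bound the maximal approximate information gain $\Delta\hat H_n(\bm x_{n+1},\bm z_{n+1})$ by evaluating $\Delta\hat H_n(\bm x,\bm z)$ at the single convenient candidate pair $\bm x=\bm z=\tilde{\bm x}$. This is legitimate because, under the $\alpha^{\ourmethodexp}$-driven selection, $\alpha^{\ourmethodexp}(\bm x)=\max_{\bm z\in\mathcal X}\Delta\hat H_n(\bm x,\bm z)$ and $\bm x_{n+1}\in\argmax_{\bm x\in S_n}\alpha^{\ourmethodexp}(\bm x)$, so $\Delta\hat H_n(\bm x_{n+1},\bm z_{n+1})=\max_{\bm x\in S_n}\max_{\bm z\in\mathcal X}\Delta\hat H_n(\bm x,\bm z)$; since $\tilde{\bm x}\in S_n\subseteq\mathcal X$ this is at least $\Delta\hat H_n(\tilde{\bm x},\tilde{\bm x})$. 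It then suffices to show $\Delta\hat H_n(\tilde{\bm x},\tilde{\bm x})\ge\eta(\tilde\sigma^2)$.

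First I would specialize the closed forms \cref{eq:approximated_psi_entropy} and \cref{eq:averaged_post_measurement_entropy} to $\bm x=\bm z=\tilde{\bm x}$, where $\rho_n(\tilde{\bm x},\tilde{\bm x})=1$ and $\sigma_n^2(\tilde{\bm x})=\tilde\sigma^2$. Using $1+c_2=2c_1$, the prefactor ratio collapses to $\sqrt{\sigma_\nu^2/(\sigma_\nu^2+2c_1\tilde\sigma^2)}$ and the exponent multiplier to $(\sigma_\nu^2+\tilde\sigma^2)/(\sigma_\nu^2+2c_1\tilde\sigma^2)$, so with $R^2\coloneqq\mu_n^2(\tilde{\bm x})/\tilde\sigma^2$,
\[
\Delta\hat H_n(\tilde{\bm x},\tilde{\bm x})=\ln(2)\left[e^{-c_1R^2}-\sqrt{\tfrac{\sigma_\nu^2}{\sigma_\nu^2+2c_1\tilde\sigma^2}}\,\exp\left\{-c_1R^2\tfrac{\sigma_\nu^2+\tilde\sigma^2}{\sigma_\nu^2+2c_1\tilde\sigma^2}\right\}\right].
\]
Pulling $e^{-c_1R^2}$ out of the bracket, the leftover exponent equals $-c_1R^2\,\tilde\sigma^2(1-2c_1)/(\sigma_\nu^2+2c_1\tilde\sigma^2)$, which is nonpositive since $2c_1=2/(\pi\ln 2)<1$; hence the leftover exponential is at most $1$ and
\[
\Delta\hat H_n(\tilde{\bm x},\tilde{\bm x})\ge\ln(2)\,e^{-c_1R^2}\left[1-\sqrt{\tfrac{\sigma_\nu^2}{\sigma_\nu^2+2c_1\tilde\sigma^2}}\right].
\]
Finally, \cref{lemma:mean_is_bounded} (applicable since $\mu_0\equiv 0$, $k\le 1$, and the $\beta_n$ coverage property holds) gives $|\mu_n(\tilde{\bm x})|\le 2\beta_n$ with probability at least $1-\delta$ jointly over all $\bm x$ and $n$, so $R^2\le 4\beta_n^2/\tilde\sigma^2$ and $e^{-c_1R^2}\ge e^{-4c_1\beta_n^2/\tilde\sigma^2}$; substituting gives exactly $\eta(\tilde\sigma^2)$ on this $1-\delta$ event.

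The main obstacle is the factoring step: the entire bound hinges on $1-2c_1>0$, i.e.\@ on the explicit value $c_1=1/(\pi\ln 2)$ — were this to fail, the leftover exponential could exceed $1$ and this clean estimate would break. The remaining care points are minor: checking admissibility of the candidate $\bm z=\tilde{\bm x}$ (immediate, since $\bm z$ ranges over all of $\mathcal X$), verifying $1+c_2=2c_1$, and reconciling the constant $4\beta^2$ in $\eta$ with the iteration-dependent $\beta_n$ of \cref{lemma:mean_is_bounded} (one takes $\beta$ to be any uniform upper bound on the finitely many $\beta_n$ relevant to \cref{thm:exploration_convergence,thm:combined_convergence}).
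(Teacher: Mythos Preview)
Your proposal is correct and follows essentially the same route as the paper's own proof: lower-bound the maximum by evaluating at the feasible pair $(\tilde{\bm x},\tilde{\bm x})$, specialize the closed forms with $\rho_n(\tilde{\bm x},\tilde{\bm x})=1$, use the sign of $c_2$ (equivalently, $2c_1<1$) to replace the second exponential by the first, and then invoke \cref{lemma:mean_is_bounded} to cap $R^2$ by $4\beta^2/\tilde\sigma^2$. The only difference is cosmetic---the paper keeps the shorthand $\rho_\nu^2(\tilde{\bm x})$ and $c_2$ while you substitute $1+c_2=2c_1$ and write everything in terms of $\sigma_\nu^2,\tilde\sigma^2$---but the inequality chain and the key numerical fact ($c_2<0\Leftrightarrow 2c_1<1$) are identical.
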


\begin{proof}
First we note tat this Lemma is only non-trivial in case the posterior mean is bounded on $S_n$, otherwise, if we admit $|\mu_n(\bm x)| \to \infty$, then we just recover the result that the average information gain is positive.

Now, moving to the proof, as first thing we recall that the exploration component of our algorithm always selects the $\argmax_{\bm x \in S_n}$ of $\hat{I}_n\left(\{\bm x, y\}; \Psi(\bm z)\right)$ as next  parameter to evaluate, meaning that, by construction, $\forall \bm x \in S_n$ and $\forall \bm z \in \mathcal{X}$, it holds that:
\begin{equation}
\Delta \hat{H}_n(\bm x_{n+1}, \bm z_{n+1}) \geq \Delta \hat{H}_n(\bm x, \bm z)
\end{equation}
This implies, in particular, that $\Delta \hat{H}_n(\bm x_{n+1}, \bm z_{n+1}) \geq \Delta \hat{H}_n(\tilde{\bm x}, \tilde{\bm x})$, since, by definition, $\tilde{\bm x} \in S_n$ and is, therefore, always feasible. Writing this condition explicitly, we obtain:
\begin{equation}
\begin{split}
\Delta \hat{H}_n(\bm x_{n+1}, \bm z_{n+1}) &\geq \Delta \hat{H}_n(\tilde{\bm x}, \tilde{\bm x}) \\
&= \ln(2)\left[\exp\left\{-c_1\frac{\mu_n^2(\tilde{\bm x})}{\tilde{\sigma}^2}\right\} - \sqrt{\frac{1 - \rho_\nu^2(\tilde{\bm x})}{1 + c_2\rho_\nu^2(\tilde{\bm x})}}\exp\left\{-c_1\frac{\mu_n^2(\tilde{\bm x})}{\tilde{\sigma}^2}\frac{1}{1 + c_2\rho_\nu^2(\tilde{\bm x})}\right\}\right] \\
&\geq \ln(2)\exp\left\{-c_1\frac{4\beta^2}{\tilde{\sigma}^2}\right\}\left[1 - \sqrt{\frac{1 - \rho_\nu^2(\tilde{\bm x})}{1 + c_2\rho_\nu^2(\tilde{\bm x})}}\right] \\
&= \eta(\tilde{\sigma}^2)
\end{split}
\end{equation}
where we have used the fact that $c_2 \in (-1, 0)$ and that $\rho_\nu^2(\tilde{\bm x}) \in [0, 1]$, in addition to \cref{lemma:mean_is_bounded} for the last inequality.
\end{proof}

\cref{lemma:delta_h_less_than_sigma,lemma:delta_h_is_bounded_below} show that $\alpha^{\ourmethodexp}$ satisfies the assumptions of \cref{lemma:generic_form_vanishing_variance}, so that we are now able to prove \cref{thm:exploration_convergence}.

\ThmExplorationConvergence*
\begin{proof}
From \cref{lemma:delta_h_less_than_sigma,lemma:delta_h_is_bounded_below} it follows that $\alpha^{\ourmethodexp}$ satisfies the conditions of \cref{lemma:generic_form_vanishing_variance} with $C_0 = \ln(2) / \sigma^2_\nu$ and with $g = b$, as given by \cref{eq:eta_definition}. This fact in turn implies that we can also apply \cref{lemma:convergence_bound_safe_set_generic} to it, which concludes the proof, ones we substitute the correct values to $C_0$ and $g$.
\end{proof}

\LemmaMesEquivalentAcquisition*
\begin{proof}
Approximating \cref{eq:mes_acquisition_expression} as a Monte Carlo average yields
\begin{equation}\label{eq:mes_as_montecarlo}
I_n\left(\{\bm x, y\}; f^*\right) \approx \sum_{i = 1}^m\left[\frac{\theta_{y^*_i}(\bm x)\psi(\theta_{y_i^*}(\bm x))}{2\Psi(\theta_{y_i^*}(\bm x))} - \ln\left(\Psi(\theta_{y_i^*}(\bm x))\right)\right],
\end{equation}
where $\left\{y_i^*\right\}_{i=i}^m$ are $m$ samples of $f_{S_n}^*$. If now we restrict the sum to a single sample, we get:
\begin{equation}\label{eq:mes_as_montecarlo_one_sample}
I_n\left(\{\bm x, y\}; f^*\right) \approx \frac{\theta_{y^*}(\bm x)\psi(\theta_{y^*}(\bm x))}{2\Psi(\theta_{y^*}(\bm x))} - \ln\left(\Psi(\theta_{y^*}(\bm x))\right).
\end{equation}
As \citet{wang_max-value_2017} show in their Lemma 3.1, maximizing \cref{eq:mes_as_montecarlo_one_sample} is equivalent to minimizing $\theta_{y^*}$, which, in turn, is equivalent to maximizing $1/\theta_{y^*}^2$. The proof is then complete once we recall that $\theta_{y^*}(x) = \sigma_n(\bm x)^{-1} \left(y^* - \mu_n(\bm x)\right)$.
\end{proof}

\ThmCombinedConvergence*
\begin{proof}
In order to prove the theorem, we just need to show that the aquisition function $\alpha_{\text{combined}} \coloneqq \max \left\{\alpha^{\ourmethodexp}(\bm x), \hat{\alpha}_\phi^{\mes}(\bm x)\right\}$ satisfies the assumptions of \cref{lemma:generic_form_vanishing_variance}. We can then apply a reasoning analogous to the proof of \cref{lemma:convergence_bound_safe_set_generic} to show that at most after $N_S N_\varepsilon$ iterations $S_n \supseteq S_\varepsilon(\bm x_0)$ and, therefore, it contains $\bm x_\varepsilon^*$. The second part of the theorem follows by recalling the result of \cref{lemma:safe_set_in_s0} and by noticing that the worse case scenario happens when a single new parameter is added to the safe set after exactly $N_\varepsilon$ iterations and the last parameter to be added is precisely $\bm x_\varepsilon^*$. In that case after adding $\bm x_\varepsilon^*$ to $S_n$, the safe set cannot expand any more, so that after further $N_\varepsilon$ iterations, the posterior variance will be smaller than $\varepsilon$ over $S_n$ and, therefore, also at $\bm x_\varepsilon^*$
Using \cref{lemma:mean_is_bounded}, it is straightforward to see that $\hat{\alpha}_\phi^{\mes}$ with high probability satisfies the assumptions of \cref{lemma:convergence_bound_safe_set_generic}, since $\hat{\alpha}_\phi^{\mes} \leq \sigma_n^2(\bm x)(\phi - 2\beta)^{-2}$, and $\hat{\alpha}_\phi^{\mes}(\bm x_{n+1}) \geq \sigma_n(\tilde{\bm x}) / \phi$, were $\tilde{\bm x} = \argmax_{\bm x \in S_n}\sigma_n(\bm x)$.

Combining these inequalities with the analogous results for $\alpha^{\ourmethodexp}$ from \cref{lemma:delta_h_less_than_sigma,lemma:delta_h_is_bounded_below}, we can conclude that the combined acquisition function satisfies the assumptions of \cref{lemma:generic_form_vanishing_variance} with
\begin{itemize}
\item $C_0 = \max\left\{\frac{\ln(2)}{\sigma_\nu^2}, \frac{1}{\phi - 2\beta}\right\}$,
\item $g(\tilde{\sigma}_n^2) = \min\left\{\eta(\tilde{\sigma}_n^2), \frac{\tilde{\sigma}_n^2}{\phi}\right\}$,
\end{itemize}
where $\eta$ is the function defined in \cref{eq:eta_definition}. The proof is then complete once we recall that the $\min$ of two monotonically increasing functions is also monotonically increasing.
\end{proof}

As already noticed in \cref{sec:ise_theory}, the convergence results of both \cref{thm:exploration_convergence,thm:combined_convergence} depends on the existence of $N_\varepsilon$ as given in \cref{eq:N_epsilon}. For constant $\beta$, the existence result is trivial, given the monotonicity of the involved functions. On the other hand, when $\beta$ grows with $n$, this is no longer obvious. However, for discrete domains and for the choice of $\beta_n$ given by \cref{eq:beta_def}, it is also possible for $N_\varepsilon$ to exist.
\cref{fig:eta_convergence} shows an example of the asymptotic behavior of the logarithm of $\eta\left(\varepsilon / \beta_n\right) n / C \gamma_n$. We see that, for big enough $n$, this logarithm exceeds the threshold of zero, marked by the orange dashed line, which is equivalent to the condition in \cref{eq:N_epsilon} being satisfied. 

\begin{figure}[t] 
  \centering
  \includegraphics[width=0.47\textwidth]{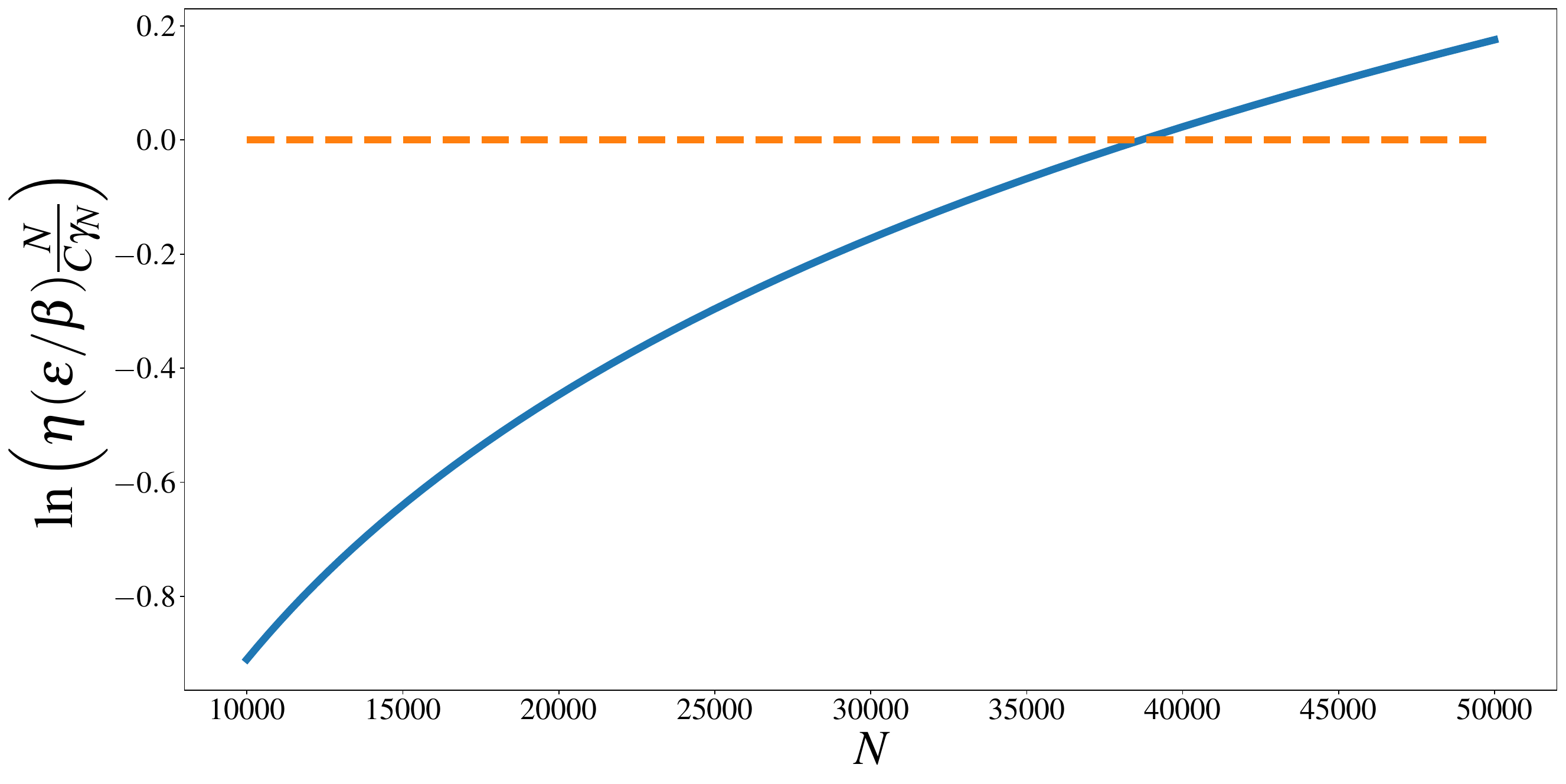}
  \caption{Asymptotic behavior of the logarithm of $\eta\left(\varepsilon / \beta_n\right) n / C \gamma_n$ (blue line), compared to $\ln(1)$ (orange dashed line). We see that for big enough $n$ the blue line crosses the orange one, which is equivalent to the condition in \cref{eq:N_epsilon} being satisfied.}
  \label{fig:eta_convergence}
\end{figure}

\section{Entropy of \texorpdfstring{$\cpsi(\bm{z})$}{\b} approximation}\label{appendix:entropy_approx}

In order to analytically compute the mutual information $\hat{I}_n\left(\{\bm x, y\}; \cpsi(\bm z)\right) = H_n\left[\cpsi(\bm z)\right] - \mathbb{E}_{y}\left[H_{n+1}\left[\cpsi(\bm z) \middle| \{\bm x, y\}\right]\right]$, we have approximated the entropy of the variable $\cpsi(\bm z)$ at iteration $n$ with $\hat{H}_n\left[\cpsi(\bm z)\right]$, given by \cref{eq:approximated_psi_entropy}, which we have then used to derive the results presented in the paper. The approximation allowed us to derive a closed expression for the average of the entropy at parameter $\bm z$ after an evaluation at $\bm x$, $\mathbb{E}_{y}\left[\hat{H}_{n+1}\left[\cpsi(\bm z) \middle| \{\bm x, y\}\right]\right]$. 
This approximation was obtained by noticing that the exact entropy \cref{eq:exact_psi_entropy} has a zero mean Gaussian shape, when plotted as function of $\mu_n(\bm x) / \sigma_n(\bm x)$, and then by expanding both the exact expression \cref{eq:exact_psi_entropy} and a  zero mean un-normalized Gaussian in $\mu_n(\bm x) / \sigma_n(\bm x)$ in their Taylor series around zero. At the second order we obtain, respectively,
\begin{equation}\label{eq:entropy_first_order}
H_n\left[\cpsi(\bm x)\right] = \ln(2) - \frac{1}{\pi}\left(\frac{\mu_n(\bm x)}{\sigma_n(\bm x)}\right)^2 + o\left(\left(\frac{\mu_n(\bm{x})}{\sigma_n(\bm{x})}\right)^2\right)
\end{equation}
and 
\begin{equation}\label{eq:gaussian_first_order}
c_0 \exp\left\{-\frac{1}{2\sigma^2}\left(\frac{\mu_n(\bm{x})}{\sigma_n(\bm{x})}\right)^2\right\} = c_0 - c_0\frac{1}{2\sigma^2}\left(\frac{\mu_n(\bm{x})}{\sigma_n(\bm{x})}\right)^2 + o\left(\left(\frac{\mu_n(\bm{x})}{\sigma_n(\bm{x})}\right)^2\right).
\end{equation}
By equating the terms in \cref{eq:entropy_first_order} with the ones in \cref{eq:gaussian_first_order}, we find ${c_0 = \ln(2)}$ and ${\sigma^2 = \ln(2)\pi/2}$, which leads to the approximation for $H_n\left[\cpsi(\bm z)\right]$ \cref{eq:approximated_psi_entropy} used in the paper: $H_n\left[\cpsi(\bm z)\right] \approx \ln(2) \exp\left\{-\frac{1}{\pi\ln(2)}\left(\frac{\mu_n(\bm z)}{\sigma_n(\bm z)}\right)^2\right\}$.
In \cref{fig:psi_entropy_comparison} we plot \cref{eq:exact_psi_entropy} and \cref{eq:approximated_psi_entropy} against each other as a function of the mean-standard deviation ratio, while \cref{fig:psi_entropies_diff} shows the difference between the two. From these two plots, one can see almost perfect agreement between the two functions, with a non negligible difference limited to two small neighborhoods of the $\mu/\sigma$ space. 

\begin{figure}[ht]
\hfill
     \centering
     \begin{subfigure}[b]{0.49\textwidth}
         \centering
         \includegraphics[height=0.15\textheight]{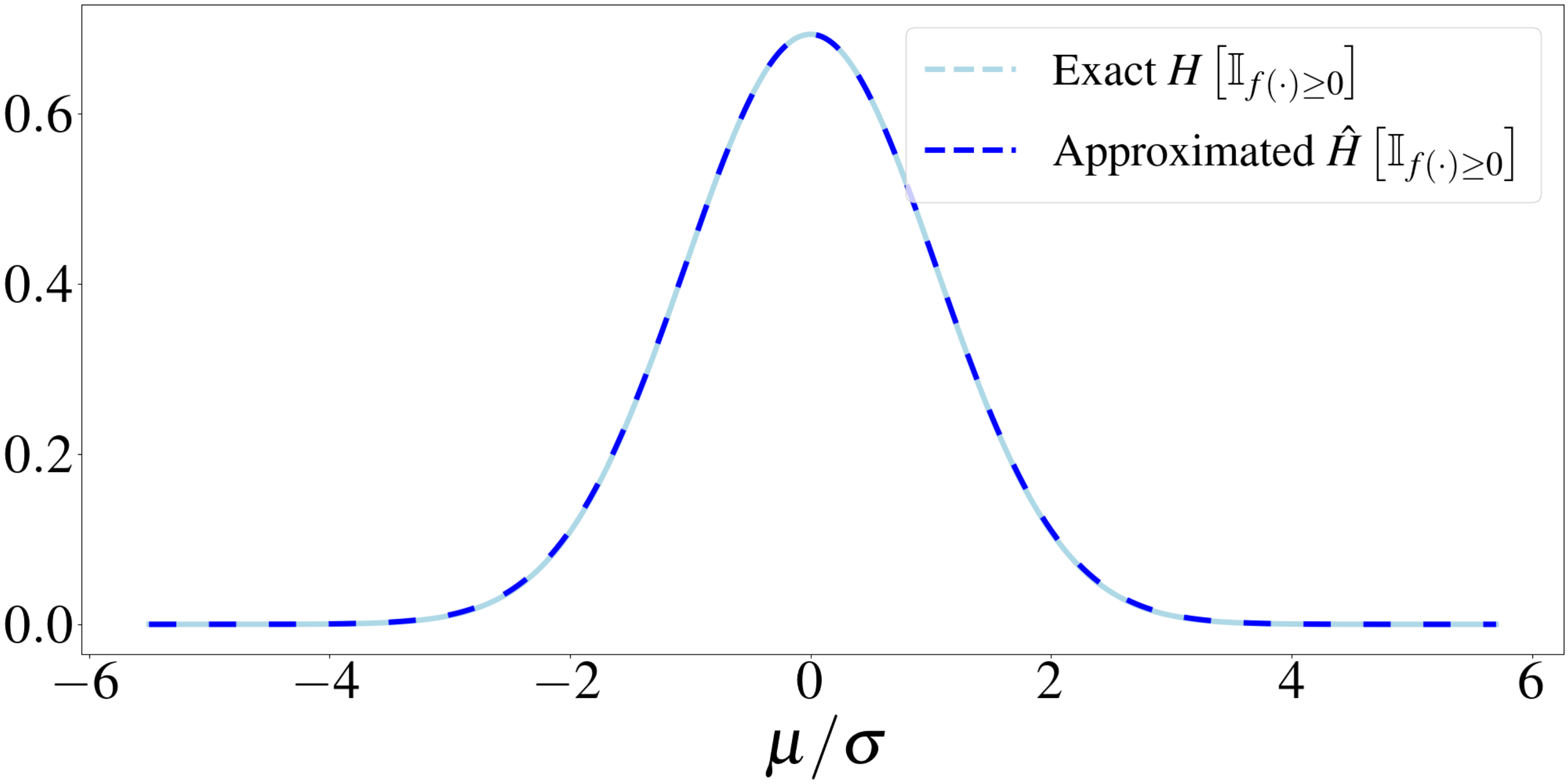}
         \caption{Exact and approximated $H[\cpsi(\bm z)]$.}
         \label{fig:psi_entropy_comparison}
     \end{subfigure}
     \hfill
     \begin{subfigure}[b]{0.49\textwidth}
         \centering
         \includegraphics[height=0.15\textheight]{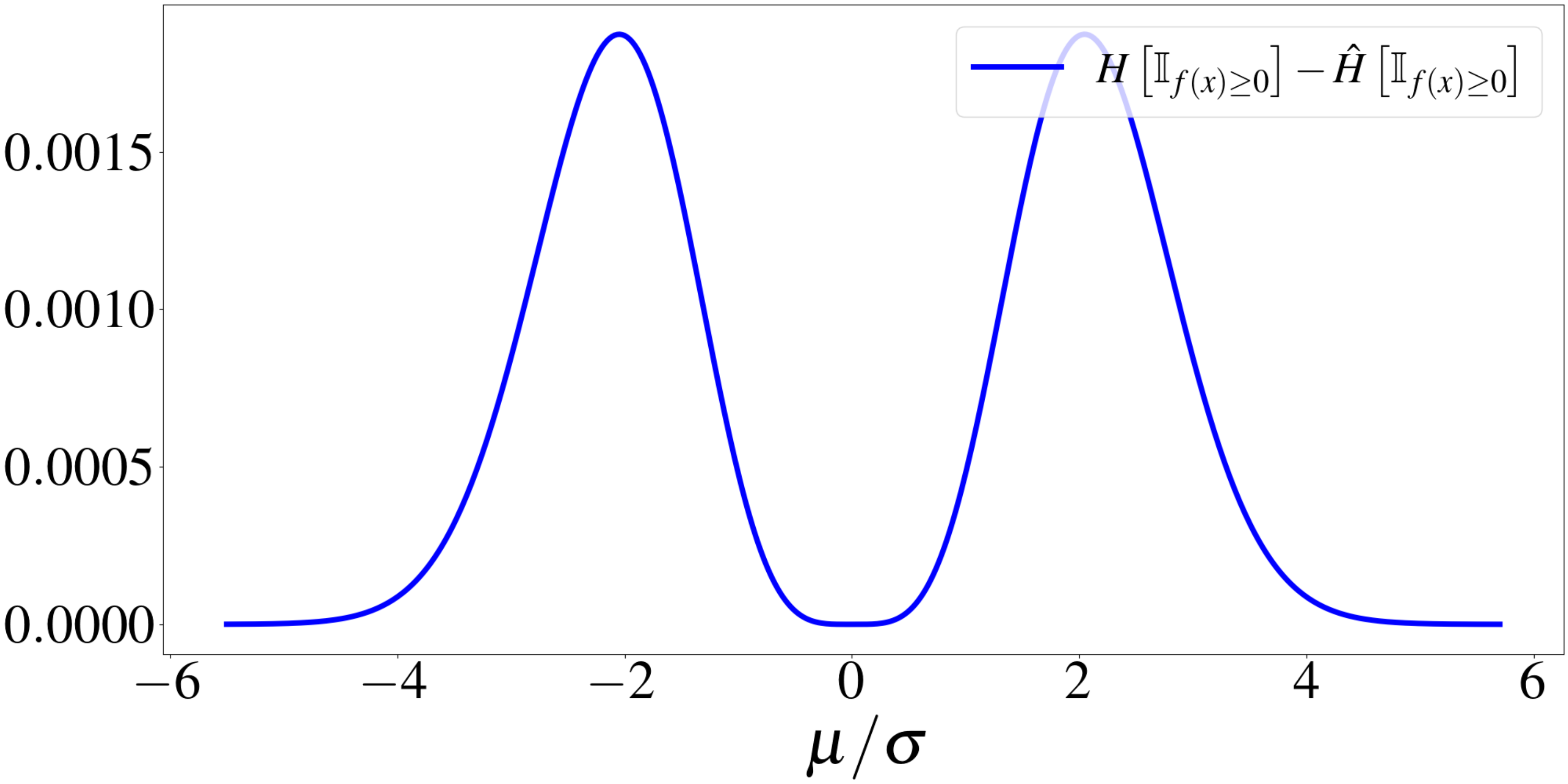}
         \caption{Approximation error.}
         \label{fig:psi_entropies_diff}
     \end{subfigure}
        \caption{Comparison between exact entropy of $\cpsi(\bm x)$ \cref{eq:exact_psi_entropy} and approximated form \cref{eq:approximated_psi_entropy}: (a) the two entropies plotted against each other; (b) the approximation error expressed as difference of the two.}
        \label{fig:psi_entropies}
        \hfill
\end{figure}

\section{Experiments Details}\label{appendix_experiments}

In this Appendix, we collect details about the experiment presented in \cref{sec:experiments}. Code for the used acquisition functions can be found at \url{https://github.com/boschresearch/information-theoretic-safe-exploration}. For the \mes component, we used the BoTorch \citep{botorch2020} implementation.

\paragraph{GP samples}
For the results shown in \cref{fig:gp_samples_exp} we run all methods on 50 samples from a GP defined on the square $[-1, 1] \times [-1, 1]$, with RBF kernel with the following hyperparameters: $\mu_0 \equiv 0$; kernel lengthscale = $0.3$; kernel outputscale = 30; $\sigma_\nu^2 = 0.05$, while the safe seed $\bm x_0$ was chosen as the origin: $\bm x_0 = (0, 0)$. As \safeopt requires a discretized domain, we used the same uniform discretization of 150 points per dimension for all GP samples.
Concerning the safety violations summarized in \cref{table:safety_violations}, the fact that they are comparable is expected, since in our experiments they all use the posterior GP confidence intervals to define the safe set. 

\paragraph{Synthetic one-dimensional objective}
The objective/constraint function we used in this experiment, which we plot in \cref{fig:one_d_exp_objective}, is given by $f(\bm x) = e^{-\bm x} + 15 e^{-(\bm x  - 4)^2} + 3e^{-(\bm x - 7)^2} + 18e^{-(\bm x - 10)^2} + 0.41$. The GP was defined on the domain $\mathcal{X} = [-2.4, 10.5]$, with RBF kernel with the following hyperparameters: $\mu_0 \equiv 0$; kernel lengthscale = $0.6$; kernel outputscale = $50$; $\sigma_\nu^2 = 0.05$, while the safe seed $\bm x_0$ was chosen as the origin: $\bm x_0 = (0, 0)$.

\paragraph{Heteroskedastic noise domains}
In these experiments we used the same \linebo wrapper as in the five dimensional experiment. The GP had a RBF kernel with hyperparameters: $\mu_0 \equiv 0$; kernel lengthscale = $1.6$; kernel outputscale = 1; while the safe seed $\bm x_0$ was set to the origin. For what concerns the observation noise, as explained in \cref{par:high_heteroskedastic}, we used heteroskedastic noise, with two different values of the noise variance in the two symmetric halves of the domain. In particular, given a parameter $\bm x = (x_1, x_2, \dots, x_n)$, we set the noise variance to $\sigma_\nu^2 = 0.05$ if $x_0 \geq 0$, otherwise we set it to $\sigma_\nu^2 = 0.5$. As explained in \cref{sec:experiments}, the constraint function is $f(\bm x) = \frac{1}{2}e^{-\bm x^2} + e^{-(\bm x \pm \bm x_1)^2} + 3e^{-(\bm x \pm \bm x_2)^2} + 0.2$, with $\bm x_1$ and $\bm x_2$ given by: $\bm x_1 = (2.7, 0, \dots, 0)$ and $\bm x_2 = (6, 0, \dots, 0)$.

\paragraph{OpenAI Gym Control}
For the OpenAI Gym pendulum experiments, we used the environment provided by the OpenAI gym \citep{brockman2016openai} under the MIT license. For the safety condition, the threshold angular velocity $\dot{\theta}_M$ was set to $0.5$ rad$/s$, with an episode length of 400 steps, and \cref{fig:pendulum_experiment} shows one run of \ourmethod using a GP with RBF kernel with the following hyperparameters: $\mu_0 \equiv 0$; kernel lengthscale = $1.3$; kernel outputscale = 6.6; $\sigma_\nu^2 = 0.04$.

\paragraph{Rotary inverted pendulum controller}
For the Furuta pendulum experiments, we used the simulator developed by \citet{quanser_sim} under the MIT license. For both the objective and the safety constraint, we used GPs prior with RBF kernel with the following hyperparameters: $\mu_0 \equiv 0$; kernel lengthscale = $0.3$; kernel outputscale = 50; $\sigma_\nu^2 = 0.05$.

\end{document}